\documentclass{article}

\PassOptionsToPackage{numbers}{natbib}   

 \usepackage[preprint]{neurips_2026}

\usepackage[utf8]{inputenc} 
\usepackage[T1]{fontenc}    
\usepackage{hyperref}       
\usepackage{url}            
\usepackage{booktabs}       
\usepackage{amsfonts}       
\usepackage{nicefrac}       
\usepackage{microtype}      
\usepackage{xcolor}         
\usepackage{amsthm}
\usepackage{amsmath}
\usepackage{tikz}
\usepackage{booktabs}  
\usepackage{amssymb}

\theoremstyle{plain}
\newtheorem{theorem}{Proposition}
\newtheorem{corollary}[theorem]{Corollary}

\theoremstyle{definition}
\newtheorem{definition}{Definition}

\theoremstyle{remark}

 \title{Beyond ECE: Calibrated Size Ratio, Risk Assessment, and Confidence-Weighted Metrics}

\author{ 
Fernando Martin-Maroto \\ Mathematics of Behavior and Intelligence Lab.  \\ Champalimaud Research. Avenida de Brasília, Lisbon  \\  \texttt{fmmaroto@gmail.com} \\ 
\And
Nabil Abderrahaman-Elena  \\ Algebraic AI \\ Calle Castelló, 117, Madrid \\  \texttt{nabil.abderrahaman@gmail.com} \\ 
\And
Gonzalo G. de Polavieja \\ Mathematics of Behavior and Intelligence Lab. \\ Champalimaud Research. Avenida de Brasília, Lisbon  \\  \texttt{gonzalo.depolavieja@gmail.com} \\ 
}

\begin{document}

\maketitle

\begin{abstract}
Confidence calibration has been dominated by the Expected Calibration Error (ECE), a linear metric that counts calibration offset equally regardless of the confidence level at which it occurs. We show that ECE can remain small even under arbitrarily large overconfidence risk, so we propose Calibrated Size Ratio (CSR) instead, an interpretable metric that equals 1 under perfect calibration, from which we derive the risk probability $P_{\mathrm{risk}}$ that quantifies the statistical evidence for overconfidence. We further argue that overconfidence risk assessment must be complemented by a measure of discriminative value: whether the assigned confidences actively distinguish correct from incorrect predictions. We show that confidence-weighted accuracy $\mathrm{cwA}$ is the natural such complement, and that confidence-weighting extends to all standard classification metrics. In particular, we prove that the confidence-weighted AUC (cwAUC) captures the information about calibration while the classical AUC cannot. We validate the proposed indicators on several synthetic confidence distributions under multiple controlled calibration profiles and find that CSR separates risky from non-risky assignments. We also test the metrics on fifteen real datasets, with and without post-hoc calibration, and find that standard methods can yield risky confidence profiles.
\end{abstract}

\section{Introduction}

Reliable confidence estimation is fundamental to trustworthy machine learning. As ML systems are deployed in high-stakes domains, detecting and penalizing incorrect confidence assignments, and particularly overconfidence, becomes a safety requirement. The Expected Calibration Error (ECE)~\cite{guo2017calibration} has become the standard for evaluating confidence profiles, yet it has attracted substantial criticism~\cite{chidambaram2024flawed} — and we argue the problem runs deeper than previously recognized.

A well-calibrated model predicting confidence $c$ for an event should be correct with probability approximately $c$. ECE operationalizes this by measuring the expected absolute gap between confidence and accuracy over $M$ bins:
$$\text{ECE} = \sum_{m=1}^{M} \frac{|B_m|}{N} \left| \text{acc}(B_m) - \text{conf}(B_m) \right|$$
ECE does not merely suffer from well-known binning artifacts — it fails to extract the calibration information that actually matters. ECE is a linear metric: it treats a gap $\delta$ identically regardless of the confidence level at which it occurs. Yet a miscalibration of $\delta = 0.05$ at $\text{conf} = 0.55$ and at $\text{conf} = 0.95$ are not remotely equivalent failures. Since bin populations are largest at low confidence, ECE is dominated by the regime where miscalibration is least dangerous, and systematically blind to the tail where it is most consequential. We argue that calibration contributions must be measured in their natural units: a gap at confidence $c$ should be assessed relative to $(1-c)$, reflecting the true magnitude of the implied failure. 

Recent work has focused either on improving ECE by addressing binning artifacts (ECCE-MAD and ECCE-R~\cite{arrietaibarra2022metrics}, LS-ECE~\cite{chidambaram2024flawed}, AD-ECE)~\cite{vaicenavicius2019evaluating} or on introducing altogether new calibration indicators. Some indicators are closer to ECE in essence, like T-Cal~\cite{lee2023tcal}, a debiased squared version of ECE, or ECI~\cite{famiglini2023rigorous}, which uses normalised $L_2$ distances. Others differ more substantially: TCE~\cite{matsubara2023tce} uses a Binomial test to measure the percentage of predictions that are provably miscalibrated; CalDist~\cite{qiao2024distance} is the $L_1$ distance between the confidence scores and their isotonically calibrated values; CWSA and CWSA+~\cite{shahnazari2025cwsa} are two evaluation metrics that reward correct predictions in proportion to their confidence while penalising risky overconfident mistakes. ECD~\cite{sumler2025entropic}  computes the average difference between the model's negative predictive entropy and the log-probability assigned to the observed outcome. This indicator incorporates the $(1-c)$ scaling factor that ECE neglects via the odds ratio; however, in ECD this factor appears inside a logarithm, yielding an entropy-like quantity rather than directly scaling $(1 - \mathbf{1}[\hat{y} = y])$, for which it is the natural denominator. 

By separating two different aspects of calibration, we show that the simplest indicators are the most adequate. This work makes the following contributions:

- Calibrated Size Ratio (CSR): We introduce an interpretable calibration metric that equals 1 for perfectly calibrated assignments and gives a risk probability $P_{\mathrm{risk}}$ of overconfidence that becomes more accurate as $N$ grows. Furthermore, this probability can be obtained from a normal distribution, without the need of Monte Carlo resampling procedures required by frameworks like ECD or T-Cal.

- Confidence-Weighted Metrics and cwAUC. We argue that confidence-weighted accuracy $\mathrm{cwA}$ is the natural complement to CSR, capturing the practical value of the confidence profile. We show that the confidence-weighting approach extends naturally to all standard classification metrics. We further show that AUC is invariant to any monotone rescaling of confidence scores and therefore conveys no information about calibration. We prove that cwAUC admits a probabilistic pairwise interpretation and that $\text{cwAUC} - \text{AUC}$ captures the discriminative value added by calibration.

- Theoretical Analysis and Experimental Validation:  We provide theoretical foundations for the proposed indicators and validate them on several synthetic distributions under multiple controlled calibration profiles and dataset sizes. We also test them on real datasets with and without post-hoc calibration. Experiments demonstrate that CSR achieves near-perfect sensitivity and specificity in detecting overconfidence.

\section{Methodology}

\begin{definition}[Multi-Class Classification with Confidence Assignment] \label{def:conf_assignment_setup}
Given a dataset $\mathcal{D} = \{(x_i, y_i)\}_{i=1}^N$ with labels $y_i \in \{1, \ldots, K\}$, $K \geq 2$, each sample is assigned a predicted class $\hat{y}_i$ and a confidence score $\mathrm{conf}_i \in [0,1]$, also written $\mathrm{conf}_i^{(\hat{y}_i)}$, independently of all other samples. Per-class scores $\mathrm{conf}_i^{(k)} \in [0,1]$ may additionally be provided for each $k$.
\end{definition}

The independence condition is satisfied by standard confidence assignment methods, including post-hoc calibration methods fitted on a separate held-out set. It would not be satisfied if calibration is performed directly on $\mathcal{D}$ or if a transductive method is used to assign confidences.

\subsection{Calibrated Size Ratio}

\begin{definition}[Calibrated Size Ratio]\label{def:cal_dif}
Let $\mathcal{D} = \{(x_i, y_i)\}_{i=1}^N$ and $\{\hat{y}_i, \text{conf}_i\}_{i=1}^N$ be as in Definition \ref{def:conf_assignment_setup}. Assume  $\text{conf}_i < 1$ for all $i$, then: \[
\text{CSR} = \mathbb{E}_{\mathcal{D}}\left[\frac{1 - \mathbf{1}[\hat{y}_i = y_i]}{1 - \text{conf}_i}\right] = \frac{1}{N}\sum_{i=1}^N \frac{1 - \mathbf{1}[\hat{y}_i = y_i]}{1 - \text{conf}_i} = \frac{1}{N}\sum_{i:\hat{y}_i \neq y_i} \frac{1}{1 - \text{conf}_i}.
\]
\end{definition}
\smallskip

We use various notions of calibration, but for now consider the simplest frequentist notion, one that does not depend on a binning scheme:
\[
    \mathbb{E}_{\mathcal{D}}\big[\mathbf{1}[\hat{y}_i = y_i] \mid \mathrm{conf}_i = c\big] = c, 
    \quad \forall\, c \in [0,1].
\]
This condition can be seen as the limiting case of ECE-based calibration where each confidence value defines its own bin; 
we refer to this as \emph{perfect pointwise calibration}. Since

\begin{equation}
\text{CSR} = \mathbb{E}_{\mathcal{D}}\left[\frac{1 - \mathbf{1}[\hat{y}_i = y_i]}{1 - \text{conf}_i}\right] = \sum_c \left[p_{\mathcal{D}}(c) \frac{1 -  \mathbb{E}_{\mathcal{D}}[  \mathbf{1}[\hat{y}_i = y_i]  \mid \mathrm{conf}_i = c ]}{1 - c}\right] 
 \label{eq:CSR_conditional}
\end{equation}
if  $\mathcal{D}$ is perfectly pointwise calibrated: 
\[
\text{CSR} = \sum_c \left[p_{\mathcal{D}}(c) \frac{1 - c}{1 - c}\right] = \sum_c p_{\mathcal{D}}(c) = 1
\]

Suppose there exists an incorrect, high-confidence prediction for which $N(1 - \mathrm{conf}_i) < 1$; this sample alone pushes $\mathrm{CSR}$ beyond 1. More generally, by Proposition~\ref{proposition:jensen_CSR}, if the average confidence assigned to incorrect predictions $\mathbb{E}_{\mathcal{D}}[\mathrm{conf}_i \mid \hat{y}_i \neq y_i]$ exceeds the empirical accuracy $\bar{p}$ then $\mathrm{CSR} > 1$.  Proposition~\ref{proposition:jensen_CSR} can be proven from Jensen's inequality and states: \[
    \mathrm{CSR} \geq \frac{1 - \bar{p}}{1 - \mathbb{E}_{\mathcal{D}}[\mathrm{conf}_i \mid \hat{y}_i \neq y_i]}.
\]
Therefore, any confidence profile with $\mathrm{CSR} = 1$ must satisfy $\mathbb{E}_{\mathcal{D}}[\mathrm{conf}_i \mid \hat{y}_i \neq y_i] \leq \bar{p}$. More importantly, confidence profiles that are overconfident on incorrect predictions and therefore risky yield $\mathrm{CSR} > 1$. Similarly, underconfident profiles on incorrect predictions give $\mathrm{CSR} < 1$.

Since CSR = 1 is attained by pointwise calibrated confidence profiles, we can interpret the Calibrated Size Ratio as follows. Write  $\text{CSR} = \frac{1}{N}\sum_{i=1}^N \frac{1 - \mathbf{1}[\hat{y}_i = y_i]}{1 - \text{conf}_i} $ as $ 
1 = \frac{1}{\text{CSR} \cdot N}\sum_{i=1}^N \frac{1 - \mathbf{1}[\hat{y}_i = y_i]}{1 - \text{conf}_i}$. It follows that CSR measures how many times larger a calibrated dataset would need to be to observe the incorrect predictions at their assigned confidence levels by chance. It is the ratio between the size of a calibrated dataset and the actual size $N$; a CSR of 10 indicates that the errors at the confidence values we observe would be expected from a dataset 10 times larger.  
\bigskip

In order to quantify risk, we adopt a Bayesian notion of calibration. To this end, we assume that confidence scores are assigned to each sample from a limit distribution. Consider the following definition:

\begin{definition}[Perfect Limitwise Calibration]\label{def:limit_calibration}
We say the confidences are perfectly limitwise calibrated if there exists a probability measure $P_{\text{limit}}$ such that the observations $\{(\hat{y}_i, y_i, \text{conf}_i)\}_{i=1}^N$ are i.i.d. draws from a limit distribution such that $P_{\text{limit}}(\hat{y}_i = y_i \mid \text{conf}_i = c) = c \quad \forall c \in [0,1]$
\end{definition}
\smallskip
Now calibration becomes: \begin{equation}
    \mathbb{E}_{limit}\big[\mathbf{1}[\hat{y}_i = y_i] \mid \mathrm{conf}_i = c\big] = c, 
    \quad \forall\, c \in [0,1].
    \label{eq:pointwise_calibration}
\end{equation}

Proposition \ref{theorem:PerfectCal_is_CSR_one_limitwise} in the appendix gives a formal proof, but, schematically, by approaching \[
\mathbb{E}_{\mathcal{D}}[  \mathbf{1}[\hat{y}_i = y_i]  \mid \mathrm{conf}_i = c ]   \approx \mathbb{E}_{limit}[  \mathbf{1}[\hat{y}_i = y_i]  \mid \mathrm{conf}_i = c ] 
\] and substituting into the expression \ref{eq:CSR_conditional} : \[
\text{CSR} \approx  \sum_c \left[p_{\mathcal{D}}(c) \frac{1 -  \mathbb{E}_{limit}[  \mathbf{1}[\hat{y}_i = y_i]  \mid \mathrm{conf}_i = c ]}{1 - c}\right] = \sum_c p_{\mathcal{D}}(c) = 1
\]
Therefore, for confidence profiles that are limitwise calibrated we expect $\mathrm{CSR} \approx 1$, but not exactly. We should then characterize its variability under perfect limitwise calibration via its standard deviation (see Proposition~ \ref{theorem:std_csr_under_perfect_limitwise} ): \[ 
\sigma_{\text{CSR}} = \sqrt{\frac{1}{N}\mathbb{E}_{limit} \left[\frac{\text{conf}_i}{1-\text{conf}_i}\right]} \approx   \sqrt{\frac{1}{N^2}\sum_{i=1}^N \frac{\text{conf}_i}{1-\text{conf}_i}}  
\]
where we also have approached $\mathbb{E}_{limit}[\cdot ] \approx \mathbb{E}_{\mathcal{D}}[\cdot  ]$. While $\mathrm{CSR}$ depends only on the confidence scores of incorrect predictions, its standard deviation depends on the odds $\frac{\mathrm{conf}_i}{1 - \mathrm{conf}_i}$ of all predictions, correct or incorrect. Hence, for $\sigma_{\text{CSR}} $ to be well-defined, we require that $\mathrm{conf}_i < 1$ for all $i$.

We can then use the upper tail of the normal distribution to assess the probability of observing data at least as extreme assuming the null hypothesis of limitwise calibration is true. For $\mathrm{CSR} > 1$, i.e. above the expected mean, the number of standard deviations  $z = (\text{CSR} - 1)/\sigma_{\text{CSR}}$, the $z$ score, provides a principled measure of the risk associated with the confidence profile. The associated risk probability is $P_{\mathrm{risk}} = \Phi(z)$ where $\Phi$ denotes the standard normal cumulative distribution function.

\subsection{Confidence Weighted Accuracy}

Consider the following straightforward metric:
\smallskip

\begin{definition}[Confidence Weighted Accuracy]\label{def:cwA}
The \emph{Confidence Weighted Accuracy} is the fraction of total confidence 
mass assigned to correct predictions:
\[
    \mathrm{cwA} = \frac{\mathbb{E}_{\mathcal{D}}[\mathrm{conf}_i \cdot \mathbf{1}[\hat{y}_i = y_i]]}{\mathbb{E}_{\mathcal{D}}[\mathrm{conf}_i]}= \frac{\sum_{i=1}^N \mathrm{conf}_i \cdot \mathbf{1}[\hat{y}_i = y_i]}{\sum_{i=1}^N \mathrm{conf}_i}.
\]
\end{definition}
\smallskip

A simple rearrangement (Proposition~\ref{prop:cwA_covariance}) shows that cwA is an affine function of the covariance between correctness and confidence: \[
    \mathrm{cwA} = \bar{p} + \frac{\mathrm{Cov}_{\mathcal{D}}(\mathrm{conf}_i,\, \mathbf{1}[\hat{y}_i = y_i])}{\bar{c}}
\]
where $\bar{c} = \frac{1}{N} \sum_{i=1}^N \text{conf}_i$ is the average confidence.

An oracle that assigns confidence 1 to correct predictions and 0 to incorrect ones attains $\mathrm{cwA} = 1$. At the other extreme, if confidences are uniform across all predictions, and therefore, uninformative, we obtain \[
\mathrm{cwA} = \bar{p} 
\]
Therefore we can see $\mathrm{cwA}$ as the simplest measure of the accuracy obtained by using the confidence profile to weight the correctness of the predictions. A measure that complements Acc and CSR.
\smallskip

Furthermore, this simple approach of weighting predictions by their confidence can be extended to all other standard metrics used to evaluate models:
\smallskip

\begin{definition}[Confidence-Weighted Confusion Matrix] \label{definition:cw_CM}
For each class $k \in \{1, \ldots, K\}$ define: 
\[
\text{cwTP}^{(k)} = \sum_{i=1}^N \text{conf}_i \cdot \mathbf{1}[\hat{y}_i = y_i] \cdot \mathbf{1}[y_i = k] \,\,\,\,\,\,\,\,\,\,\,\,  \text{cwFN}^{(k)} = \sum_{i=1}^N \text{conf}_i \cdot \mathbf{1}[\hat{y}_i \neq y_i] \cdot \mathbf{1}[y_i = k] 
\]\[
\text{cwFP}^{(k)} = \sum_{i=1}^N \text{conf}_i \cdot \mathbf{1}[\hat{y}_i \neq y_i] \cdot \mathbf{1}[\hat{y}_i = k] \,\,\,\,\,\,\,\,\,\,\,
\text{cwTN}^{(k)} = \sum_{i=1}^N \text{conf}_i \cdot \mathbf{1}[\hat{y}_i \neq k] \cdot \mathbf{1}[y_i \neq k] \]
And let: \[
\text{cwP}^{(k)} = \sum_{i=1}^N \text{conf}_i \cdot \mathbf{1}[y_i = k]   \,\,\,\,\,\,\,\,\,\,\,\, \text{cwN}^{(k)} = \sum_{i=1}^N \text{conf}_i \cdot \mathbf{1}[y_i \neq k]
\]
\end{definition}

According to Proposition~\ref{proposition:cw_basic_relations}, the confidence-weighted counterparts of $\text{TP}^{(k)}$, $\text{FP}^{(k)}$, $\text{FN}^{(k)}$, and $\text{TN}^{(k)}$ satisfy the same four structural relations as their classical counterparts:  \[
\text{cwP}^{(k)} = \text{cwTP}^{(k)} + \text{cwFN}^{(k)} \quad \quad \text{cwN}^{(k)} = \text{cwFP}^{(k)} + \text{cwTN}^{(k)}
\]   \[
cwP^{(k)} + cwN^{(k)} = N\bar{c}    \quad \quad   \text{cwTP}^{(k)} + \text{cwFP}^{(k)} + \text{cwFN}^{(k)} + \text{cwTN}^{(k)} = N\bar c \quad \forall k.
\] 

This means that any algebraic identity that holds purely as a consequence of these structural relations carries over directly to the confidence-weighted setting. In particular, all standard classification metrics that are defined as combinations of TP, FP, FN, and TN, such as precision, recall (sensitivity), specificity, F1-score, or the Matthews correlation coefficient, admit confidence-weighted versions.  

An example is cwA itself, the confidence-weighted counterpart of the overall accuracy, which satisfies relations analogous to those of accuracy. Indeed, Propostitions \ref{cwa_from_standard_metrics} and \ref{theorem:cwA_macro} show that:
\[
\text{cwA} = \frac{\sum_{k=1}^K \text{cwTP}^{(k)}}{\sum_{k=1}^K \text{cwP}^{(k)}} \quad \quad and \quad \quad \sum_{k=1}^K \text{cwAcc}^{(k)} = (K-2) + 2\,\text{cwA}
\]

Another example is the classic result~\cite{mohri2018auc} that the area under the ROC~\cite{green1966signal} curve equals the probability that a randomly chosen positive prediction ranks higher than a randomly chosen negative prediction: \[
 \text{AUC}^{(k)} = \mathbb{E}_{x^+ \sim \text{Unif}, x^- \sim \text{Unif}} \left[ \mathbf{1}[\text{conf}_i^{(k)} > \text{conf}_j^{(k)}] + \frac{1}{2}\mathbf{1}[\text{conf}_i^{(k)} = \text{conf}_j^{(k)}] \right]
\]
and it also holds for the confidence-weighted counterpart (see Theroem \ref{thorem:AUC}):
\[
 \text{cwAUC}^{(k)} = \mathbb{E}_{x^+ \sim \mathcal{D}^+_w, x^- \sim \mathcal{D}^-_w} \left[ \mathbf{1}[\text{conf}_i^{(k)} > \text{conf}_j^{(k)}] + \frac{1}{2}\mathbf{1}[\text{conf}_i^{(k)} = \text{conf}_j^{(k)}] \right]
\] 
Both results are analogous, but the sampling differs: in the classic AUC, samples are drawn uniformly, whereas in its confidence-weighted counterpart they are drawn with probability proportional to the confidence assigned to the predicted class.
 
Furthermore, Proposition \ref{thorem:AUC_diffs} establishes the follwing result: define for each pair $(i,j)$ with $i \in \mathcal{D}^+_k$, $j \in \mathcal{D}^-_k$  (i.e, $y_i = k$ and $y_j \not= k$). Let $z_{ij} =\mathbf{1}[\text{conf}_i^{(k)} > \text{conf}_j^{(k)}] + \frac{1}{2}\mathbf{1}[\text{conf}_i^{(k)} = \text{conf}_j^{(k)}] \quad \text{and}$ and $w_{ij} = \text{conf}_i \cdot \text{conf}_j$. Then: \[
\text{cwAUC}^{(k)} - \text{AUC}^{(k)} = \frac{\text{Cov}(w_{ij}, z_{ij})}{\mathbb{E}[w_{ij}]}
\]
Hence, the gap cwAUC $-$ AUC is positive when the model is more confident on pairs it ranks correctly than on pairs it ranks incorrectly.   The metric $\text{cwAUC}^{(k)}$ measures the correlation between the certainty the model assigns to its predicted classes and the ranking performance for class $k$: a pair of positive sample $i$ and negative sample $j$ contributes to cwAUC proportionally to $\text{conf}_i^{(\hat{y}_i)} \cdot \text{conf}_j^{(\hat{y}_j)}$, the product of the confidence the model assigns to each sample's predicted class, so that comparisons where both predictions are made with high certainty carry greater weight.   Notice that the confidences compared correspond to class $k$ while the weight $ \text{conf}_i^{(\hat{y}_i)} \cdot \text{conf}_j^{(\hat{y}_j)} $ involves the confidences assigned to the predicted classes $\hat{y}$ that may or may not be equal to $k$.

Let $\phi : \mathbb{R} \to \mathbb{R}$ be any strictly monotone increasing function. Then (see Proposition \ref{cor:auc_invariance}):
\[
\text{AUC}^{(k)}\!\left(\phi \circ \text{conf}^{(k)}\right) = \text{AUC}^{(k)}\!\left(\text{conf}^{(k)}\right)
\]
Since any strictly monotone post-hoc calibration method leaves the ranking of $\text{conf}^{(k)}$ unchanged, $\text{AUC}^{(k)}$ is invariant to calibration.  By contrast, $\text{cwAUC}^{(k)}$ is sensitive to the magnitudes of $\text{conf}^{(\hat{y})}$ through the pairwise weights $w_{ij}$. Therefore $\text{cwAUC}^{(k)} - \text{AUC}^{(k)}$ measures whatever discriminative value is gained from calibration. In practice, deviations in $\text{AUC}^{(k)}$ may arise under weakly monotone calibrators (e.g., isotonic) or per-sample cross-class renormalization.

\begin{figure}[htbp]
    \centering
    \begin{minipage}{0.49\textwidth}
        \centering
        \includegraphics[width=\textwidth]{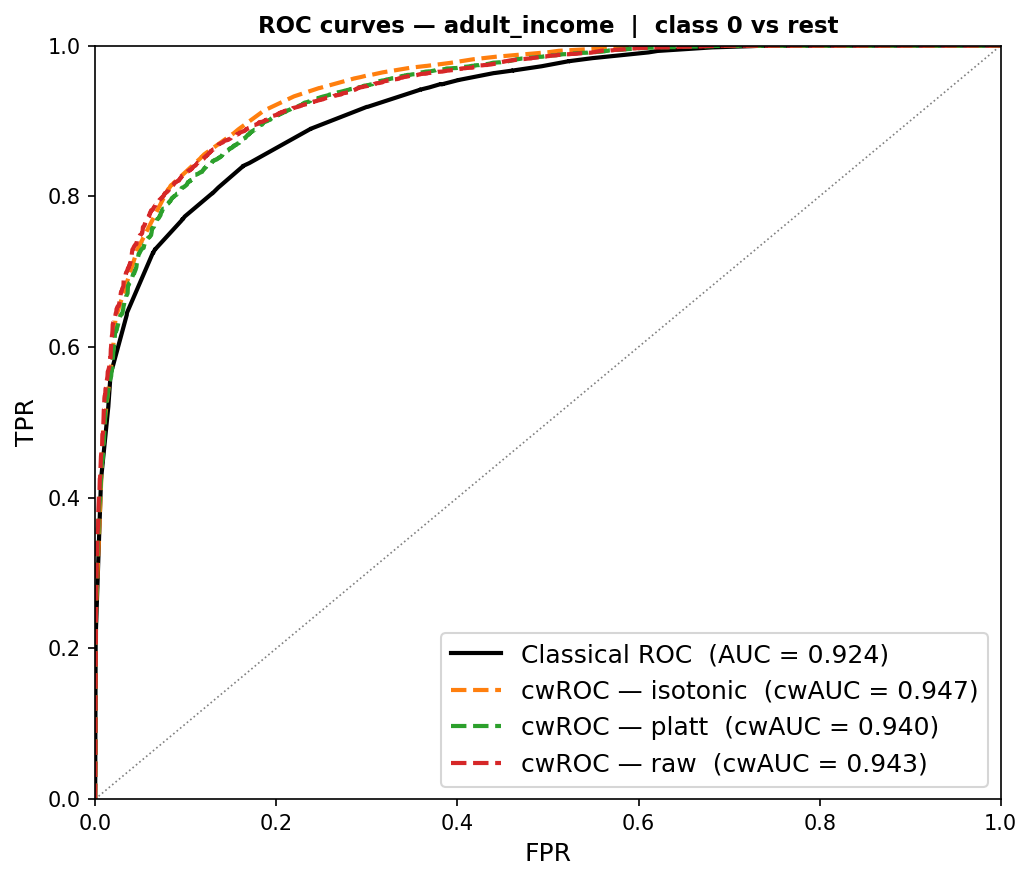}
    \end{minipage}
    \hfill
    \begin{minipage}{0.49\textwidth}
        \centering
        \includegraphics[width=\textwidth]{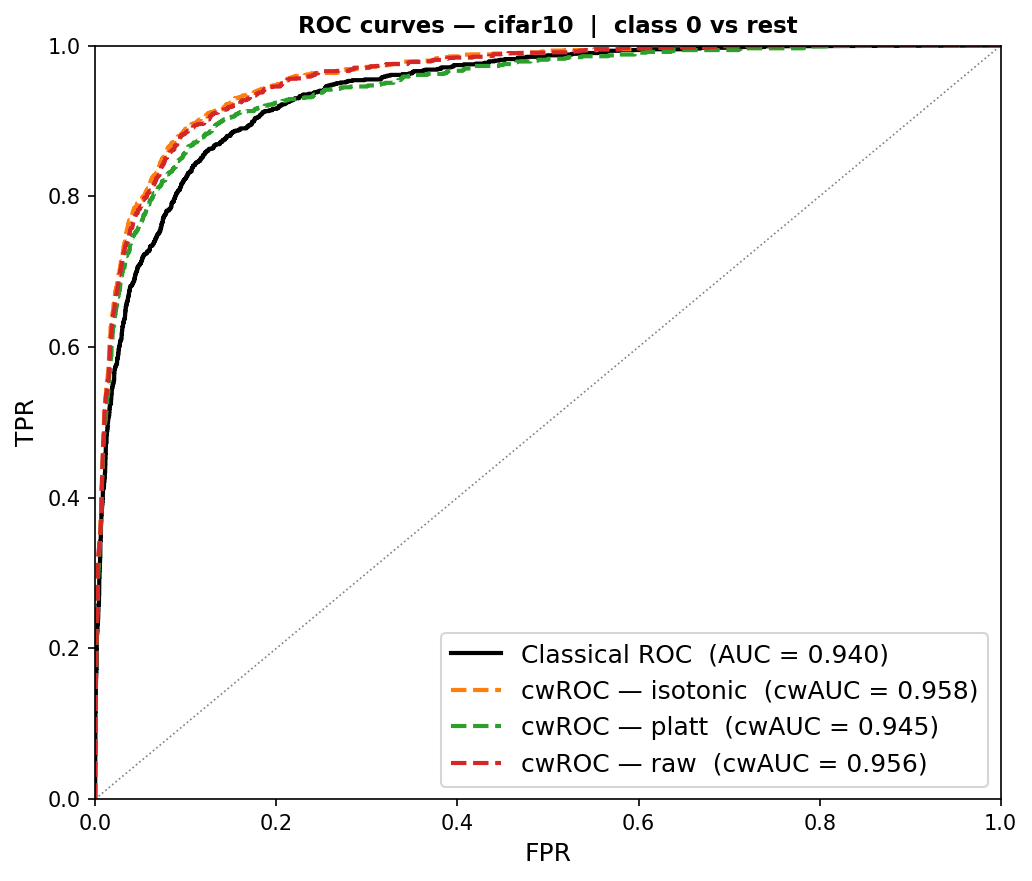}
    \end{minipage}
    \caption{ROC and confidence-weighted ROC curves for the binary Adult Income~\cite{uci_adult_income_1996} and multiclass CIFAR-10~\cite{krizhevsky2009learning} under raw XGBoost confidences, isotonic~\cite{zadrozny2002transforming} and Platt~\cite{platt1999probabilistic} post-hoc calibrations. The classical ROC curve (black solid) uses confidence scores only as a ranking signal and is therefore similar across calibration methods (see Proposition \ref{cor:auc_invariance} and its remark). The additional area of each cwROC curve (dashed) over the classical ROC measures the discriminative value added by calibration beyond ranking.}
   \label{fig:my_figure_AUC_adult_income}
\end{figure}

\subsection{Independence from ECE}

Confidence weighted accuracy and Calibrated Size Ratio measure independent aspects of a confidence profile. It is possible to have a safe profile with $\mathrm{CSR} \approx 1$ that is completely uninformative, and equally, a highly informative profile can be either risky or safe. We now establish, via explicit counterexamples, that both measures are also independent of the Expected Calibration Error ($\mathrm{ECE}$).

One or a few high-confidence incorrect predictions can yield a low ECE and an unbounded amount of risk. A formal statement is given in Proposition~\ref{theorem:csr_and_ece_ortogonality}: given a dataset $\mathcal{D}$ with at least one incorrect prediction, for any $\Lambda > 0$ there exists a confidence profile $\{\mathrm{conf}_i\}_{i=1}^N$ such that: \[
\mathrm{ECE} < \frac{1}{N} \quad \text{and} \quad \mathrm{CSR} > \Lambda. 
\] Therefore, a low ECE provides no risk guarantee under any fixed binning scheme.

The oracle confidence profile, which assigns confidence 1 to correct predictions and 0 to incorrect ones, is maximally informative, yields $\mathrm{cwA} = 1$ and $\mathrm{ECE} = 0$. On the other hand, a flat confidence profile in which $\mathrm{conf}_i = \bar{p}$ for every $i$ also yields $\mathrm{ECE} = 0$ and is even pointwise calibrated, yet has $\mathrm{cwA} = \bar{p}$. Hence $\mathrm{ECE} = 0$, regardless of the number of bins, is independent of cwA and compatible with completely uninformative confidence profiles.

\section{Experiments}

\subsection{Synthetic Datasets}

To evaluate the behaviour of  $\mathrm{CSR}$,  $\mathrm{cwA}$ and other confidence-weighted metrics across a controlled range of confidence score distributions, we construct ten synthetic distributions, each drawn with a fixed random seed for reproducibility. The distributions, summarised in Appendix Table~\ref{tab:distributions}, are designed to cover qualitatively distinct regimes of model confidence, including uniform, skewed, bimodal, narrow-support, log-space, and bell-shaped profiles. Experiments are conducted at sample sizes $N \in \{100, 1{,}000, 10{,}000, 1{,}000{,}000\}$.

For each distribution, ground-truth labels are generated under eight calibration regimes that define a mapping $p_{\text{true}}(c)$ from confidence value $c$ to true label probability, also summarised inTable~\ref{tab:distributions}. For each sample $i$, a predicted class $\hat{y}_i \in \{0, 1\}$ is drawn uniformly at random, reflecting a balanced class distribution. A correctness indicator is then drawn as $\text{correct}_i \sim \text{Bernoulli}(p_{\text{correct}}(c_i))$, where $c_i$ is the confidence score and $p_{\text{correct}}$ is the calibration mode function. If $\text{correct}_i = 1$, the ground truth label is set to $g_i = \hat{y}_i$. If $\text{correct}_i = 0$, the ground truth is set to the complementary class $g_i = 1 - \hat{y}_i$. Under this construction, $c_i$ represents the model's confidence in its predicted class $\hat{y}_i$, and the calibration mode $p_{\text{correct}}(c_i)$ controls how well that confidence reflects the true probability of being correct. A perfectly calibrated model corresponds to $p_{\text{correct}}(c) = c$, meaning that among all samples assigned confidence $c$, exactly a fraction $c$ are correctly predicted.

Table~\ref{tab:report_perfect_synthetic} reports the metric values obtained on perfectly calibrated synthetic datasets across varying sample sizes ($N \in \{100, 10{,}000\}$), averaged over 100 repetitions. Under perfect calibration, the true probability of correct prediction equals the confidence score, $p_{\text{correct}}(c) = c$, so any deviation of $\mathrm{CSR}$ from 1 is purely due to finite-sample noise.

The $>1\boldsymbol{\sigma}_{\mathrm{CSR}}$ and $>3\boldsymbol{\sigma}_{\mathrm{CSR}}$ columns report the percentage of repetitions in which $\mathrm{CSR}$ exceeds 1 by more than one and three standard deviations respectively. Under perfect calibration these are false positives. The normal distribution predicts false positive rates of 15.87\% and 0.13\% for the one- and three-sigma thresholds. The experimental results show that for the one-sigma threshold, 293 out of 3,000 experiments ($\approx 9.77\%$) exceed the threshold, below the theoretical 15.87\%, suggesting the test is conservative at this level. For the three-sigma threshold, at $N = 100$, 7 out of 1,000 experiments ($0.70\%$) exceed the threshold, above the theoretical 0.13\%, indicating that the Gaussian approximation underestimates the tails of $\mathrm{CSR}$ at small sample sizes; at $N = 1{,}000{,}000$, only 3 out of 1,000 ($0.30\%$) do so, now closer to the theoretical rate. This behaviour is consistent with the central limit theorem: the distribution converges to Gaussian as $N$ grows. Overall, the empirical false positive rates remain close to their theoretical predictions across all sample sizes and distributions.

Table~\ref{tab:report_perfect_synthetic} also reports the average value of $P_{\mathrm{risk}} = \Phi(z)$, which quantifies the probability that the observed $\mathrm{CSR}$ deviation is not due to chance. Across all distributions and sample sizes, the average $P_{\mathrm{risk}}$ remains below 50\%, confirming that under perfect calibration the distributions are not considered risky on average.

\begin{table}[ht]
\centering
\footnotesize
\begin{tabular}{lrrrrrrrr}
\toprule
\textbf{Distribution} & $\overline{\textbf{Acc}}$ & $\overline{\textbf{cwA}}$ & $\overline{\textbf{gain}}$ & $\overline{\textbf{CSR}}$ & $\overline{\boldsymbol{\sigma}_{\mathrm{CSR}}}$ & $>1\boldsymbol{\sigma}_{\mathrm{CSR}}$ & $>3\boldsymbol{\sigma}_{\mathrm{CSR}}$ & $\overline{\textbf{P}_{\textbf{risk}}}$ \\
\midrule
Uniform & 0.4999 & 0.6655 & 33.18\% & 1.0082 & 0.1201 & 9.33\% & 1.67\% & 29.10\% \\
Skew High & 0.8601 & 0.8911 & 22.14\% & 0.9897 & 12.6451 & 1.33\% & 0.33\% & 16.89\% \\
Skew Low & 0.1436 & 0.3340 & 22.28\% & 0.9997 & 0.0189 & 16.33\% & 0.00\% & 38.20\% \\
Bimodal & 0.4991 & 0.8091 & 61.94\% & 1.0671 & 5.7588 & 2.33\% & 0.67\% & 16.81\% \\
Tight Hi & 0.9012 & 0.9049 & 3.77\% & 0.9910 & 0.2448 & 11.00\% & 1.00\% & 33.23\% \\
Tight Lo & 0.0998 & 0.1331 & 3.71\% & 1.0000 & 0.0128 & 14.33\% & 0.00\% & 36.06\% \\
Normal & 0.6977 & 0.7118 & 4.68\% & 1.0066 & 0.0723 & 14.33\% & 1.00\% & 36.46\% \\
Log-Uniform Low & 0.1092 & 0.4976 & 43.69\% & 0.9976 & 0.0309 & 9.33\% & 0.33\% & 32.45\% \\
Log-Uniform High & 0.9348 & 0.9615 & 41.24\% & 0.7953 & 10.2331 & 2.67\% & 0.67\% & 8.43\% \\
Bell & 0.4984 & 0.5439 & 9.10\% & 1.0038 & 0.0422 & 16.67\% & 0.00\% & 35.13\% \\
\bottomrule
\end{tabular}
\caption{Behaviour of cwA and CSR for synthetic confidence distributions under perfect calibration, averaged over 100 repetitions for each of $N \in \{100, 10{,}000, 100{,}000\}$. Values of $>1\boldsymbol{\sigma}_{\mathrm{CSR}}$ and $>3\boldsymbol{\sigma}_{\mathrm{CSR}}$ report the percentage of repetitions in which CSR exceeded its mean by more than 1 and 3 standard deviations (tail risks of 84.13\% and 99.87\% respectively). Remaining indicators are averaged across repetitions and dataset sizes. Table~\ref{tab:report_perfect_synthetic} shows results disaggregated by $N$.}
\label{tab:report_perfect_synthetic_summary}
\end{table}

The average accuracy ($\mathrm{Acc}$) and confidence-weighted accuracy ($\mathrm{cwA}$) are also reported in Table~\ref{tab:report_perfect_synthetic}, across all distributions and sample sizes. Under perfect calibration, $\mathrm{cwA}$ exceeds $\mathrm{Acc}$, as weighting predictions by their confidence gives higher weight to samples the model is more likely to get right. We additionally report the average gain, defined as $
\mathrm{gain} = \frac{\mathrm{cwA} - \mathrm{Acc}}{1 - min(\mathrm{cwA},  \mathrm{Acc})}$ expressed as a percentage. This quantity measures what proportion of the gap between the plain accuracy and perfect accuracy ($=1$) is gained by confidence weighting. The magnitude of the gain varies across distributions: distributions with mass concentrated near the extremes (such as Skew High, Log-Uniform High and Bimodal) exhibit larger gains, as the model's high-confidence predictions are more reliably correct. 

\subsection{Real Datasets}

\begin{table}[ht]
\centering
\footnotesize

\begin{tabular}{lrrrrrrrr}
\toprule
calibration & $\overline{\textbf{Acc}}$ & $\overline{\textbf{cwA}}$ & $\overline{\textbf{gain}}$ & $\overline{\textbf{CSR}}$ & $\overline{\boldsymbol{\sigma}_{\mathrm{CSR}}}$ & $>1\boldsymbol{\sigma}_{\mathrm{CSR}}$ & $>3\boldsymbol{\sigma}_{\mathrm{CSR}}$ & $\overline{\textbf{P}_{\textbf{risk}}}$ \\
\midrule
no post-hoc & 0.8552 & 0.8757 & 17.6\% & 1.1231 & 1.1854 & 6/15 & 2/15 & 46.22\% \\
isotonic & 0.8552 & 0.8841 & 22.4\% & 326166.5578 & 176.2217 & 11/15 & 10/15 & 83.46\% \\
platt & 0.8552 & 0.8789 & 17.0\% & 0.9004 & 0.1618 & 3/15 & 0/15 & 21.96\% \\
\bottomrule
\end{tabular}
\caption{> Indicators averaged over 15 datasets under three confidence calibration regimes applied to XGBoost outputs. Isotonic regression achieves the highest $\overline{\text{cwA}}$ (0.8841) but at the cost of substantially higher risk: CSR exceeds $1\boldsymbol{\sigma}_{\mathrm{CSR}}$ in 11/15 datasets and $3\boldsymbol{\sigma}_{\mathrm{CSR}}$ in 10/15, compared to 6/15 and 2/15 for raw output. Platt scaling offers the safest profile (3/15 and 0/15) with competitive cwA. See Appendix Table~\ref{tab:report_real} for per-dataset results.}
\label{tab:report_real_datasets_averaged}
\end{table}

We evaluate on fifteen classification datasets (ten binary, five multiclass) from the UCI Machine Learning Repository~\cite{dua2019uci}, CIFAR-10 and MNIST~\cite{lecun1998gradient}, modelled with XGBoost~\cite{chen2016xgboost}. For each dataset, post-hoc calibration is fitted on a held-out validation set, yielding three confidence scoring regimes: raw XGBoost output, isotonic regression~\cite{zadrozny2002transforming}, and Platt scaling~\cite{platt1999probabilistic}. Full details are given in Appendix~\ref{expermentalData}.

Table~\ref{tab:report_real_datasets_averaged} and Appendix Table~\ref{tab:report_real} reports the results for the 15 datasets. Without post-hoc calibration, 6 out of 15 datasets exhibit risky confidence profiles, while the remaining datasets are safe with varying degrees of accuracy gain. Isotonic calibration dramatically worsens risk: 11 out of 15 datasets become risky, several with catastrophic Calibrated Size Ratios ($\mathrm{CSR} > 10^5$. This confirms that isotonic calibration, while effective at reducing ECE, can concentrate high confidence scores on incorrect predictions. This is a well-known problem, as top and bottom isotonic plateaus containing only positives or only negatives in the calibration set are assigned values of exactly 1 or 0. Platt scaling presents a different picture: only 3 datasets remain risky (\texttt{diabetes} , \texttt{breast\_cancer} and \texttt{wine}). These results illustrate that calibration methods can be orthogonal or even detrimental to risk. It is not surprising that standard methods fundamentally designed around minimizing ECE produce risky confidence profiles.  

Appendix Table~\ref{tab:alt_indicators} reports a complementary set of calibration and performance metrics with $\mathrm{ECE}_{15}$, Brier~\cite{brier1950} score, $\mathrm{ECD}$, $\mathrm{TCE}_{0.05}$, $\mathrm{T\text{-}CAL}^{1000}_{0.05}$, $\mathrm{MCalDist}$, $\mathrm{CWSA}_{0.5}$, and $\mathrm{CWSA}^+_{0.5}$ for the same datasets and calibration conditions.

\subsection{Sensitivity Analysis of $\mathrm{CSR}$}

Table~\ref{tab:prisk_extremes} is a summary of Tables~\ref{tab:report_all_indicators_A} and~\ref{tab:report_all_indicators_B} in the appendix, which report results across 10 confidence score distributions and 8 calibration modes, each repeated 100 times, for a total of 8,000 experiments. For each calibration mode, Table~\ref{tab:prisk_extremes} identifies the distribution that maximises and the distribution that minimises the average $P_{\mathrm{risk}}$, along with the corresponding percentage of trials exceeding 1 and 3 standard deviations. The table shows that $P_{\mathrm{risk}}$ reliably separates overconfident calibration modes (Overconf $1-\sqrt{1-c}$, Overconf $0.5c$, and Random under $c$) from well-calibrated and underconfident modes. The Random 0.5 calibration mode corresponds to random ground truth values independent of the confidence scores; under this mode, most confidence distributions are flagged as overconfident, with 7 out of 10 distributions reaching $P_{\mathrm{risk}}$ of 100\%. The three exceptions, Skew Low, Tight Low, and Log-Uniform Low, are not flagged as overconfident because their confidence scores are concentrated near 0, making them effectively underconfident relative to the Random 0.5 baseline, where the expected accuracy is 50\%. For well-calibrated and underconfident modes, $P_{\mathrm{risk}}$ remains low across all distributions, confirming that $\mathrm{CSR}$ does not raise false alarms under correct or conservative calibration.

\begin{table}[ht]
\centering
\footnotesize
\begin{tabular}{llrrr}
\toprule
\textbf{Calibration} & \textbf{Distribution} & $>1\boldsymbol{\sigma}_{\mathrm{CSR}}$ &  $>3\boldsymbol{\sigma}_{\mathrm{CSR}}$ & $\overline{\textbf{P}_{\textbf{risk}}}$ \\
\midrule
\multicolumn{5}{c}{\textit{Maximum $P_{\mathrm{risk}}$}} \\
\midrule
Random 0.5 & Uniform & 100.00\% & 100.00\% & 100.00\% \\
Perfect & Skew Low & 24.00\% & 0.00\% & 41.65\% \\
Underconf $0.2 + 0.8c$ & Tight Hi & 3.00\% & 0.00\% & 6.06\% \\
Underconf $\sqrt{c}$ & Skew High & 1.00\% & 1.00\% & 2.03\% \\
Random over c & Log-Uniform High & 1.00\% & 0.00\% & 2.80\% \\
Overconf  $1-\sqrt{(1-c)}$ & Normal & 100.00\% & 100.00\% & 100.00\% \\
Overconf $0.5c$  & Uniform & 100.00\% & 100.00\% & 100.00\% \\
Random under c & Tight Hi & 100.00\% & 100.00\% & 100.00\% \\
\midrule
\multicolumn{5}{c}{\textit{Minimum $P_{\mathrm{risk}}$}} \\
\midrule
Random 0.5 & Skew Low & 0.00\% & 0.00\% & 0.00\% \\
Perfect & Log-Uniform High & 1.00\% & 0.00\% & 9.32\% \\
Underconf $0.2 + 0.8c$ & Skew Low & 0.00\% & 0.00\% & 0.00\% \\
Underconf $\sqrt{c}$ & Skew Low & 0.00\% & 0.00\% & 0.00\% \\
Random over c & Uniform & 0.00\% & 0.00\% & 0.00\% \\
Overconf  $1-\sqrt{(1-c)}$ & Bimodal & 68.00\% & 30.00\% & 87.42\% \\
Overconf $0.5c$  & Log-Uniform Low & 100.00\% & 99.00\% & 100.00\% \\
Random under c & Skew High & 99.00\% & 99.00\% & 99.53\% \\
\bottomrule
\end{tabular}
\caption{Maximum and minimum average $P_{\mathrm{risk}}$ per calibration mode, with corresponding distribution, $>1\boldsymbol{\sigma}_{\mathrm{CSR}}$ and $>3\boldsymbol{\sigma}_{\mathrm{CSR}}$ counts, averaged over 100 repetitions. This is a summary of Tables ~\ref{tab:report_all_indicators_A} and~\ref{tab:report_all_indicators_B} in the Appendix.}
\label{tab:prisk_extremes}
\end{table}

\subsection{Confidence-weighted AUC}

Tables~\ref{tab:diffs_synthetic} and ~\ref{tab:full_raw_isotonic_and_platt} reports confidence-weighted accuracy and macro-averaged AUC for synthetic and real data respectively. Results show that $\text{cwA}$ is substantially greater than accuracy in all cases, synthetic and real. $\Delta\text{AUC}$, on the other hand, tells a more nuanced story: well-calibrated synthetic distributions show exclusively positive and often substantial increases  (see Table~\ref{tab:diffs_synthetic} in the appendix), while real datasets show smaller improvements and several negative values. This contrast is meaningful because, unlike accuracy, AUC already exploits confidence scores as a ranking signal.  The modest and sometimes negative $\Delta\text{AUC}$ values on real data admit two complementary explanations. The first is statistical: real datasets already achieve high AUC values, leaving little room for improvement. The second is more fundamental: the confidence scores produced by XGBoost are ultimately transformations of a discriminative score, using calibration methods that align their marginal distribution with observed accuracy. As a result, the transformed confidence scores may carry limited additional information beyond the original scores. 

For the synthetic distributions we see that calibrated and underconfident modes both show cwAUC $>$ AUC for 9 out of 10 distributions. Overconfident modes show mixed results, as inflated confidence scores preserve the ranking but mislead the probabilistic weighting. Random 0.5 mode, where ground truth labels are independent of confidence, yields cwAUC $<$ AUC for all ten distributions (see Table~\ref{tab:cwauc_summary} in the appendix).

Just as $\mathrm{cwA}$, the confidence-weighted AUC is blind to overconfidence risk. Table~\ref{tab:report_real_datasets_averaged} and Table~\ref{tab:full_raw_isotonic_and_platt} in the appendix shows the ordering $\mathrm{cwAUC}(\text{isotonic}) > \mathrm{cwAUC}(\text{raw}) > \mathrm{cwAUC}(\text{Platt})$ and  $\mathrm{cwA}(\text{isotonic}) > \mathrm{cwA}(\text{raw}) > \mathrm{cwA}(\text{Platt})$ which, together with $P_{\mathrm{risk}}(\text{isotonic}) > P_{\mathrm{risk}}(\text{raw}) > P_{\mathrm{risk}}(\text{Platt})$, indicates that isotonic and raw achieve better cwAUC and cwA at the cost of higher risk. 

\section{Limitations}

A few limitations deserve mention. First, CSR and $\sigma_{\mathrm{CSR}}$ involve terms sensitive to confidence scores near 1; although clipping confidences $\text{c} \rightarrow 1 - max(1 - \text{c}, \delta)$ when $N\delta  << 1$ seems to leave $P_{\mathrm{risk}}$ unaltered, a thorough analysis of numerical stability  is left for future work. Second, a non-risky CSR assessment is a statement about the limit distribution from which the evaluation set is assumed to be an i.i.d.\ sample. Once the model is deployed, this distribution may shift substantially, rendering $P_{\mathrm{risk}}$ misleading. Third, the validity of the Gaussian approximation for $P_{risk}$ at small N has not been studied in detail, though standard finite-sample corrections (e.g., Student-t, Edgeworth) apply. Fourth, we do not analyze whether CSR and the confidence-weighted metrics suffice to fully evaluate a confidence profile. Finally, whether cwA and CSR can guide post-hoc calibration, rather than merely evaluate it, has not been explored here and represents a natural direction for future work.

\section{Conclusion}

We propose two simple, directly interpretable metrics that complement each other in evaluating a model's confidence assignments: \[
 \text{CSR} = \mathbb{E}_{\mathcal{D}}\left[\frac{1-\mathbf{1}[\hat{y}_i=y_i]}{1-\text{conf}_i}\right] \quad \quad \text{and} \quad \quad \text{cwA} = \frac{\mathbb{E}_{\mathcal{D}}[\text{conf}_i \cdot \mathbf{1}[\hat{y}_i = y_i]]}{\mathbb{E}_{\mathcal{D}}[\text{conf}_i]}
\]
We introduce the {\bf{Calibrated Size Ratio (CSR)}}, which equals 1 under perfect calibration and has a $z$ score that {\bf{gives the probability that the confidence profile is risky}}. We tested CSR on real and synthetic distributions and found it separates safe from risky confidence profiles.

A confidence profile with acceptable risk can still be useful or useless, calibrated or uncalibrated. We show that the straightforward {\bf{Confidence Weighted Accuracy (cwA)}}, the fraction of total confidence mass assigned to correct predictions, {\bf{provides a directly interpretable measure of the usefulness of the confidences}}. Furthermore we show that the confidence-weighted approach  {\bf{extends naturally to all classical metrics including AUC}}, which has a confidence‑weighted counterpart that, unlike AUC, can measure the impact of calibration.

We also show that ECE is insensitive to risk. It is possible to construct arbitrarily risky models with small or even zero ECE, and experimental results on real datasets confirm that standard calibration methods give risky confidence assignments.

\newpage

\section*{Acknowledgments and Disclosure of Funding}

We gratefully acknowledge the funding and support of the Champalimaud Foundation (Lisbon, Portugal). We thank the Capgemini team in Portugal for providing computational resources through the Vodafone Test Bed project, in particular Filipa Grosso from the Technology and Innovation division, Luís Carlos Moreira from the Hybrid Intelligence division, and also to Antonio Ricciardo from Algebraic AI for coordination.

{

\bibliographystyle{unsrtnat}   
\bibliography{references}

}


\appendix

\section{Appendix}

\begin{definition}[Perfect Pointwise Calibration]\label{def:perfect_pointwise_calibration}
Let $\mathcal{D} = \{(x_i, y_i)\}_{i=1}^N$ be a dataset with $y_i \in \{1, \ldots, K\}$ for any $K \geq 2$, and let $\hat{y}_i$ be the model's predicted class. Let $\{\text{conf}_i\}_{i=1}^N$ be the set of confidence assignments.  We say the confidence profile is perfectly pointwise calibrated if: \[
P_{\mathcal{D}}(\hat{y}_i = y_i \mid \text{conf}_i = c) = c \quad \forall c \in [0,1] 
\]
that is, among all predictions assigned confidence $c$, the fraction that are correct is exactly $c$.
\end{definition}
\bigskip

This definition estimates the conditional probability empirically from $\mathcal{D}$ alone, i.e., as the observed fraction of correct predictions among all samples with confidence $c$. It assumes no other knowledge beyond  $\mathcal{D}$.  The conditional probability can be written as:
\[
P_{\mathcal{D}}(\hat{y}_i = y_i \mid \text{conf}_i = c) := \frac{\sum_{i:\text{conf}_i = c} \mathbf{1}[\hat{y}_i = y_i]}{\vert \{i: \text{conf}_i = c\} \vert} 
 \]
where $\mathbf{1}[\hat{y}_i = y_i]$ is the correctness indicator, equal to 1 if the predicted class matches the true class and 0 otherwise.

\bigskip

\subsection{ECE}

\bigskip
\begin{definition}[Empirical accuracy and confidence in a bin]
Let $\mathcal{D} = \{(x_i, y_i)\}_{i=1}^N$ and $\{\hat{y}_i, \text{conf}_i\}_{i=1}^N$ be as in Definition \ref{def:conf_assignment_setup}. For any bin $B \subseteq [0,1]$ such that $\{i : \mathrm{conf}_i \in B\} \neq \emptyset$, we define:

\begin{itemize}
    \item The \emph{empirical accuracy in bin $B$}:
    \[
        acc(B)
        :=
        \frac{1}{|B|}\sum_{i \in B} \mathbf{1}[\hat{y}_i = y_i]
    \]

    \item The \emph{empirical confidence in bin $B$}:
    \[
        conf(B)
        :=
        \frac{1}{|B|}\sum_{i \in B} \text{conf}_i.
    \]
\end{itemize}
\end{definition}

The classical definition of Expected Calibration Error (ECE), uses a partition of the confidence interval $[0,1]$ into $M$ bins, usually 15, and measures the weighted average of the discrepancy between empirical accuracy and mean confidence within each bin:
\bigskip

\begin{definition}[ECE]\label{def:ece_bins}
Let $\mathcal{D} = \{(x_i, y_i)\}_{i=1}^N$ and $\{\hat{y}_i, \text{conf}_i\}_{i=1}^N$ be as in Definition \ref{def:conf_assignment_setup}. Let $\{B_1, \ldots, B_M\}$ be a partition of the confidence interval  $[0,1]$ with equal-width bins $\{B_1, \ldots, B_M\}$ : \[
\text{ECE} = \sum_{m=1}^M \frac{|B_m|}{N} \left| \text{acc}(B_m) - \text{conf}(B_m) \right|. 
\]
\end{definition}

An equivalent way to write ECE is:
\bigskip

\begin{theorem}  \label{theorem:ECE_from_indicator}
Let $\mathcal{D} = \{(x_i, y_i)\}_{i=1}^N$ and $\{\hat{y}_i, \text{conf}_i\}_{i=1}^N$ be as in Definition \ref{def:conf_assignment_setup}, and let $\{B_1, \ldots, B_M\}$ be a partition of $[0,1]$ as in Definition \ref{def:ece_bins}. Then: \[
\text{ECE} = \frac{1}{N}\sum_{m=1}^M \left| \sum_{i \in B_m} \left(\mathbf{1}[\hat{y}_i = y_i] - \text{conf}_i\right) \right|
\]
\end{theorem}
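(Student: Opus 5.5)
The identity follows by substituting the definitions of $\text{acc}(B_m)$ and $\text{conf}(B_m)$ into Definition~\ref{def:ece_bins} and manipulating each bin separately. Throughout, $|B_m|$ denotes the number of samples whose confidence lies in $B_m$, i.e.\ $|\{i : \mathrm{conf}_i \in B_m\}|$, and $i \in B_m$ means $\mathrm{conf}_i \in B_m$.

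\textbf{Empty bins.} If $|B_m| = 0$, then acc and conf are undefined for that bin. We adopt the convention that its weighted term $\frac{|B_m|}{N}|\cdots|$ is $0$. The inner sum on the right-hand side is empty for such a bin, so it also contributes $0$. Both sides therefore agree on empty bins, and we may restrict attention to the nonempty ones.

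\textbf{Nonempty bins.} Fix a bin $B_m$ with $|B_m|>0$. By definition,
\[
\text{acc}(B_m) - \text{conf}(B_m) = \frac{1}{|B_m|}\sum_{i\in B_m}\left(\mathbf{1}[\hat{y}_i = y_i] - \text{conf}_i\right).
\]
Since $|B_m|>0$, the factor $\frac{1}{|B_m|}$ can be pulled out of the absolute value. Multiplying by the weight $\frac{|B_m|}{N}$ then cancels $|B_m|$:
\[
\frac{|B_m|}{N}\left|\text{acc}(B_m)-\text{conf}(B_m)\right| = \frac{1}{N}\left|\sum_{i\in B_m}\left(\mathbf{1}[\hat{y}_i = y_i]-\text{conf}_i\right)\right|.
\]

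\textbf{Conclusion.} Summing this equality over $m=1,\dots,M$ gives exactly the stated formula. The argument does not use equal-width bins; it only needs $\{B_1,\dots,B_M\}$ to be a partition, so that the per-bin terms are well defined. There is no substantive obstacle: the only care needed is the empty-bin convention and the fact that the positive factor $\frac{1}{|B_m|}$ may be moved outside the absolute value.
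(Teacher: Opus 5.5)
Your proposal is correct and follows the paper's route: substitute the definitions of $\text{acc}(B_m)$ and $\text{conf}(B_m)$, pull the positive factor $1/|B_m|$ out of the absolute value, and cancel it against the weight $|B_m|/N$. Your explicit empty-bin convention adds a little care that the paper leaves implicit, since its bin-level definitions assume nonempty bins.
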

\begin{proof}
Substituting the definitions of acc$(B_m)$ and conf$(B_m)$ into ECE: \[
\text{ECE} = \sum_{m=1}^M \frac{|B_m|}{N} \left| \text{acc}(B_m) - \text{conf}(B_m) \right| = \sum_{m=1}^M \frac{|B_m|}{N} \left| \frac{1}{|B_m|}\sum_{i \in B_m} \mathbf{1}[\hat{y}_i = y_i] - \frac{1}{|B_m|}\sum_{i \in B_m} \text{conf}_i \right| = 
\]
\[ = \sum_{m=1}^M \frac{|B_m|}{N} \cdot \frac{1}{|B_m|}\left| \sum_{i \in B_m} \left(\mathbf{1}[\hat{y}_i = y_i] - \text{conf}_i\right) \right| = \frac{1}{N}\sum_{m=1}^M \left| \sum_{i \in B_m} \left(\mathbf{1}[\hat{y}_i = y_i] - \text{conf}_i\right) \right| \]
\end{proof}
\bigskip

\subsection{Calibrated Size Ratio}

The condition $\text{conf}_i \in [0,1)$ is required for CSR to be well defined. Assigning $\text{conf}_i = 1$ to an incorrect prediction would make the denominator vanish, reflecting the fact that claiming absolute certainty on a wrong prediction is an infinitely bad error that yields an infinite CSR. Assigning $\text{conf}_i = 1$ to a correct prediction could be compatible with a defintion from CSR: \[
\text{CSR} = \frac{1}{N}\sum_{i:\hat{y}_i \neq y_i} \frac{1}{1 - \text{conf}_i}.
\] 
However as shown below, this convention leads to CSR $< 1$ even for epistemically honest assignments, and also causes problems with its variance, which is why we require $\text{conf}_i < 1$ for all $i$.
\bigskip

\begin{theorem} \label{theorem:PerfectCal_is_CSR_one}
Let $\mathcal{D} = \{(x_i, y_i)\}_{i=1}^N$ and $\{\hat{y}_i, \text{conf}_i\}_{i=1}^N$ be as in Definition \ref{def:conf_assignment_setup}.  Assume $\text{conf}_i < 1$ for all $i$. Perfect pointwise calibration implies CSR $ = 1$.
\end{theorem}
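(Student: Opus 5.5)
The plan is to group the sum defining CSR by the distinct confidence values that actually occur in $\mathcal{D}$, and then use the empirical definition of the conditional probability from Definition~\ref{def:perfect_pointwise_calibration} to evaluate each group. Let $C = \{\mathrm{conf}_i : 1 \le i \le N\}$ be the finite set of observed confidence values. For each $c \in C$, let $n_c = |\{i : \mathrm{conf}_i = c\}| \ge 1$ and $p_{\mathcal{D}}(c) = n_c/N$. Since every $\mathrm{conf}_i<1$, each summand of CSR is finite, and we can rewrite
\[
\mathrm{CSR} = \frac{1}{N}\sum_{c \in C} \frac{1}{1-c}\sum_{i:\mathrm{conf}_i=c}\bigl(1-\mathbf{1}[\hat{y}_i=y_i]\bigr).
\]

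Next we evaluate the inner sum using the empirical conditional probability. For each $c \in C$,
\[
\sum_{i:\mathrm{conf}_i=c}\bigl(1-\mathbf{1}[\hat{y}_i=y_i]\bigr) = n_c\bigl(1-P_{\mathcal{D}}(\hat{y}_i=y_i\mid \mathrm{conf}_i=c)\bigr).
\]
Perfect pointwise calibration sets this conditional probability equal to $c$, so the inner sum equals $n_c(1-c)$. This is exactly equation~\eqref{eq:CSR_conditional}, now justified rigorously rather than schematically.

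Substituting back gives
\[
\mathrm{CSR}=\frac{1}{N}\sum_{c\in C}\frac{n_c(1-c)}{1-c}=\frac{1}{N}\sum_{c\in C} n_c = 1,
\]
because the level sets $\{i:\mathrm{conf}_i=c\}$ partition $\{1,\dots,N\}$.

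The only point needing care is that the calibration condition is stated for all $c\in[0,1]$, yet the empirical conditional probability is undefined when $n_c=0$. I would note explicitly that the condition is only meaningful, and only used, for $c\in C$. I would also note that the hypothesis $\mathrm{conf}_i<1$ guarantees $1-c\neq 0$, so the cancellation is legitimate. Apart from this bookkeeping, the argument is a direct computation with no real obstacle.
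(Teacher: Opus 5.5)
Your proposal is correct and follows essentially the same route as the paper: you regroup the CSR sum by observed confidence level, replace the empirical error rate at each level by $1-c$ using perfect pointwise calibration, and cancel against $1/(1-c)$ (valid since $\mathrm{conf}_i<1$), so that the weights $n_c/N$ sum to $1$. Your explicit restriction of the calibration condition to the observed values $c\in C$ is a small but welcome piece of bookkeeping that the paper leaves implicit.
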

\begin{proof}
According to the defintion of CSR: \[
\text{CSR} = \frac{1}{N}\sum_{i:\hat{y}_i \neq y_i} \frac{1}{1-\text{conf}_i} = \sum_c P_{\mathcal{D}}(\text{conf}_i = c) \cdot \frac{P_{\mathcal{D}}(\hat{y}_i \neq y_i \mid \text{conf}_i = c)}{1-c} \]
Substituting perfect calibration $P_{\mathcal{D}}(\hat{y}_i \neq y_i \mid \text{conf}_i = c) = 1 - c$  for all $c \in [0,1)$:\[
= \sum_c P_{\mathcal{D}}(\text{conf}_i = c) \cdot \frac{1-c}{1-c} = \sum_c P_{\mathcal{D}}(\text{conf}_i = c) = 1
\]
where the condition $\text{conf}_i < 1$ ensures $1-c > 0$ so the cancellation is valid for all $c$ in the support.
\end{proof}
\bigskip

\begin{theorem} \label{theorem:PerfectCal_is_CSR_one_limitwise}
Let $\mathcal{D} = \{(x_i, y_i)\}_{i=1}^N$ and $\{\hat{y}_i, \text{conf}_i\}_{i=1}^N$ be as in Definition \ref{def:conf_assignment_setup}. Assume $\text{conf}_i < 1$ for all $i$. Perfect limitwise calibration implies CSR $ \approx 1$.
\end{theorem}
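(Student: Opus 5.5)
Since ``$\approx$'' has to be made precise, I will read the statement as: under perfect limitwise calibration (Definition~\ref{def:limit_calibration}), $\mathbb{E}_{limit}[\mathrm{CSR}] = 1$ exactly, and $\mathrm{CSR} \to 1$ almost surely (hence in probability) as $N \to \infty$, with fluctuations of order $\sigma_{\mathrm{CSR}} = O(N^{-1/2})$. The approach is to write $\mathrm{CSR}$ as an empirical mean of i.i.d.\ summands and apply the law of large numbers. I do not go through the empirical conditional probabilities of Proposition~\ref{theorem:PerfectCal_is_CSR_one}, because with continuous confidence values these are degenerate.

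Define $X_i = \frac{1-\mathbf{1}[\hat{y}_i = y_i]}{1-\mathrm{conf}_i}$. Then $\mathrm{CSR} = \frac{1}{N}\sum_{i=1}^N X_i$, and by Definition~\ref{def:limit_calibration} the $X_i$ are i.i.d. I will compute the mean by conditioning on the confidence, using the tower property and equation~\eqref{eq:pointwise_calibration}:
\[
\mathbb{E}_{limit}[X_i \mid \mathrm{conf}_i = c] = \frac{1 - \mathbb{E}_{limit}[\mathbf{1}[\hat{y}_i = y_i] \mid \mathrm{conf}_i = c]}{1-c} = \frac{1-c}{1-c} = 1 .
\]
This gives $\mathbb{E}_{limit}[X_i] = 1$, and hence $\mathbb{E}_{limit}[\mathrm{CSR}] = 1$ for every $N$. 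The step needs $\mathrm{conf}_i < 1$ almost surely under $P_{limit}$; this is the limit analogue of the standing hypothesis, and I will state it explicitly.

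Next I will apply the strong law of large numbers to conclude $\mathrm{CSR} \to 1$ almost surely. The SLLN only requires $\mathbb{E}|X_i| < \infty$, and since $X_i \ge 0$ with mean $1$, this holds automatically. To quantify ``$\approx$'' I will also compute the conditional second moment:
\[
\mathbb{E}_{limit}[X_i^2 \mid \mathrm{conf}_i = c] = \frac{1-c}{(1-c)^2} = \frac{1}{1-c}.
\]
It follows that $\mathrm{Var}(X_i) = \mathbb{E}_{limit}\!\left[\frac{1}{1-\mathrm{conf}_i}\right] - 1 = \mathbb{E}_{limit}\!\left[\frac{\mathrm{conf}_i}{1-\mathrm{conf}_i}\right]$, so $\mathrm{Var}(\mathrm{CSR}) = \frac{1}{N}\,\mathbb{E}_{limit}\!\left[\frac{\mathrm{conf}_i}{1-\mathrm{conf}_i}\right]$. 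Chebyshev's inequality then gives
\[
P(|\mathrm{CSR}-1| > \epsilon) \le \frac{\mathbb{E}_{limit}[\mathrm{conf}/(1-\mathrm{conf})]}{N\epsilon^2}.
\]
The CLT adds asymptotic normality, which is what justifies $P_{\mathrm{risk}}$ and which also anticipates Proposition~\ref{theorem:std_csr_under_perfect_limitwise}.

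The main obstacle is a technical one: the finite-variance condition $\mathbb{E}_{limit}[\mathrm{conf}/(1-\mathrm{conf})] < \infty$ can fail if $P_{limit}$ puts heavy mass near $1$. I will handle this by splitting into cases. Without the condition, the claim still holds in the SLLN sense (almost sure convergence). With the condition, I get the explicit $O(N^{-1/2})$ rate. A secondary point is that conditioning on $\mathrm{conf}_i = c$ for continuous distributions must be read as a regular conditional expectation. I will phrase the calibration hypothesis as $\mathbb{E}[\mathbf{1}[\hat{y}_i=y_i] \mid \mathrm{conf}_i] = \mathrm{conf}_i$ almost surely, which avoids defining conditionals pointwise.
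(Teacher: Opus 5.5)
Your proposal is correct and follows the paper's own argument: write $\mathrm{CSR}$ as an empirical mean of the i.i.d.\ variables $X_i = (1-\mathbf{1}[\hat{y}_i=y_i])/(1-\mathrm{conf}_i)$, replace it by $\mathbb{E}_{limit}[X_i]$, and use the tower property with $\mathbb{E}_{limit}[\mathbf{1}[\hat{y}_i=y_i]\mid \mathrm{conf}_i=c]=c$ to obtain a conditional mean of $1$. Beyond that you only make the paper's informal $\approx$ precise: exact unbiasedness, almost sure convergence by the SLLN since $X_i\ge 0$ has mean $1$, and a Chebyshev bound using the variance $\frac{1}{N}\mathbb{E}_{limit}[\mathrm{conf}_i/(1-\mathrm{conf}_i)]$, which the paper derives separately in Proposition~\ref{theorem:std_csr_under_perfect_limitwise}; this strengthens the paper's argument rather than replacing it.
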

\begin{proof}
By perfect limitwise calibration, we have $P_{\text{limit}}(\hat{y}_i = y_i \mid \text{conf}_i = c) = c$  for every confidence value $c \in [0,1)$, which we can rewrite as
\[
\mathbb{E}_{limit}\big[\mathbf{1}[\hat{y}_i = y_i] \mid \text{conf}_i = c\big] = c.
\]
Perfect limitwise calibration assumes $\mathcal{D}$ correspond to i.i.d. draws from $P_{\text{limit}}$ so CSR can be approached by:
\[
\text{CSR} = \frac{1}{N}\sum_{i=1}^N \frac{1 - \mathbf{1}[\hat{y}_i = y_i]}{1 - \text{conf}_i} \approx \mathbb{E}_{limit}\left[\frac{1 - \mathbf{1}[\hat{y}_i = y_i]}{1 - \text{conf}_i}\right] 
\]
Conditioning on the confidence:
\[
\mathbb{E}_{limit}\left[\frac{1 - \mathbf{1}[\hat{y}_i = y_i]}{1 - \text{conf}_i}\right] = \mathbb{E}_{\text{conf}_i}\left[   \mathbb{E}_{limit}\left[\frac{1 - \mathbf{1}[\hat{y}_i = y_i]}{1 - \text{conf}_i}  \,\middle|\, \text{conf}_i\right] \right],
\]
where we used the law of total expectation (tower property).

Fix any value $\text{conf}_i = c \in [0,1)$.
By our assumption $\text{conf}_i < 1$, the denominator $1 - \text{conf}_i$ is strictly positive, so we can take it outside the inner conditional expectation:
\[
\mathbb{E}_{limit}\left[\frac{1 - \mathbf{1}[\hat{y}_i = y_i]}{1 - \text{conf}_i}
\,\middle|\, \text{conf}_i = c\right]
= 
\frac{1}{1 - c}\,
\mathbb{E}_{limit}\big[1 - \mathbf{1}[\hat{y}_i = y_i] \mid \text{conf}_i = c\big]
= \frac{1 - c}{1 - c} = 1.
\]
Substituting, $\text{CSR} \approx \mathbb{E}_{\text{conf}_i}\big[1\big]= 1$.
\end{proof}
\bigskip

CSR is equal to $1$ for perfectly calibrated confidence assignments, greater than $1$ for overconfident, and less than $1$ for underconfident. The same result also holds if we adopt a Bayesian instead of a frequentist approach:
\bigskip

\begin{theorem} \label{theorem:bin_size_zero_and_ECE_zero_is_CSR_one}
Let $\mathcal{D} = \{(x_i, y_i)\}_{i=1}^N$ and $\{\hat{y}_i, \text{conf}_i\}_{i=1}^N$ be as in Definition \ref{def:conf_assignment_setup}, and assume  $\text{conf}_i < 1$ for all $i$. As bin size tends to 0, ECE $= 0$ implies CSR $= 1$.
\end{theorem}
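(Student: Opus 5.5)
The plan is to reduce the statement to Proposition~\ref{theorem:PerfectCal_is_CSR_one}, using the rewriting of ECE from Proposition~\ref{theorem:ECE_from_indicator}. The dataset is finite, so the confidence values $\{\mathrm{conf}_i\}$ take only finitely many distinct values $c_1 < \dots < c_r$, all strictly less than $1$. Let $\delta = \min_{s}(c_{s+1}-c_s) > 0$. The first step is to observe that once the bin width is below $\delta$, every nonempty bin $B_m$ contains samples of exactly one confidence value $c_s$, and each $c_s$ lies in exactly one bin. I will interpret ``as bin size tends to 0'' in this sense: for all partitions fine enough, equivalently in the limit, the nonempty bins coincide with the level sets $\{i : \mathrm{conf}_i = c_s\}$.

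The second step uses Proposition~\ref{theorem:ECE_from_indicator} for such a fine partition:
\[
\mathrm{ECE} = \frac{1}{N}\sum_{s=1}^r \left| \sum_{i:\mathrm{conf}_i = c_s} \left(\mathbf{1}[\hat{y}_i = y_i] - c_s\right)\right| .
\]
This is a sum of nonnegative terms, so $\mathrm{ECE} = 0$ forces each inner sum to vanish. That gives
\[
\sum_{i:\mathrm{conf}_i=c_s}\mathbf{1}[\hat{y}_i=y_i] = c_s\,\big|\{i:\mathrm{conf}_i=c_s\}\big| ,
\]
which is exactly $P_{\mathcal{D}}(\hat{y}_i = y_i \mid \mathrm{conf}_i = c_s) = c_s$ for every $c$ in the support. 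For $c$ outside the support the condition in Definition~\ref{def:perfect_pointwise_calibration} is vacuous, since the conditional fraction is undefined and does not enter the sum over $c$. Hence the profile is perfectly pointwise calibrated.

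The final step invokes Proposition~\ref{theorem:PerfectCal_is_CSR_one}, which applies because $\mathrm{conf}_i<1$ for all $i$, and yields $\mathrm{CSR}=1$. Alternatively, one can substitute directly into the conditional decomposition \eqref{eq:CSR_conditional}.

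The main obstacle is making ``bin size tends to 0'' precise. If the hypothesis is read as $\lim \mathrm{ECE} = 0$ along a sequence of refining partitions, rather than $\mathrm{ECE}=0$ at small widths, I would argue that ECE is eventually constant once the width drops below $\delta$. A constant sequence with limit $0$ is identically $0$ from that point on, so the same argument applies. I must also handle the edge convention at bin boundaries: I will assume the bins are half-open intervals, so each $c_s$ lies in exactly one bin, and then distinct values are separated once the width is below $\delta$.
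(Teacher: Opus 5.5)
Your proof is correct and takes the same route as the paper, which simply asserts that as bins shrink to points the condition $\mathrm{ECE}=0$ becomes perfect pointwise calibration and then invokes Proposition~\ref{theorem:PerfectCal_is_CSR_one}. Your minimum-gap argument is a rigorous version of that one-line step: once the bin width is below $\delta$, each nonempty bin is a single level set $\{i:\mathrm{conf}_i=c_s\}$, so ECE is eventually constant, and its vanishing forces each per-value residual to be zero.
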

\begin{proof}
As bins become points ECE $ = 0$ becomes perfect pointwise calibration, so the result follows from Proposition \ref{theorem:PerfectCal_is_CSR_one}.
\end{proof}
\bigskip

The condition $\text{conf}_i < 1$ for all $i$ in the Proposition \ref{theorem:PerfectCal_is_CSR_one} and Proposition \ref{theorem:bin_size_zero_and_ECE_zero_is_CSR_one} deserves careful interpretation. Assigning $\text{conf}_i = 1$ to correct predictions seems harmless at first glance: correct predictions contribute nothing to CSR since $(1-y_i) = 0$, and a bin with conf $= 1$ and acc $= 1$ contributes nothing to ECE either. However both results fail in this case: an oracle that assigns conf $= 0$ to incorrect predictions and conf $= 1$ to correct predictions achieves perfect pointwise calibration yet CSR $= 1 - \bar{p} \neq 1$. We hence assume that all confidences must be strictly smaller than one. CSR $= 1$ and perfect knowledge of some samples requires the oracle to assign conf $= \bar{p}$ (the accuracy) to the remaining predictions (in this case all the incorrect ones) in order to recover CSR $= 1$. Assigning conf $= 0$ to incorrect predictions is interpreted by CSR as underconfidence: the model is claiming certainty of being wrong, yet the marginal probability of being correct is $\bar{p} > 0$.
\bigskip

\begin{theorem} \label{theorem:csr_and_ece_ortogonality}
Let $\mathcal{D} = \{(x_i, y_i)\}_{i=1}^N$ and  and $\{\hat{y}_i \}_{i=1}^N$ as in Definition \ref{def:conf_assignment_setup}, with at least one incorrect prediction. For any $\Lambda > 0$, there exists a confidence assignment $\{\text{conf}_i\}_{i=1}^N$ such that simultaneously: \[
\text{ECE} <  \frac{1 }{N}   \quad \text{and} \quad \text{CSR} > \Lambda. 
\]
\end{theorem}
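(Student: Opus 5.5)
The plan is an explicit construction: isolate one incorrect prediction, give it confidence arbitrarily close to $1$, and choose every other confidence so that ECE stays below $1/N$. The computation uses the rewriting of ECE from Proposition~\ref{theorem:ECE_from_indicator}:
\[
\text{ECE} = \frac{1}{N}\sum_{m=1}^M \Big| \sum_{i \in B_m} (\mathbf{1}[\hat{y}_i = y_i] - \text{conf}_i) \Big|.
\]
It therefore suffices to make the sum of the absolute bin residuals strictly less than $1$.

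\textbf{Construction.} Let $j$ be an index with $\hat{y}_j \neq y_j$. Let $C$ be the set of correct predictions and $I' = \{i \neq j : \hat{y}_i \neq y_i\}$ the remaining incorrect ones. Fix $\epsilon, \eta \in (0, 1/M]$, so that $1-\epsilon$ and $1-\eta$ both lie in the top bin $B_M$. Set
\[
\text{conf}_j = 1-\epsilon, \qquad \text{conf}_i = 1-\eta \ (i \in C), \qquad \text{conf}_i = 0 \ (i \in I').
\]
All confidences are then strictly less than $1$, as Definition~\ref{def:cal_dif} requires.

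\textbf{ECE bound.} The samples in $I'$ lie in $B_1$. Each has residual $0 - 0 = 0$, so that bin contributes nothing. No other bins are populated except $B_M$, whose residual sum is
\[
\sum_{i\in C} \eta \;-\; (1-\epsilon) \;=\; |C|\eta - (1-\epsilon).
\]
Choose $\eta < \epsilon/|C|$, with no constraint needed if $C=\emptyset$. Then this quantity lies in $(-(1-\epsilon), 0]$, so its absolute value is at most $1-\epsilon < 1$, and hence $\text{ECE} \le (1-\epsilon)/N < 1/N$.

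\textbf{CSR bound.} By Definition~\ref{def:cal_dif},
\[
\text{CSR} = \frac{1}{N}\Big(\frac{1}{\epsilon} + |I'|\Big) \ge \frac{1}{N\epsilon}.
\]
Taking $\epsilon < \min\{1/M,\, 1/(N\Lambda)\}$ gives $\text{CSR} > \Lambda$. The only delicate point is the order of the choices: fix $\epsilon$ first from $\Lambda$, $N$ and $M$, then $\eta$ from $\epsilon$ and $|C|$, and check that both land in the top bin. With that ordering each step is a routine verification, so I expect no real obstacle. The argument holds for any fixed equal-width binning, since it only uses the fact that values within $1/M$ of $1$ share the top bin.
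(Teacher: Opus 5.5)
Your proof is correct and is essentially the paper's construction. The paper takes a bin-wise calibrated profile on the other $N-1$ samples, clips its $K$ confidence-one entries to $1-\epsilon$, and injects the chosen error at $1-\delta$ with $K\epsilon<\delta$; you instantiate exactly this with the oracle profile as the base (so $K=|C|$, with your $\eta,\epsilon$ playing the roles of its $\epsilon,\delta$).

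Your explicit version is, if anything, cleaner. It does not rely on the paper's unspecified base profile or on its implicit assumption $N\ge M$ (hidden in $\epsilon,\delta<1/N\le 1/M$). It also computes the top-bin residual as a single signed sum, whereas the paper's $(K\epsilon+1-\delta)/N$ is only an upper bound because both contributions lie in the same bin. Strictly, your residual lies in $[-(1-\epsilon),\,2\epsilon-1)$ rather than $(-(1-\epsilon),0]$, but its absolute value is below $1$ in every case, so your conclusion stands.
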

\begin{proof}
 Choose one sample $i^*$ with $\hat{y}_{i^*} \neq y_{i^*}$ and extract it from $\mathcal{D}$.  It is always possible to assign confidences such that conf$(B_m) =$ acc$(B_m)$ for every bin and obtain a ECE $= 0$. Let $\{\text{conf}_i\}_{i=1; i \not= i^*}^N$ be such confidence profile for $N - 1$ samples. Let $K$ be the number of samples for which $c_i = 1$.  Now consider the set of $N$ confidences: \[
\text{conf}_i = \begin{cases} \min(c_i, 1 - \epsilon) & i \neq i^* \\ 1 - \delta & i = i^* \end{cases}
\]
where $\delta, \epsilon < 1/N$, so bins remain unchanged; any bin $B_m$ with $\text{acc}(B_m) < 1$ has some incorrect sample, hence $c_i = \text{acc}(B_m) \le 1 - 1/N$, so the clip $\min(c_i, 1-\epsilon)$ is inactive on such bins; and $\epsilon, \delta < 1/N \le 1/M$ keeps $1-\epsilon$ and $1-\delta$ in the top bin. Thus no sample changes bin, and perfect calibration is preserved in every bin not containing $i^*$. Using Proposition \ref{theorem:ECE_from_indicator} to write ECE and taking into account the perfect calibration within each bin $\sum_{i \in B_m} \left(\mathbf{1}[\hat{y}_i = y_i] - \text{conf}_i\right) = 0$, the value of ECE with the injected sample
\[
\text{ECE} = \frac{1}{N}\sum_{m=1}^M \left| \sum_{i \in B_m} \left(\mathbf{1}[\hat{y}_i = y_i] - \text{conf}_i\right) \right| =  \frac{K\epsilon + 1 - \delta }{N}  
\]
If $K \not= 0$, choose $\epsilon < \min(\delta/K, 1/N) =\delta/K$. In any case:
\[
\text{ECE} =  \frac{K\epsilon + 1 - \delta }{N}  <  \frac{1 }{N}  
\]
CSR is well defined as $\text{conf}_i < 1$ for all $i$. The contribution of sample $i^*$ to CSR is: \[
\text{CSR} \geq \frac{1}{N} \cdot \frac{1}{1-(1-\delta)} = \frac{1}{N\delta} \to \infty \text{ as } \delta \to 0 
\]
So for any $\Lambda > 0$, choose $\delta < \frac{1}{N \Lambda}$ to guarantee $CSR > \Lambda$.
\end{proof}
\bigskip

\begin{theorem} \label{theorem:std_csr_under_perfect_limitwise}
Let $\mathcal{D} = \{(x_i, y_i)\}_{i=1}^N$ and $\{\hat{y}_i, \text{conf}_i\}_{i=1}^N$ be as in Definition \ref{def:conf_assignment_setup} Assume $\text{conf}_i < 1$ for all $i$ and perfect limitwise calibration. Then, the standard deviation of CSR is: \[
\sigma_{\text{CSR}} = \sqrt{\frac{1}{N}\mathbb{E}_{limit} \left[\frac{\text{conf}_i}{1-\text{conf}_i}\right]} \approx   \sqrt{\frac{1}{N^2}\sum_{i=1}^N \frac{\text{conf}_i}{1-\text{conf}_i}}  \]
\end{theorem}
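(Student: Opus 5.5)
We will treat $\text{CSR}$ as an average of $N$ i.i.d.\ random variables under the limit distribution of Definition~\ref{def:limit_calibration}. Write $X_i = \frac{1 - \mathbf{1}[\hat{y}_i = y_i]}{1 - \text{conf}_i}$, so that $\text{CSR} = \frac{1}{N}\sum_{i=1}^N X_i$. Since the triples $(\hat{y}_i, y_i, \text{conf}_i)$ are i.i.d.\ draws from $P_{\text{limit}}$, the $X_i$ are i.i.d. Hence
\[
\mathrm{Var}(\text{CSR}) = \frac{1}{N}\mathrm{Var}_{limit}(X_i),
\]
and the whole task reduces to computing the variance of a single $X_i$.

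For the mean, we reuse the tower-property computation from the proof of Proposition~\ref{theorem:PerfectCal_is_CSR_one_limitwise}, which gives $\mathbb{E}_{limit}[X_i] = 1$.

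For the second moment, we note that $(1 - \mathbf{1}[\hat{y}_i = y_i])^2 = 1 - \mathbf{1}[\hat{y}_i = y_i]$ because the indicator is idempotent. We then condition on $\text{conf}_i = c$. Since $c<1$, the factor $(1-c)^{-2}$ can be pulled out, and calibration gives $\mathbb{E}_{limit}[1 - \mathbf{1}[\hat{y}_i = y_i] \mid \text{conf}_i = c] = 1-c$. Therefore
\[
\mathbb{E}_{limit}[X_i^2 \mid \text{conf}_i = c] = \frac{1-c}{(1-c)^2} = \frac{1}{1-c}.
\]
Averaging over $\text{conf}_i$ yields $\mathbb{E}_{limit}[X_i^2] = \mathbb{E}_{limit}\left[\frac{1}{1-\text{conf}_i}\right]$, and so
\[
\mathrm{Var}_{limit}(X_i) = \mathbb{E}_{limit}\left[\frac{1}{1-\text{conf}_i}\right] - 1 = \mathbb{E}_{limit}\left[\frac{1}{1-\text{conf}_i} - 1\right] = \mathbb{E}_{limit}\left[\frac{\text{conf}_i}{1-\text{conf}_i}\right].
\]
Dividing by $N$ and taking the square root gives the exact expression for $\sigma_{\text{CSR}}$.

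The approximate equality is then the plug-in step already used in the main text. We replace $\mathbb{E}_{limit}[\cdot]$ by the empirical mean $\frac{1}{N}\sum_{i=1}^N \cdot$ over the observed confidences, which is justified by the law of large numbers for the i.i.d.\ $\text{conf}_i$. The factor $\frac{1}{N}\cdot\frac{1}{N}\sum_i$ becomes $\frac{1}{N^2}\sum_i$, which is the stated formula.

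The main obstacle is integrability. The exact formula requires $\mathbb{E}_{limit}[\frac{1}{1-\text{conf}_i}]<\infty$, and $\text{conf}_i<1$ pointwise does not guarantee this if the limit distribution puts mass arbitrarily close to $1$. We will state this finiteness as the standing hypothesis implicit in the statement, noting that otherwise $\sigma_{\text{CSR}}=\infty$ and both sides are infinite. The empirical approximation, by contrast, is always finite.
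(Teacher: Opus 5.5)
Your proof is correct and follows the paper's route: reduce to $\mathrm{Var}(\text{CSR}) = \frac{1}{N}\mathrm{Var}(X_i)$ by independence, condition on $\text{conf}_i = c$ and apply calibration, then replace the limit expectation by the empirical mean. The only difference is cosmetic: you compute $\mathbb{E}[X_i^2] - (\mathbb{E}[X_i])^2$ directly, while the paper uses the law of total variance with the conditional Bernoulli variance $\frac{c}{1-c}$ and $\mathrm{Var}(\mathbb{E}[Z \mid \text{conf}]) = 0$. Your integrability hypothesis $\mathbb{E}_{limit}[\frac{1}{1-\text{conf}_i}] < \infty$ is a sensible condition that the paper leaves implicit.
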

\begin{proof}
Write CSR as an average of random variables i.i.d. $Z_i = \frac{1-\mathbf{1}[\hat{y}_i=y_i]}{1-\text{conf}_i}$: \[
\text{CSR} = \frac{1}{N}\sum_{i=1}^N Z_i \implies \text{Var}(\text{CSR}) = \frac{1}{N^2}\sum_{i=1}^N \text{Var}(Z_i) =  \frac{1}{N} \text{Var}(Z) \]
Conditioning on $\text{conf}_i = c$ and using perfect limitwise calibration $P_{\text{limit}}(\hat{y}_i = y_i \mid \text{conf}_i = c) = c$, the variable $Z$ takes value $0$ with probability $c$ and $\frac{1}{1-c}$ with probability $1-c$. Its variance is: \[
\text{Var}(Z \mid \text{conf}_i = c) = \frac{1}{(1-c)^2}\cdot \text{Var}(1-\mathbf{1}[\hat{y}_i=y_i]) = \frac{1}{(1-c)^2}\cdot c(1-c) = \frac{c}{1-c} 
\] where we have used the Bernoulli variance. We now use the law of total variance: \[
\mathrm{Var}(Z) = \mathbb{E}\bigl[\mathrm{Var}(Z \mid \text{conf})\bigr] + \mathrm{Var}\bigl(\mathbb{E}[Z \mid \text{conf}]\bigr).
\]
The second summand is 0 because $\mathbb{E}[Z \mid \text{conf} =c] = 1$ so the unconditional variance is: \[
\text{Var}(Z) = \sum_c  \text{Var}(Z \mid \text{conf} = c) p(c) = \mathbb{E}_{limit}   \left[\frac{\text{conf}_i}{1-\text{conf}_i}\right]
\] If we approximate this expectation by the empirical mean over the $N$ examples, we obtain
\[
\mathrm{Var}(\text{CSR}) =  \frac{1}{N} \mathbb{E}_{limit}   \left[\frac{\text{conf}_i}{1-\text{conf}_i}\right]  \approx \frac{1}{N^2}\sum_{i=1}^N \frac{\text{conf}_i}{1 - \text{conf}_i}. \]
Taking the square root  $\sigma_{\text{CSR}} = \sqrt{\text{Var}(\text{CSR})  }$ we have the standard deviation.
\end{proof}
\bigskip

Note that while the CSR depends solely on incorrect assignments, its variance is determined by the odds of all assignments within the dataset.
\bigskip

\begin{theorem} \label{proposition:jensen_CSR}
Let $\mathcal{D} = \{(x_i, y_i)\}_{i=1}^N$ and $\{\hat{y}_i, \mathrm{conf}_i\}_{i=1}^N$ be as in 
Definition~\ref{def:conf_assignment_setup}, and let $\bar{p}$ denote the empirical accuracy. 
If $\mathrm{conf}_i \in [0,1)$ for all $i$, then
\[
    \mathrm{CSR} \geq \frac{1 - \bar{p}}{1 - \mathbb{E}[\mathrm{conf}_i \mid \hat{y}_i \neq y_i]}.
\]
\end{theorem}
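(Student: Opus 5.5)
The plan is to split the sum defining CSR over the set of incorrect predictions and then apply Jensen's inequality to the convex function $f(x) = \frac{1}{1-x}$ on $[0,1)$. Let $E = \{i : \hat{y}_i \neq y_i\}$ and $n_E = |E|$, so that $n_E/N = 1 - \bar{p}$. If $n_E = 0$, the conditional expectation is undefined. In the natural reading of this case the bound is trivial, since both sides vanish once we interpret the right-hand side as $0$ because $1-\bar{p}=0$. I will dispatch this case first and then assume $n_E \geq 1$.

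Starting from Definition~\ref{def:cal_dif}, I will rewrite
\[
\mathrm{CSR} = \frac{1}{N}\sum_{i\in E}\frac{1}{1-\mathrm{conf}_i} = \frac{n_E}{N}\cdot\frac{1}{n_E}\sum_{i\in E}\frac{1}{1-\mathrm{conf}_i} = (1-\bar{p})\,\mathbb{E}\!\left[\frac{1}{1-\mathrm{conf}_i}\,\middle|\,\hat{y}_i\neq y_i\right].
\]
The inner quantity is the empirical mean of $f(\mathrm{conf}_i)$ over $E$. Since $f''(x) = 2/(1-x)^3 > 0$ on $[0,1)$, the function $f$ is convex there. The mean $m = \mathbb{E}[\mathrm{conf}_i \mid \hat{y}_i \neq y_i]$ is a convex combination of points of $[0,1)$, so it also lies in $[0,1)$, and $f(m)$ is finite. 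Jensen's inequality for the uniform distribution on $E$ then gives $\frac{1}{n_E}\sum_{i\in E} f(\mathrm{conf}_i) \geq f(m) = \frac{1}{1-m}$.

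Multiplying by $1-\bar{p} \geq 0$ yields the claimed bound. The only delicate points are the degenerate case $n_E=0$ and checking that $m<1$, so that the right-hand side is well defined. Both are handled above, so I expect no real obstacle. As a by-product, equality holds precisely when all incorrect predictions share the same confidence, because $f$ is strictly convex.
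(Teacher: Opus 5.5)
Your proposal is correct and follows the paper's proof: it rewrites $\mathrm{CSR}$ as $(1-\bar{p})$ times the mean of $1/(1-\mathrm{conf}_i)$ over incorrect predictions, then applies Jensen's inequality to the convex $f(x)=1/(1-x)$ on $[0,1)$. You also handle the case with no incorrect predictions and check that the conditional mean lies in $[0,1)$; the paper leaves both points implicit.
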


\begin{proof}
The conditional expectation of confidence on incorrect predictions is
\[
    \mathbb{E}_{\mathcal{D}}[\mathrm{conf}_i \mid \hat{y}_i \neq y_i] 
    = \frac{\sum_{i=1}^N \mathrm{conf}_i \cdot \mathbf{1}[\hat{y}_i \neq y_i]}
           {\sum_{i=1}^N \mathbf{1}[\hat{y}_i \neq y_i]},
\]
where the denominator satisfies
\[
    \sum_{i=1}^N \mathbf{1}[\hat{y}_i \neq y_i] = N(1 - \bar{p}).
\]
Since $f(x) = \frac{1}{1-x}$ is strictly convex on $[0,1)$ and $\mathrm{conf}_i \in [0,1)$, 
Jensen's inequality gives $f(\mathbb{E}[X \mid \hat{y}_i \neq y_i]) \leq \mathbb{E}[f(X) \mid \hat{y}_i \neq y_i]$, 
that is:
\[
    \frac{1}{1 - \mathbb{E}_{\mathcal{D}}[\mathrm{conf}_i \mid \hat{y}_i \neq y_i]}
    \leq \mathbb{E}_{\mathcal{D}}\!\left[\frac{1}{1 - \mathrm{conf}_i} \,\middle|\, \hat{y}_i \neq y_i \right]
    = \frac{1}{N(1-\bar{p})} \sum_{i:\,\hat{y}_i \neq y_i} \frac{1}{1 - \mathrm{conf}_i}
    = \frac{\mathrm{CSR}}{1 - \bar{p}}.
\]
Rearranging yields the result.
\end{proof}
\bigskip

\begin{theorem} \label{proposition:_average_of_confidences for_incorrect_under_CSR_one}
Let $\mathcal{D} = \{(x_i, y_i)\}_{i=1}^N$ and $\{\hat{y}_i, \text{conf}_i\}_{i=1}^N$ be as in Definition \ref{def:conf_assignment_setup} and let $\bar{p}$ be the empirical accuracy.  If CSR = $ 1$ and $\text{conf}_i \in [0,1)$ for all $i$, then  \[
\mathbb{E}_{\mathcal{D}}[\text{conf}_i \mid \hat{y}_i \neq y_i] :=  \frac{\sum_{i=1}^N  conf_i \cdot \mathbf{1}[\hat{y}_i \neq y_i]}{\sum_{i=1}^N  \mathbf{1}[\hat{y}_i \neq y_i]} \leq \bar{p}. 
\] 
\end{theorem}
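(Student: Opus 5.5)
The plan is to apply Proposition~\ref{proposition:jensen_CSR} and then rearrange. Write $m := \mathbb{E}_{\mathcal{D}}[\mathrm{conf}_i \mid \hat{y}_i \neq y_i]$. This quantity is only defined when at least one prediction is incorrect, so I first assume $\bar{p} < 1$.

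In the degenerate case $\bar{p} = 1$, the definition gives CSR $=0$, which contradicts the hypothesis CSR $=1$. So that case is excluded, and it does not need its own argument.

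With $\bar{p} < 1$, Proposition~\ref{proposition:jensen_CSR} gives
\[
1 = \mathrm{CSR} \geq \frac{1 - \bar{p}}{1 - m}.
\]
Since every $\mathrm{conf}_i \in [0,1)$, the average $m$ also lies in $[0,1)$, so $1 - m > 0$. Multiplying both sides by $1 - m$ keeps the inequality direction and yields $1 - m \geq 1 - \bar{p}$, that is, $m \leq \bar{p}$, which is the claim.

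No step is a real obstacle. The only points needing care are the two positivity and well-definedness checks: excluding $\bar{p} = 1$, and confirming $1 - m > 0$ so the multiplication does not flip the inequality. Both follow directly from the hypotheses.
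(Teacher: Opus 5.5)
Your proposal is correct and follows the paper's proof: substitute $\mathrm{CSR}=1$ into Proposition~\ref{proposition:jensen_CSR} and rearrange. Your explicit checks (ruling out $\bar{p}=1$ because it would force $\mathrm{CSR}=0$, and noting $1-m>0$ so the multiplication preserves the inequality) are details the paper leaves implicit.
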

\begin{proof}
From Proposition \ref{proposition:jensen_CSR}, using CSR $ = 1$:  \[
\frac{1 - \bar{p}}{1- \mathbb{E}_{\mathcal{D}}[\text{conf}_i \mid \hat{y}_i \neq y_i]  } \leq  1
\]
and then $\mathbb{E}[\text{conf}_i \mid \hat{y}_i \neq y_i ]  \leq   \bar{p}.$
\end{proof}
\bigskip

\subsection{Confidence-Weighted Metrics}

\begin{theorem}[cwA as Covariance] \label{prop:cwA_covariance}
Let $\mathcal{D} = \{(x_i, y_i)\}_{i=1}^N$ and $\{\hat{y}_i, \mathrm{conf}_i\}_{i=1}^N$ be as in 
Definition~\ref{def:conf_assignment_setup}. Then:
\[
    \mathrm{cwA} = \bar{p} +  \frac{\mathrm{Cov}_{\mathcal{D}}(\mathrm{conf}_i,\, \mathbf{1}[\hat{y}_i = y_i])}{\bar{c}}
\]
where $\bar{c} = \mathbb{E}_{\mathcal{D}}[\mathrm{conf}_i]$ is the mean confidence and $\bar{p}$ is the empirical accuracy.
\end{theorem}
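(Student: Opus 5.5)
The identity follows by writing $\mathrm{cwA}$ as a ratio of empirical means and expanding the numerator with the empirical covariance. Set $a_i = \mathbf{1}[\hat{y}_i = y_i]$ and use the empirical (population, $1/N$-normalized) covariance
\[
\mathrm{Cov}_{\mathcal{D}}(\mathrm{conf}_i, a_i) = \mathbb{E}_{\mathcal{D}}[\mathrm{conf}_i\, a_i] - \mathbb{E}_{\mathcal{D}}[\mathrm{conf}_i]\,\mathbb{E}_{\mathcal{D}}[a_i].
\]
By definition of the empirical accuracy, $\mathbb{E}_{\mathcal{D}}[a_i] = \bar{p}$, and $\mathbb{E}_{\mathcal{D}}[\mathrm{conf}_i] = \bar{c}$.

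\textbf{Step 1.} Start from the first expression in Definition~\ref{def:cwA},
\[
\mathrm{cwA} = \frac{\mathbb{E}_{\mathcal{D}}[\mathrm{conf}_i\, a_i]}{\mathbb{E}_{\mathcal{D}}[\mathrm{conf}_i]} = \frac{\mathbb{E}_{\mathcal{D}}[\mathrm{conf}_i\, a_i]}{\bar{c}}.
\]
The second expression in that definition is the same ratio after multiplying numerator and denominator by $N$.

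\textbf{Step 2.} Rearrange the covariance identity above as
\[
\mathbb{E}_{\mathcal{D}}[\mathrm{conf}_i\, a_i] = \bar{c}\,\bar{p} + \mathrm{Cov}_{\mathcal{D}}(\mathrm{conf}_i, a_i),
\]
and substitute this into the numerator from Step 1. Dividing by $\bar{c}$ then gives
\[
\mathrm{cwA} = \bar{p} + \frac{\mathrm{Cov}_{\mathcal{D}}(\mathrm{conf}_i, a_i)}{\bar{c}}.
\]

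The only genuine issue is well-definedness, since the argument needs $\bar{c} > 0$. This is exactly the condition under which $\mathrm{cwA}$ itself is defined, because its denominator $\sum_i \mathrm{conf}_i$ must be nonzero. Since every $\mathrm{conf}_i \ge 0$, the condition fails only when all confidences are zero. The proof should state this assumption explicitly.

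A second point to fix in the write-up is the convention: the covariance must be the $1/N$-normalized empirical covariance, not the unbiased $1/(N-1)$ version. With the unbiased estimator the identity would need an extra factor $(N-1)/N$ on the covariance term.

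As a sanity check, uniform confidences give zero covariance and hence $\mathrm{cwA} = \bar{p}$, which agrees with the remark following Definition~\ref{def:cwA}.
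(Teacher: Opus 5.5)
Your proof is correct and follows the paper's argument essentially step for step. You expand the $1/N$-normalized empirical covariance as $\mathbb{E}_{\mathcal{D}}[\mathrm{conf}_i\,\mathbf{1}[\hat{y}_i=y_i]] - \bar{c}\,\bar{p}$, substitute it into the numerator of Definition~\ref{def:cwA}, and divide by $\bar{c}$; your remarks on needing $\bar{c}>0$ and on the $1/N$ versus $1/(N-1)$ convention are sound and make explicit what the paper leaves implicit.
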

\begin{proof}
By definition of covariance:
\[
    \mathrm{Cov}_{\mathcal{D}}(\mathrm{conf}_i,\, \mathbf{1}[\hat{y}_i = y_i]) 
    = \mathbb{E}_{\mathcal{D}}[\mathrm{conf}_i \cdot \mathbf{1}[\hat{y}_i = y_i]] - \mathbb{E}_{\mathcal{D}}[\mathrm{conf}_i] \cdot \mathbb{E}_{\mathcal{D}}[\mathbf{1}[\hat{y}_i = y_i]] = 
\]\[
	= \mathbb{E}_{\mathcal{D}}[\mathrm{conf}_i \cdot \mathbf{1}[\hat{y}_i = y_i]] - \bar{c} \bar{p}
\]
From the Definition~\ref{def:cwA}:
\[
    \mathrm{cwA} = \frac{\mathbb{E}_{\mathcal{D}}[\mathrm{conf}_i \cdot \mathbf{1}[\hat{y}_i = y_i]]}{\mathbb{E}_{\mathcal{D}}[\mathrm{conf}_i]} = \frac{\mathrm{Cov}_{\mathcal{D}}(\mathrm{conf}_i,\, \mathbf{1}[\hat{y}_i = y_i]) + \bar{c}\bar{p}}{\bar{c}}. 
\]
\end{proof}

\begin{corollary} \label{corollary:calibrated_yet_uselss}
Let $\mathcal{D} = \{(x_i, y_i)\}_{i=1}^N$ and $\{\hat{y}_i, \text{conf}_i\}_{i=1}^N$ be as in Definition \ref{def:conf_assignment_setup}. Let $\bar{p}$ be the accuracy. The flat confidence assignment: \[
\text{conf}_i = \bar{p} \quad \forall i \]
simultaneously gives perfect pointwise calibration and is completely uninformative, i.e. $\mathrm{Cov}_{\mathcal{D}}(\mathrm{conf}_i,\, \mathbf{1}[\hat{y}_i = y_i]) = 0$.
\end{corollary}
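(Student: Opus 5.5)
The plan is to verify the two claims of the corollary separately, since each follows directly from definitions already in place.

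\emph{Perfect pointwise calibration.} With $\mathrm{conf}_i = \bar{p}$ for every $i$, the only confidence value in the support is $c = \bar{p}$. For this value the set $\{i : \mathrm{conf}_i = \bar{p}\}$ is all of $\mathcal{D}$. The empirical conditional probability from Definition~\ref{def:perfect_pointwise_calibration} is therefore
\[
P_{\mathcal{D}}(\hat{y}_i = y_i \mid \mathrm{conf}_i = \bar{p}) = \frac{1}{N}\sum_{i=1}^N \mathbf{1}[\hat{y}_i = y_i] = \bar{p},
\]
which equals the confidence level, as required. For $c \neq \bar{p}$ the conditioning set is empty, so the condition is vacuous. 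I will state explicitly that the definition quantifies only over confidence values that actually occur; this is the sole point needing care. One could also add that if $\bar{p} < 1$ then $\mathrm{conf}_i < 1$, so Proposition~\ref{theorem:PerfectCal_is_CSR_one} yields $\mathrm{CSR} = 1$ as a consistency check, though this is not needed.

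\emph{No information.} A constant random variable has zero covariance with any other variable. Directly,
\[
\mathrm{Cov}_{\mathcal{D}}(\mathrm{conf}_i, \mathbf{1}[\hat{y}_i = y_i]) = \mathbb{E}_{\mathcal{D}}[\bar{p}\cdot \mathbf{1}[\hat{y}_i = y_i]] - \bar{p}\cdot\bar{p} = \bar{p}^2 - \bar{p}^2 = 0.
\]
Finally, when $\bar{p} > 0$ (so that $\bar{c} = \bar{p} \neq 0$), I will invoke Proposition~\ref{prop:cwA_covariance} to conclude $\mathrm{cwA} = \bar{p}$. This is the sense in which the profile is ``useless'': confidence weighting adds nothing over plain accuracy. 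No step is a real obstacle; the only subtlety is the vacuous-support convention in the calibration definition.
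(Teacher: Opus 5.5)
Your proof is correct. The calibration half is the paper's argument: the only value in the support is $\bar{p}$, and the empirical conditional accuracy at that value is $\bar{p}$ by definition.

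For the covariance you reverse the paper's order. The paper first notes that a constant confidence cancels in Definition~\ref{def:cwA}, giving $\mathrm{cwA} = \bar{p}$. It then reads off $\mathrm{Cov}_{\mathcal{D}}(\mathrm{conf}_i, \mathbf{1}[\hat{y}_i = y_i]) = \bar{c}(\mathrm{cwA} - \bar{p}) = 0$ from Proposition~\ref{prop:cwA_covariance}. You instead compute the covariance of a constant directly, and use the proposition only afterwards to recover $\mathrm{cwA} = \bar{p}$.

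Your order is slightly more robust. The direct covariance computation needs no hypothesis, whereas the paper's route silently requires $\bar{c} = \bar{p} > 0$ for $\mathrm{cwA}$ to be defined; when $\bar{p} = 0$ all confidences vanish and $\mathrm{cwA}$ is $0/0$. The paper's order, in exchange, ties the zero covariance directly to the ``confidence weighting adds nothing over accuracy'' reading of uninformativeness.
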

\begin{proof}
The only confidence value in the support is $\bar{p}$, and by definition of empirical accuracy: \[
P_{\mathcal{D}}(\hat{y}_i = y_i \mid \text{conf}_i = \bar{p}) = \bar{p} 
\] so the assignment is perfectly calibrated. In this case it also holds cwA $= \bar{p}$, so from \ref{prop:cwA_covariance} the $\mathrm{Cov}_{\mathcal{D}}(\mathrm{conf}_i,\, \mathbf{1}[\hat{y}_i = y_i]) = \bar{c}(cwA - \bar{p}) = 0$.
\end{proof}
\bigskip

\begin{theorem} \label{proposition:cw_basic_relations}
For all $k \in \{1, \ldots, K\}$: \[
\text{cwP}^{(k)} = \text{cwTP}^{(k)} + \text{cwFN}^{(k)} \quad \quad \text{cwN}^{(k)} = \text{cwFP}^{(k)} + \text{cwTN}^{(k)}
\]   \[
cwP^{(k)} + cwN^{(k)} = N\bar{c}    \quad \quad   \text{cwTP}^{(k)} + \text{cwFP}^{(k)} + \text{cwFN}^{(k)} + \text{cwTN}^{(k)} = N\bar c \quad \forall k.
\] 
\end{theorem}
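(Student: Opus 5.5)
The plan is to prove all four identities by a sample-by-sample partition argument. Every definition in Definition~\ref{definition:cw_CM} is a sum $\sum_{i=1}^N \mathrm{conf}_i \cdot (\text{indicator})$. So each identity follows once we show that the relevant indicator functions, evaluated on every sample $i$, add up pointwise. Once the indicator identity holds for each $i$, we multiply by $\mathrm{conf}_i$ and sum over $i$. Linearity of the finite sum then gives the weighted identity, with no further assumption on the confidences.

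First identity. Fix $k$ and a sample with $y_i = k$. Then exactly one of $\hat{y}_i = y_i$ or $\hat{y}_i \neq y_i$ holds, so
\[
\mathbf{1}[\hat{y}_i = y_i]\mathbf{1}[y_i=k] + \mathbf{1}[\hat{y}_i \neq y_i]\mathbf{1}[y_i=k] = \mathbf{1}[y_i = k].
\]
Weighting by $\mathrm{conf}_i$ and summing gives $\mathrm{cwTP}^{(k)} + \mathrm{cwFN}^{(k)} = \mathrm{cwP}^{(k)}$.

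Second identity. This is the step that needs care, because $\mathrm{cwFP}^{(k)}$ and $\mathrm{cwTN}^{(k)}$ are written with different indicator conditions. $\mathrm{cwFP}^{(k)}$ uses $\mathbf{1}[\hat{y}_i \neq y_i]\mathbf{1}[\hat{y}_i = k]$, while $\mathrm{cwTN}^{(k)}$ uses $\mathbf{1}[\hat{y}_i \neq k]\mathbf{1}[y_i \neq k]$. I will check the following claim:
\[
\mathbf{1}[\hat{y}_i \neq y_i]\mathbf{1}[\hat{y}_i = k] = \mathbf{1}[\hat{y}_i = k]\mathbf{1}[y_i \neq k].
\]
The reason is that, given $\hat{y}_i = k$, the event $\hat{y}_i \neq y_i$ is the same as $y_i \neq k$. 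With this rewriting, both terms carry the factor $\mathbf{1}[y_i\neq k]$, and they split on whether $\hat{y}_i = k$ or $\hat{y}_i \neq k$. Therefore
\[
\mathbf{1}[\hat{y}_i = k]\mathbf{1}[y_i \neq k] + \mathbf{1}[\hat{y}_i \neq k]\mathbf{1}[y_i \neq k] = \mathbf{1}[y_i\neq k],
\]
and summing with weights gives $\mathrm{cwFP}^{(k)} + \mathrm{cwTN}^{(k)} = \mathrm{cwN}^{(k)}$.

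Third identity. Since $\mathbf{1}[y_i=k] + \mathbf{1}[y_i\neq k] = 1$ for every $i$, we get
\[
\mathrm{cwP}^{(k)} + \mathrm{cwN}^{(k)} = \sum_{i=1}^N \mathrm{conf}_i = N\bar{c},
\]
using the definition of $\bar c$ as the mean confidence.

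Fourth identity. Adding the first two identities and substituting the third gives $\mathrm{cwTP}^{(k)} + \mathrm{cwFP}^{(k)} + \mathrm{cwFN}^{(k)} + \mathrm{cwTN}^{(k)} = N\bar c$.

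All steps are routine once the FP rewriting in the second identity is made explicit. Nothing depends on $K$ or on which $k$ is fixed, so the identities hold for all $k$ simultaneously.
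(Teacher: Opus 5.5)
Your proof is correct and follows the paper's argument. Each identity comes from a pointwise partition of indicators, multiplied by $\mathrm{conf}_i$ and summed, and the fourth follows from the first three; your explicit check that $\mathbf{1}[\hat{y}_i \neq y_i]\mathbf{1}[\hat{y}_i = k] = \mathbf{1}[\hat{y}_i = k]\mathbf{1}[y_i \neq k]$ justifies a step the paper uses silently, since it writes $\mathrm{cwFP}^{(k)}$ with the factor $\mathbf{1}[y_i \neq k]$ instead of the defining $\mathbf{1}[\hat{y}_i \neq y_i]$.
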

\begin{proof}
\[\text{cwTP}^{(k)} + \text{cwFN}^{(k)} = \sum_{i=1}^N \text{conf}_i^{(\hat{y}_i)} \cdot \mathbf{1}[\hat{y}_i = y_i] \cdot \mathbf{1}[y_i=k] + \sum_{i=1}^N \text{conf}_i^{(\hat{y}_i)} \cdot \mathbf{1}[\hat{y}_i \neq y_i] \cdot \mathbf{1}[y_i=k] \] \[= \sum_{i=1}^N \text{conf}_i^{(\hat{y}_i)} \cdot \mathbf{1}[y_i=k] \cdot \underbrace{(\mathbf{1}[\hat{y}_i = y_i] + \mathbf{1}[\hat{y}_i \neq y_i])}_{=1} = \sum_{i=1}^N \text{conf}_i^{(\hat{y}_i)} \cdot \mathbf{1}[y_i=k] = cwP^{(k)}  \] 
\[
\text{cwFP}^{(k)} + \text{cwTN}^{(k)} = \sum_{i=1}^N \text{conf}_i \cdot \mathbf{1}[y_i \neq k]\cdot \mathbf{1}[\hat y_i = k] \;+\; \sum_{i=1}^N \text{conf}_i \cdot \mathbf{1}[y_i \neq k]\cdot \mathbf{1}[\hat y_i \neq k] \]\[
= \sum_{i=1}^N \text{conf}_i \cdot \mathbf{1}[y_i \neq k] \cdot \underbrace{\big(\mathbf{1}[\hat y_i = k] + \mathbf{1}[\hat y_i \neq k]\big)}_{=\,1} \;=\; \sum_{i=1}^N \text{conf}_i \cdot \mathbf{1}[y_i \neq k] \;=\; \text{cwN}^{(k)}. \]
 \[
\text{cwP}^{(k)} + \text{cwN}^{(k)} = \sum_{i=1}^N \text{conf}_i^{(\hat{y}_i)} \cdot (\mathbf{1}[y_i=k] + \mathbf{1}[y_i\neq k]) = \sum_{i=1}^N \text{conf}_i^{(\hat{y}_i)} = N\bar{c} 
\] \[
\text{cwTP}^{(k)} + \text{cwFP}^{(k)} + \text{cwFN}^{(k)} + \text{cwTN}^{(k)} = \underbrace{\big(\text{cwTP}^{(k)} + \text{cwFN}^{(k)}\big)}_{=\,\text{cwP}^{(k)}} + \underbrace{\big(\text{cwFP}^{(k)} + \text{cwTN}^{(k)}\big)}_{=\,\text{cwN}^{(k)}}  \]\[
= \text{cwP}^{(k)} + \text{cwN}^{(k)} = N\bar c. \]
\end{proof}
\bigskip

\begin{theorem} \label{theorem:generalized_metrics}
For any metric $M$ that is an algebraic function of $\text{TP}^{(k)}, \text{FP}^{(k)}, \text{FN}^{(k)}, \text{TN}^{(k)}$, define its confidence-weighted generalization as: \[
\text{cw}M^{(k)} = M(\text{cwTP}^{(k)}, \text{cwFP}^{(k)}, \text{cwFN}^{(k)}, \text{cwTN}^{(k)}) 
\]
Then, all algebraic identities between classical metrics that can be derived from identities between $\text{TP}^{(k)}, \text{FP}^{(k)}, \text{FN}^{(k)}, \text{TN}^{(k)}$, also hold for their confidence-weighted counterparts.
\end{theorem}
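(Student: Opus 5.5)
The proof is a substitution argument. The key observation is that the classical quantities and their confidence-weighted counterparts are both \emph{instances} of the same abstract system. I would treat $\text{TP}^{(k)}, \text{FP}^{(k)}, \text{FN}^{(k)}, \text{TN}^{(k)}$ (together with $\text{P}^{(k)}, \text{N}^{(k)}$ and the total $N$) as formal variables. These variables are subject only to the structural relations
\[
\text{P}^{(k)} = \text{TP}^{(k)} + \text{FN}^{(k)}, \quad \text{N}^{(k)} = \text{FP}^{(k)} + \text{TN}^{(k)}, \quad \text{P}^{(k)} + \text{N}^{(k)} = T, \quad \text{TP}^{(k)} + \text{FP}^{(k)} + \text{FN}^{(k)} + \text{TN}^{(k)} = T .
\]
Here $T$ denotes the total mass. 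Classically $T = N$; in the confidence-weighted setting $T = N\bar c$.

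The first step is to make precise what ``an identity derived from identities between TP, FP, FN, TN'' means. It is an equation $F(\text{TP},\text{FP},\text{FN},\text{TN},T) = G(\ldots)$ between algebraic (rational) expressions, obtained from the structural relations by ring and field operations. Equivalently, $F - G$ lies in the ideal generated by the relations, after clearing denominators, in the rational function field over the variables. In particular its validity never uses the fact that the counts are integers or that $T = N$ specifically.

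The second step is to invoke Proposition~\ref{proposition:cw_basic_relations}. It shows that the tuple $(\text{cwTP}^{(k)}, \text{cwFP}^{(k)}, \text{cwFN}^{(k)}, \text{cwTN}^{(k)}, \text{cwP}^{(k)}, \text{cwN}^{(k)}, N\bar c)$ satisfies exactly the same relations. Hence evaluating the formal variables at this tuple defines a ring homomorphism from the polynomial ring modulo the relation ideal to $\mathbb{R}$. Any identity derived as above is a consequence of the relations, so it maps to a true identity among the real numbers. Since $\text{cw}M^{(k)}$ is by definition $M$ evaluated at this tuple, the identity holds for the confidence-weighted metrics.

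The one delicate point, which I expect to be the main obstacle, is division. Rational metrics such as precision, recall, F1 and MCC are only defined when their denominators are nonzero, and a derivation may divide by intermediate quantities. I would handle this by stating the identity on the domain where every denominator appearing in the derivation is nonzero, and checking that the substitution respects that domain. Concretely, I would write each identity as a polynomial identity $F_{\text{num}}G_{\text{den}} - G_{\text{num}}F_{\text{den}} \in I$. That membership transfers verbatim under the homomorphism, and one divides back whenever the weighted denominators are nonzero. Degenerate cases occur, for instance when $\text{cwP}^{(k)} = 0$ because all positive-class confidences vanish. In those cases both sides are undefined exactly as in the classical case with $\text{P}^{(k)} = 0$, and I would note this caveat explicitly. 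Finally, I would remark that identities relying on integrality or on $T = N$ literally do not transfer; the statement's hypothesis of being derivable from the relations excludes them, so no further argument is needed.
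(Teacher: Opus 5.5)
Your proposal is correct and takes the paper's own route: Proposition~\ref{proposition:cw_basic_relations} shows that the weighted counts satisfy the same structural relations, so any identity derived from those relations survives the substitution. Your evaluation-homomorphism and cross-multiplication formalization makes precise what the paper's one-paragraph argument leaves implicit, including the zero-denominator cases. One small slip: MCC is not rational, since its denominator is a square root. To cover the statement's ``algebraic'' metrics, also note that the weighted counts are nonnegative, like the classical ones, because $\mathrm{conf}_i \ge 0$. This is what fixes the signs when square roots are simplified, e.g.\ $\sqrt{\mathrm{TP}^2\,\mathrm{TN}^2}=\mathrm{TP}\cdot\mathrm{TN}$.
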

\begin{proof}
The only identities that relate the classic $\text{TP}^{(k)}, \text{FP}^{(k)}, \text{FN}^{(k)}, \text{TN}^{(k)}$ are the four given in Proposition \ref{proposition:cw_basic_relations}.  According to this proposition, the confidence-weighted counterparts of TP, FP, FN, TN satisfy the same four structural relations than the classic $\text{TP}^{(k)}, \text{FP}^{(k)}, \text{FN}^{(k)}, \text{TN}^{(k)}$, so any algebraic identity that holds due to structural identities of $\text{TP}^{(k)}, \text{FP}^{(k)}, \text{FN}^{(k)}, \text{TN}^{(k)}$ should also hold for the confidence-weighted metrics.
\end{proof}
\bigskip

\begin{theorem}  \label{cwa_from_standard_metrics}
\label{theorem:cwA_sum_of_TP}
The overall confidence-weighted accuracy equals the total per-class
true-positive mass, normalized by the total confidence mass:
\[
\text{cwA} \;=\; \frac{\sum_{k=1}^K \text{cwTP}^{(k)}}{N\bar c}
       \;=\; \frac{\sum_{k=1}^K \text{cwTP}^{(k)}}{\sum_{k=1}^K \text{cwP}^{(k)}}.
\]
\end{theorem}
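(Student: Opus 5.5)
The identity follows from swapping the order of summation in $\sum_k \text{cwTP}^{(k)}$ and then using the last structural relation of Proposition~\ref{proposition:cw_basic_relations}. I would first prove the left equality by working on the numerator of Definition~\ref{def:cwA}. By Definition~\ref{definition:cw_CM},
\[
\sum_{k=1}^K \text{cwTP}^{(k)} = \sum_{k=1}^K\sum_{i=1}^N \text{conf}_i\,\mathbf{1}[\hat y_i = y_i]\,\mathbf{1}[y_i=k] = \sum_{i=1}^N \text{conf}_i\,\mathbf{1}[\hat y_i = y_i]\sum_{k=1}^K \mathbf{1}[y_i=k].
\]
Since each label $y_i$ lies in $\{1,\ldots,K\}$, the inner sum $\sum_k \mathbf{1}[y_i=k]$ equals $1$. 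The whole expression therefore reduces to $\sum_i \text{conf}_i\,\mathbf{1}[\hat y_i=y_i]$, which is exactly the numerator of cwA.

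The denominator of cwA is $\sum_i \text{conf}_i = N\bar c$ by the definition of $\bar c$. Dividing the two gives the first equality.

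For the second equality I would again swap the order of summation:
\[
\sum_{k=1}^K \text{cwP}^{(k)} = \sum_{i=1}^N \text{conf}_i \sum_{k=1}^K\mathbf{1}[y_i=k] = \sum_{i=1}^N \text{conf}_i = N\bar c.
\]
Substituting this into the denominator finishes the proof.

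One point needs care. The relation $\text{cwP}^{(k)}+\text{cwN}^{(k)}=N\bar c$ from Proposition~\ref{proposition:cw_basic_relations} holds for each fixed $k$, but it does not by itself give $\sum_k \text{cwP}^{(k)} = N\bar c$. That sum has to be computed directly, using the fact that the classes $\{i: y_i=k\}$ partition the dataset.

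I do not expect a real obstacle here. The only thing to check is that the weight is the same $\text{conf}_i=\text{conf}_i^{(\hat y_i)}$ in every $k$-term, which is what lets it factor out of the sum over $k$. I would also state the implicit assumption $\bar c>0$, so that cwA is well defined.
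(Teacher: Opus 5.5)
Your proof is correct and follows the paper's argument essentially verbatim: swap the order of summation, use $\sum_{k}\mathbf{1}[y_i=k]=1$ to collapse both $\sum_k \text{cwTP}^{(k)}$ and $\sum_k \text{cwP}^{(k)}$, and divide by $N\bar c$. Your opening mention of the last relation of Proposition~\ref{proposition:cw_basic_relations} is unnecessary. As you note yourself later, the direct computation is all that is needed, and the paper does not use that proposition here either.
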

\begin{proof}
Using $\sum_{k=1}^K \mathbf{1}[y_i = k] = 1$: \[
\sum_{k=1}^K \text{cwTP}^{(k)} = \sum_{k=1}^K \sum_{i=1}^N \text{conf}_i \cdot \mathbf{1}[\hat y_i = y_i]\cdot \mathbf{1}[y_i = k] = \sum_{i=1}^N \text{conf}_i \cdot \mathbf{1}[\hat y_i = y_i]\cdot\underbrace{\sum_{k=1}^K \mathbf{1}[y_i = k]}_{=\,1} 
\] \[
= \sum_{i=1}^N \text{conf}_i \cdot \mathbf{1}[\hat y_i = y_i] \;=\; N\bar c \cdot \text{cwA}. 
\]
Dividing by $N\bar c$ yields the first equality. For the second, $\sum_{k=1}^K \text{cwP}^{(k)} = \sum_i \text{conf}_i \sum_k \mathbf{1}[y_i=k] = \sum_i \text{conf}_i = N\bar c$.
\end{proof}
\bigskip

\begin{theorem} \label{theorem:cwA_macro}
The macro-average of the per-class confidence-weighted accuracies is an
affine function of the overall confidence-weighted accuracy:
\[
\sum_{k=1}^K \text{cwAcc}^{(k)} \;=\; (K-2) \;+\; 2\,\text{cwA}
\]
In particular, for $K=2$ we recover $\text{cwAcc}^{(1)} = \text{cwAcc}^{(2)} = \text{cwA}$.
\end{theorem}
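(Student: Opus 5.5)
The plan is a direct counting argument on the confidence-weighted confusion matrix of Definition~\ref{definition:cw_CM}. By analogy with the classical per-class accuracy $\text{Acc}^{(k)} = (\text{TP}^{(k)}+\text{TN}^{(k)})/N$ and Proposition~\ref{theorem:generalized_metrics}, I take the per-class confidence-weighted accuracy to be
\[
\text{cwAcc}^{(k)} = \frac{\text{cwTP}^{(k)}+\text{cwTN}^{(k)}}{\text{cwTP}^{(k)}+\text{cwFP}^{(k)}+\text{cwFN}^{(k)}+\text{cwTN}^{(k)}} = \frac{\text{cwTP}^{(k)}+\text{cwTN}^{(k)}}{N\bar c},
\]
where the last equality is the fourth relation of Proposition~\ref{proposition:cw_basic_relations}. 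The key simplification is that this denominator does not depend on $k$. Summing over $k$ therefore reduces the claim to computing $\sum_k \text{cwTP}^{(k)}$ and $\sum_k \text{cwTN}^{(k)}$ separately.

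For the true-positive part I invoke Proposition~\ref{cwa_from_standard_metrics} directly: $\sum_k \text{cwTP}^{(k)} = N\bar c\,\text{cwA}$.

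For the true-negative part I exchange the order of summation:
\[
\sum_k \text{cwTN}^{(k)} = \sum_i \text{conf}_i \sum_k \mathbf{1}[\hat y_i\neq k]\,\mathbf{1}[y_i\neq k].
\]
The inner sum counts the classes that differ from both $\hat y_i$ and $y_i$. It equals $K-1$ if $\hat y_i = y_i$ and $K-2$ if $\hat y_i\neq y_i$, so it can be written as $(K-2)+\mathbf{1}[\hat y_i=y_i]$. Substituting this gives
\[
\sum_k \text{cwTN}^{(k)} = (K-2)N\bar c + N\bar c\,\text{cwA},
\]
again using the definition of cwA.

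Adding the two parts gives $(K-2)N\bar c + 2N\bar c\,\text{cwA}$, and dividing by $N\bar c$ yields the identity.

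For the special case $K=2$, summing alone only gives $\text{cwAcc}^{(1)}+\text{cwAcc}^{(2)} = 2\,\text{cwA}$, so I will also show the two terms are equal. With two classes, "$\hat y_i\neq 1$ and $y_i\neq 1$" means "$\hat y_i = y_i = 2$". Hence $\text{cwTN}^{(1)} = \text{cwTP}^{(2)}$, and symmetrically $\text{cwTN}^{(2)} = \text{cwTP}^{(1)}$. Each $\text{cwAcc}^{(k)}$ then equals $(\text{cwTP}^{(1)}+\text{cwTP}^{(2)})/(N\bar c) = \text{cwA}$.

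The main obstacle is not computational but definitional. The paper has not explicitly fixed $\text{cwAcc}^{(k)}$, so I must commit to the normalization by total confidence mass $N\bar c$, mirroring the classical $N$. I also note that the statement as written concerns the sum over $k$ rather than a normalized macro-average; the latter would just divide both sides by $K$. The only genuinely non-routine step is the per-sample count $(K-2)+\mathbf{1}[\hat y_i=y_i]$.
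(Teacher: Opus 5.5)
Your proof is correct and follows essentially the same route as the paper: normalize by the $k$-independent mass $N\bar c$, swap the order of summation, and count per sample. The paper counts TP and TN together, getting $K$ for correct and $K-2$ for incorrect samples, while you reuse Proposition~\ref{cwa_from_standard_metrics} for the TP part and count only the TN part. Your explicit check that $\text{cwTN}^{(1)} = \text{cwTP}^{(2)}$ and $\text{cwTN}^{(2)} = \text{cwTP}^{(1)}$ when $K=2$ is a useful addition, since the paper only asserts $\text{cwAcc}^{(1)} = \text{cwAcc}^{(2)}$ without justifying it.
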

\begin{proof}
By Proposition \ref{proposition:cw_basic_relations}, $\text{cwP}^{(k)} + \text{cwN}^{(k)} = N\bar c$, so \[
\sum_{k=1}^K \text{cwAcc}^{(k)} = \sum_{k=1}^K \frac{\text{cwTP}^{(k)} + \text{cwTN}^{(k)}}{N\bar c} = \frac{1}{N\bar c}\sum_{k=1}^K \big(\text{cwTP}^{(k)} + \text{cwTN}^{(k)}\big). 
\] Expanding each term using the standard definitions: \[
\sum_{k=1}^K\!\big(\text{cwTP}^{(k)} + \text{cwTN}^{(k)}\big) = \sum_{k=1}^K\sum_{i=1}^N \text{conf}_i\Big(\mathbf{1}[\hat y_i=k]\,\mathbf{1}[y_i=k] + \mathbf{1}[\hat y_i\neq k]\,\mathbf{1}[y_i\neq k]\Big). 
\] Swap the order of summation and analyze the inner sum over $k$ for a fixed sample $i$.  Suppose $\hat y_i = y_i$. Exactly one value of $k$ (namely $k = y_i$) satisfies $\hat y_i = k$ and $y_i = k$, contributing $1$ to the first indicator product. For the other $K-1$ values of $k$, both $\hat y_i \neq k$ and $y_i \neq k$, contributing $1$ each to the second indicator product. Total: $1 + (K-1) = K$. Now suppose $\hat y_i \neq y_i$. For $k = y_i$: $\hat y_i \neq k$ but $y_i = k$, contributes $0$. For $k = \hat y_i$: $\hat y_i = k$ but $y_i \neq k$, contributes $0$. For the remaining $K-2$ values of $k$, $\hat y_i \neq k$ and $y_i \neq k$, each contributes $1$. Total: $K-2$.
Therefore, letting $M := \sum_i \text{conf}_i\,\mathbf{1}[\hat y_i = y_i]$ denote the total confidence mass on correct predictions: \[
\sum_{k=1}^K\!\big(\text{cwTP}^{(k)} + \text{cwTN}^{(k)}\big) = K\cdot M + (K-2)\cdot(N\bar c - M) = (K-2)\,N\bar c + 2M. 
\] Since $\text{cwA} = M / (N\bar c)$, dividing by $N\bar c$ and using $\text{cwA} = M/(N\bar c)$: \[
\sum_{k=1}^K \text{cwAcc}^{(k)} = (K-2) + 2\,\text{cwA}. \]
The last statement for $K=2$, follows as $\text{cwAcc}^{(1)} = \text{cwAcc}^{(2)}$ with sum that is equal to $2\,\text{cwA}$. 
\end{proof}
\smallskip

{\bf{Remark}}:  Setting $\text{conf}_i \equiv 1$ recovers the classical identity $\sum_{k=1}^K \text{Acc}^{(k)} = (K-2) + 2\,\text{A}$, with $A$ the overall accuracy, which quantifies the well-known TN-inflation of macro-averaged per-class accuracy in multiclass classification. Proposition \ref{theorem:cwA_macro} shows that confidence weighting preserves this exact affine structure. \\
Since $K-2 \geq 0$ for $K \geq 2$, and $\text{cwA} \leq 1$: \[
\frac{1}{K}\sum_{k=1}^K \text{cwAcc}^{(k)} \;\geq\; \text{cwA}, 
\] with equality iff $K=2$ or $\text{cwA}=1$. Macro-averaged per-class accuracy is always an optimistic summary compared to overall accuracy when $K\geq 3$.  
\bigskip

The per-class accuracies $\text{Acc}^{(k)}$ are notably absent from standard evaluation pipelines. Proposition \ref{theorem:cwA_macro} explains why: their macro-average is an affine function of $\text{A} = \bar{p}$ alone, \[
\frac{1}{K}\sum_{k=1}^K \text{Acc}^{(k)} = \frac{K-2}{K} + \frac{2}{K}\,\text{A}, 
\] and is therefore dominated, for large $K$, by the combinatorial floor $(K-2)/K$, a quantity unrelated to classifier quality that reflects only the growing proportion of true negatives in each one-vs-rest decomposition. As $K\to\infty$, this floor tends to $1$ while the signal term $2\,\text{A}/K$ vanishes, so macro-averaged per-class accuracy converges to $1$ uniformly over all classifiers with bounded $\text{A}$, losing any ability to discriminate between good and bad models. The same pathology applies verbatim to $\text{cwAcc}^{(k)}$, with $\text{A}$ replaced by $\text{cwA}$: confidence weighting does not remove the TN-inflation, it merely reweights the signal. For this reason, practitioners report $\text{A}$ (or now $\text{cwA}$) directly, and rely on TN-free metrics (recall, precision, $F_1$, or balanced accuracy) when class-specific diagnostics are required.
\bigskip

The classical AUC has the well known probabilistic interpretation:  \begin{equation}  \label{auc_equation_appendix}
\text{AUC}^{(k)} =  P\left(\text{conf}^{(k)}(x^+) > \text{conf}^{(k)}(x^-)\right) + \frac{1}{2}P\left(\text{conf}^{(k)}(x^+) = \text{conf}^{(k)}(x^-)\right)
\end{equation}
where $x^+$ and $x^-$ are drawn uniformly from $\mathcal{D}^+_k$ and $\mathcal{D}^-_k$ respectively. An analogous result also holds for the confidence‑weighted version of AUC:
\smallskip

\begin{theorem} \label{thorem:AUC}
Let $\mathcal{D} = \{(x_i, y_i)\}_{i=1}^N$ with $y_i \in \{1, \ldots, K\}$, let $\hat{y}_i$ be the model's predicted class and let $\text{conf}_i^{(k)}$ be the confidence assigned to class $k$ for sample $i$. For each class $k$, define: \[
\text{cwTPR}^{(k)}(\tau) = \frac{\text{cwTP}^{(k)}(\tau)}{P^{(k)}} = \frac{\sum_{i:y_i=k, \text{conf}_i^{(k)}\geq\tau} \text{conf}_i^{(\hat{y}_i)}}{\sum_{i:y_i=k} \text{conf}_i^{(\hat{y}_i)}} 
\]\[
\text{cwFPR}^{(k)}(\tau) = \frac{\text{cwFP}^{(k)}(\tau)}{N^{(k)}} = \frac{\sum_{i:y_i\neq k, \text{conf}_i^{(k)}\geq\tau} \text{conf}_i^{(\hat{y}_i)}}{\sum_{i:y_i\neq k} \text{conf}_i^{(\hat{y}_i)}} \]
and the cw-AUC as the area under the cw-ROC curve: \[
\text{cwAUC}^{(k)} = \int_0^1 \text{cwTPR}^{(k)}(\tau) \, d\text{cwFPR}^{(k)}(\tau) 
\] Then: \[
\text{cwAUC}^{(k)} = P_w\left(\text{conf}^{(k)}(x^+) > \text{conf}^{(k)}(x^-)\right) + \frac{1}{2}P_w\left(\text{conf}^{(k)}(x^+) = \text{conf}^{(k)}(x^-)\right)
\]
where $x^+$ and $x^-$ are drawn from $\mathcal{D}^+_k$ and $\mathcal{D}^-_k$ proportionally to $\text{conf}^{(\hat{y})}$: \[
P_w(i,j) = \frac{\text{conf}_i^{(\hat{y}_i)} \cdot \text{conf}_j^{(\hat{y}_j)}}{\sum_{i' \in \mathcal{D}^+_k}\sum_{j' \in \mathcal{D}^-_k} \text{conf}_{i'}^{(\hat{y}_{i'})} \cdot \text{conf}_{j'}^{(\hat{y}_{j'})}} 
\]
\end{theorem}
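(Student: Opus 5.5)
We follow the standard proof of the probabilistic interpretation of the classical AUC (equation~\eqref{auc_equation_appendix}), replacing counting measure by the confidence-weighted measure. Write $w_i := \text{conf}_i^{(\hat{y}_i)}$, $W^+ := \sum_{i \in \mathcal{D}^+_k} w_i$ and $W^- := \sum_{j \in \mathcal{D}^-_k} w_j$, and assume $W^+, W^- > 0$. The normalized weights $w_i/W^+$ on $\mathcal{D}^+_k$ and $w_j/W^-$ on $\mathcal{D}^-_k$ are then probability measures. Their product is exactly $P_w(i,j)$, since the double-sum denominator factorizes as $W^+W^-$. Under these measures,
\[
\text{cwTPR}^{(k)}(\tau) = P_w\bigl(\text{conf}^{(k)}(x^+) \geq \tau\bigr), \qquad \text{cwFPR}^{(k)}(\tau) = P_w\bigl(\text{conf}^{(k)}(x^-) \geq \tau\bigr).
\]
So the cw-ROC curve is the ordinary ROC curve of a scorer whose positives and negatives have these two weighted empirical distributions.

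\textbf{Step 1: reduce to a finite sum.} Let $t_1 > t_2 > \dots > t_L$ be the distinct values of $\text{conf}^{(k)}$ over the dataset. The cw-ROC curve is piecewise linear, with vertices at thresholds $\tau = t_\ell$, traced from $(0,0)$ to $(1,1)$ as $\tau$ decreases. The integral $\int \text{cwTPR}\, d\text{cwFPR}$ is interpreted as the area under this polygonal curve, i.e.\ with linear interpolation at ties, as in the classical convention. Define the weighted masses at each level:
\[
a_\ell := \sum_{i \in \mathcal{D}^+_k,\ \text{conf}_i^{(k)} = t_\ell} \frac{w_i}{W^+}, \qquad b_\ell := \sum_{j \in \mathcal{D}^-_k,\ \text{conf}_j^{(k)} = t_\ell} \frac{w_j}{W^-}.
\]
On the segment for level $\ell$, cwFPR increases by $b_\ell$, while cwTPR rises linearly from $A_{\ell-1} := \sum_{m<\ell} a_m$ to $A_{\ell-1} + a_\ell$. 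The trapezoid area of that segment is therefore $b_\ell\bigl(A_{\ell-1} + \tfrac12 a_\ell\bigr)$.

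\textbf{Step 2: sum the trapezoids.} This gives
\[
\text{cwAUC}^{(k)} = \sum_\ell b_\ell \sum_{m<\ell} a_m + \frac12 \sum_\ell a_\ell b_\ell.
\]
Since $t_m > t_\ell$ exactly when $m < \ell$, expanding $a_m b_\ell$ back into sums over pairs $(i,j)$ turns the first term into $\sum_{i,j} P_w(i,j)\,\mathbf{1}[\text{conf}_i^{(k)} > \text{conf}_j^{(k)}]$. The second term becomes $\tfrac12 \sum_{i,j} P_w(i,j)\,\mathbf{1}[\text{conf}_i^{(k)} = \text{conf}_j^{(k)}]$. These are precisely the two probabilities in the claim, which proves the statement.

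\textbf{Main obstacle.} The delicate point is Step 1: making the integral $\int_0^1 \text{cwTPR}\,d\text{cwFPR}$ rigorous when there are ties. The $\tfrac12$ tie term appears only if the curve is linearly interpolated across a jump where both coordinates move at once. A plain Riemann–Stieltjes integral of the right-continuous step function would instead give the full $\mathbf{1}[\geq]$ count. I would therefore state explicitly that the area under the polygonal ROC curve is meant, the same convention that makes equation~\eqref{auc_equation_appendix} hold. I would also note that setting every $w_i \equiv 1$ recovers the classical result, which serves as a consistency check.
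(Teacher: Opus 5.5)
Your proposal is correct and takes essentially the paper's approach. The paper also sums over the jumps of cwFPR, one per negative sample $j$, and evaluates cwTPR at each jump as the average of the $\geq\tau$ and $>\tau$ sums (the trapezoid/half-tie convention you state explicitly), then expands the result into the weighted double sum over pairs. Grouping by distinct levels $t_\ell$ instead of by individual negatives is only a bookkeeping difference, and your remark that the $\tfrac12$ tie term depends on this interpolation convention makes explicit what the paper leaves implicit.
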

\begin{proof}
Each step $\Delta\text{cwFPR}(\tau)$ corresponds to a negative sample $j$ with $\tau = \text{conf}_j^{(k)}$ contributing: \[
\Delta\text{cwFPR}(\tau) = \frac{\text{conf}_j^{(\hat{y}_j)} \cdot  \mathbf{1}[y_j \neq k] }{\sum_{j \in \mathcal{D}^-_k}  \text{conf}_j^{(\hat{y}_j)}} 
\] At this step: \[ 
\text{cwTPR}(\tau) = \frac{ \frac{1}{2} \sum_{i:y_i=k,\, \text{conf}_i^{(k)} \geq \tau}\text{conf}_i^{(\hat{y}_i)} + \frac{1}{2}\sum_{i:y_i=k,\, \text{conf}_i^{(k)} > \tau}\text{conf}_i^{(\hat{y}_i)}}{\sum_{i \in \mathcal{D}^+_k} \text{conf}_i^{(\hat{y}_i)}}
\] Therefore: \[
\text{cwAUC}^{(k)} = \sum_{j \in \mathcal{D}} \frac{\text{conf}_j^{(\hat{y}_j)} \cdot  \mathbf{1}[y_j \neq k]  }{\sum_{j \in \mathcal{D}^-_k}  \text{conf}_j^{(\hat{y}_j)}} \cdot \frac{ \sum_{i:y_i=k,\, \text{conf}_i^{(k)} > \tau}\text{conf}_i^{(\hat{y}_i)} + \frac{1}{2}\sum_{i:y_i=k,\, \text{conf}_i^{(k)} = \tau}\text{conf}_i^{(\hat{y}_i)}  }{\sum_{i \in \mathcal{D}^+_k} \text{conf}_i^{(\hat{y}_i)}}  =
\] \[
 = \frac{\sum_{i \in \mathcal{D}^+_k}\sum_{j \in \mathcal{D}^-_k} \text{conf}_i^{(\hat{y}_i)} \cdot \text{conf}_j^{(\hat{y}_j)} \cdot \left(\mathbf{1}[\text{conf}_i^{(k)} > \text{conf}_j^{(k)}] + \frac{1}{2}\mathbf{1}[\text{conf}_i^{(k)} = \text{conf}_j^{(k)}]\right)}{\sum_{i \in \mathcal{D}^+_k}\sum_{j \in \mathcal{D}^-_k} \text{conf}_i^{(\hat{y}_i)} \cdot \text{conf}_j^{(\hat{y}_j)}} =
\]\[
= P_w\left(\text{conf}^{(k)}(x^+) > \text{conf}^{(k)}(x^-)\right) + \frac{1}{2}P_w\left(\text{conf}^{(k)}(x^+) = \text{conf}^{(k)}(x^-)\right)
\]
\end{proof}
\bigskip

{\bf{Remark}}:  Notice that $i \in \mathcal{D}^+_k$ and $j \in \mathcal{D}^-_k$ does not imply $\hat{y}_i = k$ or $\hat{y}_j \not= k$.
The classical AUC, the area under the ROC curve, is a purely ordinal metric that  measures whether positives are ranked above negatives. A correct ranking made with confidence $0.51$ contributes identically to one made with confidence $0.99$.
The cw-AUC replaces uniform sampling with confidence-weighted sampling proportional to $\text{conf}_i^{(\hat{y}_i)} \cdot \text{conf}_j^{(\hat{y}_j)}$.
Pairs where both samples receive high predicted-class confidence are upweighted. The model is rewarded for correct rankings it makes confidently, and penalized less for incorrect rankings it makes uncertainly. cwAUC tend to be numerically higher than the classical AUC;  A model that is more confident when it ranks correctly than when it ranks incorrectly will score higher under cwAUC. 
\bigskip

\begin{theorem} \label{thorem:AUC_diffs}
Let $\mathcal{D} = \{(x_i, y_i)\}_{i=1}^N$ with $y_i \in \{1, \ldots, K\}$, let $\hat{y}_i$ be the model's predicted class and let $\text{conf}_i^{(k)}$ be the confidence assigned to class $k$ as in Proposition \ref{thorem:AUC}. For each class $k$, define for each pair $(i,j)$ with $i \in \mathcal{D}^+_k$, $j \in \mathcal{D}^-_k$  (i.e, $y_i = k$ and $y_j \not= k$): \[
z_{ij} =\mathbf{1}[\text{conf}_i^{(k)} > \text{conf}_j^{(k)}] + \frac{1}{2}\mathbf{1}[\text{conf}_i^{(k)} = \text{conf}_j^{(k)}] \quad \text{and} \quad w_{ij} = \text{conf}_i^{(\hat{y}_i)} \cdot \text{conf}_j^{(\hat{y}_j)} 
\]  Then: \[
\text{cwAUC}^{(k)} - \text{AUC}^{(k)} = \frac{\text{Cov}(w_{ij}, z_{ij})}{\mathbb{E}[w_{ij}]}
\]
\end{theorem}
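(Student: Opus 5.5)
The plan is to express both quantities as averages over the same set of pairs and compare them. Let $\mathcal{P}_k = \mathcal{D}^+_k \times \mathcal{D}^-_k$ be the set of positive--negative pairs, with $|\mathcal{P}_k| = |\mathcal{D}^+_k|\,|\mathcal{D}^-_k|$. Throughout, $\mathbb{E}$ and $\mathrm{Cov}$ denote the uniform empirical expectation and covariance over $\mathcal{P}_k$. This is the only reading under which the statement makes sense, and it matches the sampling $x^+ \sim \text{Unif}$, $x^- \sim \text{Unif}$ in the classical AUC interpretation.

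First, by the classical pairwise interpretation \eqref{auc_equation_appendix}, $\text{AUC}^{(k)} = \frac{1}{|\mathcal{P}_k|}\sum_{(i,j)\in\mathcal{P}_k} z_{ij} = \mathbb{E}[z_{ij}]$.

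Second, by Proposition~\ref{thorem:AUC},
\[
\text{cwAUC}^{(k)} = \frac{\sum_{(i,j)\in\mathcal{P}_k} w_{ij} z_{ij}}{\sum_{(i,j)\in\mathcal{P}_k} w_{ij}} .
\]
Dividing numerator and denominator by $|\mathcal{P}_k|$ gives $\text{cwAUC}^{(k)} = \mathbb{E}[w_{ij} z_{ij}]/\mathbb{E}[w_{ij}]$. This step uses the assumption $\mathbb{E}[w_{ij}] > 0$, i.e.\ some positive and some negative sample have nonzero predicted-class confidence. That assumption is already implicit in defining $P_w$.

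The last step is the same rearrangement as in Proposition~\ref{prop:cwA_covariance}. Write $\mathbb{E}[wz] = \mathrm{Cov}(w,z) + \mathbb{E}[w]\,\mathbb{E}[z]$ and divide by $\mathbb{E}[w]$:
\[
\text{cwAUC}^{(k)} = \mathbb{E}[z_{ij}] + \frac{\mathrm{Cov}(w_{ij},z_{ij})}{\mathbb{E}[w_{ij}]}.
\]
Substituting $\mathbb{E}[z_{ij}] = \text{AUC}^{(k)}$ from the first step gives the claim.

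There is no real obstacle. The only point needing care is to check that the tie convention for $z_{ij}$ (weight $\frac12$) in Proposition~\ref{thorem:AUC} agrees with the one in \eqref{auc_equation_appendix}, so that the same $z_{ij}$ appears in both formulas. It does, so the argument reduces to the identity above.
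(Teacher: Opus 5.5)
Your proposal is correct and follows essentially the same route as the paper. The paper also writes $\text{cwAUC}^{(k)} = \mathbb{E}[w_{ij}z_{ij}]/\mathbb{E}[w_{ij}]$ and $\text{AUC}^{(k)} = \mathbb{E}[z_{ij}]$, then subtracts using the definition of covariance; your explicit choice of the uniform measure over $\mathcal{D}^+_k \times \mathcal{D}^-_k$ and the requirement $\mathbb{E}[w_{ij}]>0$ are clarifications the paper leaves implicit.
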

\begin{proof}
With this notation: \[
\text{cwAUC}^{(k)} = \frac{\mathbb{E}[w_{ij} z_{ij}]}{\mathbb{E}[w_{ij}]} \quad \text{AUC}^{(k)} = \mathbb{E}[z_{ij}]
\]
Therefore: \[
\text{cwAUC}^{(k)} - \text{AUC}^{(k)} = \frac{\mathbb{E}[w_{ij} z_{ij}]}{\mathbb{E}[w_{ij}]} - \mathbb{E}[z_{ij}] = \frac{\mathbb{E}[w_{ij} z_{ij}] - \mathbb{E}[w_{ij}]\mathbb{E}[z_{ij}]}{\mathbb{E}[w_{ij}]} = \frac{\text{Cov}(w_{ij}, z_{ij})}{\mathbb{E}[w_{ij}]} \]
\end{proof}
\bigskip

{\bf{Remark}}:   The gap cwAUC $-$ AUC $= \frac{\text{Cov}(w_{ij}, z_{ij})}{\mathbb{E}[w_{ij}]}$ is positive when the model is more confident on pairs it ranks correctly than on pairs it ranks incorrectly, zero when confidence is independent of ranking quality, and negative when the model is more confident on incorrectly ranked pairs. A model with high cwAUC $-$ AUC is not only discriminative but also knows when it is being reliably discriminative. 
\bigskip

Furthermore, we can show the following;

\begin{theorem}[AUC invariance to monotone rescaling]  \label{cor:auc_invariance}
Let $\phi : \mathbb{R} \to \mathbb{R}$ be any strictly monotone increasing function. Then:
\[
\text{AUC}^{(k)}\!\left(\phi \circ \text{conf}^{(k)}\right) = \text{AUC}^{(k)}\!\left(\text{conf}^{(k)}\right)
\]
\end{theorem}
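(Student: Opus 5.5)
The plan is to work from the pairwise representation of AUC in equation~\eqref{auc_equation_appendix}, which is the unweighted case of Proposition~\ref{thorem:AUC} (all weights equal). In that form
\[
\text{AUC}^{(k)}(\text{conf}^{(k)}) = \frac{1}{|\mathcal{D}^+_k|\,|\mathcal{D}^-_k|}\sum_{i \in \mathcal{D}^+_k}\sum_{j \in \mathcal{D}^-_k} z_{ij},
\qquad
z_{ij} = \mathbf{1}[\text{conf}_i^{(k)} > \text{conf}_j^{(k)}] + \tfrac{1}{2}\mathbf{1}[\text{conf}_i^{(k)} = \text{conf}_j^{(k)}],
\]
as in Proposition~\ref{thorem:AUC_diffs}. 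The scores enter only through the pairwise indicators $z_{ij}$, and the normalising constant $|\mathcal{D}^+_k|\,|\mathcal{D}^-_k|$ depends only on the labels. So it suffices to show that each $z_{ij}$ is unchanged when $\text{conf}^{(k)}$ is replaced by $\phi\circ\text{conf}^{(k)}$.

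This reduces to two elementary facts about a strictly increasing $\phi$. For any reals $a,b$:
\begin{itemize}
\item[(i)] $a>b \iff \phi(a)>\phi(b)$. The forward direction is the definition. For the converse, if $a\le b$ then either $a=b$, giving $\phi(a)=\phi(b)$, or $a<b$, giving $\phi(a)<\phi(b)$; in both cases $\phi(a)>\phi(b)$ fails.
\item[(ii)] $a=b \iff \phi(a)=\phi(b)$. This holds because strict monotonicity implies injectivity.
\end{itemize}
Applying (i) and (ii) with $a=\text{conf}_i^{(k)}$ and $b=\text{conf}_j^{(k)}$ shows that both indicators in $z_{ij}$ are preserved. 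Hence $z_{ij}$ is invariant for every pair, the double sum is invariant, and the equality of the two AUCs follows.

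The one step needing care is justifying the pairwise representation itself, since AUC is defined as an area under a step ROC curve. The proof of Proposition~\ref{thorem:AUC} already carries out this computation, tie handling by the $\tfrac12$ terms included. Specialising it to the unweighted case $\text{conf}^{(\hat y)}\equiv 1$ yields exactly equation~\eqref{auc_equation_appendix}, so I will invoke that result rather than redo the integral.

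Alternatively, one can argue directly on the ROC curve. The thresholds $\tau$ map bijectively onto the thresholds $\phi(\tau)$, and the sets $\{i:\text{conf}_i^{(k)}\ge\tau\}$ and $\{i:\phi(\text{conf}_i^{(k)})\ge\phi(\tau)\}$ coincide. The ROC curve, as a set of (FPR, TPR) points, is therefore identical, and so is its area.

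Finally, I will note that the argument breaks down for merely weakly monotone $\phi$, such as isotonic plateaus. There (ii) fails: new ties can be created, turning some $z_{ij}=1$ into $\tfrac12$. This is consistent with the remark in the main text about isotonic calibrators.
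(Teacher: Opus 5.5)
Your proposal is correct and is essentially the paper's proof. The paper gives the same two arguments in the opposite order: it leads with the threshold/ROC argument, specialising Proposition~\ref{thorem:AUC} to unit weights so that each FPR step is uniform and only the score ordering matters, and offers your pairwise-indicator argument via Equation~\ref{auc_equation_appendix} as the alternative. Your version is slightly more careful, since you also check that the tie indicator $\mathbf{1}[\text{conf}_i^{(k)} = \text{conf}_j^{(k)}]$ is preserved by injectivity of $\phi$, which the paper leaves implicit.
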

\begin{proof}
Setting $\text{conf}_i^{(\hat{y}_i)} = 1$ for all i in Proposition~\ref{thorem:AUC} recovers classical AUC. In this case every step $\Delta\text{FPR}(\tau) = 1/N^{(k)}$ is uniform and independent of the score values. The threshold condition $\text{conf}_i^{(k)} \geq \tau$
depends only on the ordering of scores, which $\phi$ preserves by monotonicity. Therefore AUC is unchanged. An alternative proof follows from Equation~\ref{auc_equation_appendix}, whose right-hand side is a function solely of the pairwise orderings $\mathbf{1}[\text{conf}^{(k)}(x_i) > \text{conf}^{(k)}(x_j)]$. Since $\phi$ is strictly increasing, $\phi(a) > \phi(b) \iff a > b$, so every such indicator is unchanged, and the sum is therefore identical.
\end{proof}
\bigskip

{\bf{Remark}}: Since any strictly monotone post-hoc calibration method leaves the ranking of $\text{conf}^{(k)}$ unchanged, $\text{AUC}^{(k)}$ is invariant to calibration. By contrast, $\text{cwAUC}^{(k)}$ is sensitive to the magnitudes of $\text{conf}^{(\hat{y})}$ through the pairwise weights $w_{ij}$. Therefore $\text{cwAUC}^{(k)} - \text{AUC}^{(k)}$ measures whatever discriminative value is added by calibration. \\
Two common deviations from this setting are worth noting: \\
Isotonic regression is only weakly monotone. The fitted map is piecewise constant, so distinct raw scores within a plateau are collapsed to a single calibrated value. Pairs $(x_i, x_j)$ that were strictly ordered before calibration become ties afterwards, and the strict-monotonicity hypothesis of the theorem fails. Under the standard $+\tfrac{1}{2}$ tie-breaking convention this typically induces a small but nonzero shift in $\text{AUC}^{(k)}$. \\
Per-sample renormalization in the multiclass case. Libraries such as \texttt{scikit-learn}'s \texttt{CalibratedClassifierCV} fit a separate calibrator $\phi_k$ per class and then renormalize per sample so that outputs sum to one: \[
\tilde{p}_k(x) \;=\; \frac{\phi_k(s_k(x))}{\sum_{j=1}^{K} \phi_j(s_j(x))}. 
\] The denominator depends on the scores of all other classes, so the mapping $s_k \mapsto \tilde{p}_k$ is no longer a function of $s_k$ alone and the theorem does not apply.
These two effects compound: under isotonic calibration with multiclass renormalization, the ties introduced by plateaus are resolved by the other classes' calibrated scores rather than by class $k$'s own ranking, which can produce noticeably larger deviations in $\text{AUC}^{(k)}$ than either effect alone. Invariance is recovered exactly when $\phi$ is strictly increasing (e.g., Platt scaling) and applied per class without renormalization.

\subsection{Preparation of experimental data} \label{expermentalData}

We evaluate on fifteen classification datasets, ten binary and five multiclass, drawn from the UCI Machine Learning Repository~\cite{dua2019uci}, CIFAR-10~\cite{krizhevsky2009learning}, and MNIST~\cite{lecun1998gradient}. Binary tasks include UCI Adult Income  \cite{uci_adult_income_1996}, Bank Marketing~\cite{uci_bank_marketing_2014}, Breast Cancer Wisconsin~\cite{uci_breast_cancer_1988}, CERN Higgs Boson~\cite{uci_higgs_2014}, Credit Card Default~\cite{uci_credit_approval_1987}, Pima Indians Diabetes~\cite{uci_diabetes_uci_aim94}, Indian Liver Patient Records~\cite{uci_liver_disorders_2016}, Phishing Websites~\cite{uci_phishing_websites_2012}, NATICUSdroid Android Permissions~\cite{uci_naticusdroid_2021}, and an Accelerometer Gyro Mobile Phone activity dataset~\cite{uci_accelerometer_2012}. Multiclass tasks include CIFAR-10, Covertype~\cite{uci_covertype_1998}, Obesity Levels~\cite{uci_obesity_2019}, Wine Quality~\cite{uci_wine_1992}, and MNIST.

For each dataset, a deterministic stratified 60/20/20 split produces training, validation, and test partitions. When an official train/test split is available (UCI Adult, MNIST, CIFAR-10), the official test set is retained and a validation set is carved from the training data, yielding an effective 75/25 train/validation split. All splits are performed before any shuffling, so the test partition is held out entirely from training, hyperparameter selection, and calibration.

All datasets are modelled with XGBoost~\cite{chen2016xgboost} using the histogram tree method. Binary tasks use a logistic objective and multiclass tasks use a softmax probability objective. Hyperparameters are selected by Bayesian optimisation with the Tree-structured Parzen Estimator~\cite{bergstra2011algorithms} as implemented in Optuna~\cite{akiba2019optuna}, with 60 trials evaluated via stratified 5-fold cross-validation and XGBoost early stopping (patience 30 rounds, maximum 2000 boosting rounds per fold).

Class probabilities are obtained natively from XGBoost \cite{chen2016xgboost,pedregosa2011scikitlearn}. The boosted-tree margins are mapped through the logistic sigmoid to yield $P(y = 1 | x)$ and normalised by softmax to produce a categorical distribution. These native probabilities are taken as the raw (uncalibrated) confidences. Two post-hoc calibration methods are fitted on the held-out validation set: isotonic regression~\cite{zadrozny2002transforming} and Platt scaling~\cite{platt1999probabilistic}. Results are therefore reported under three confidence scoring regimes: raw XGBoost output, isotonic regression, and Platt scaling, allowing direct comparison of how calibration affects the indicators of interest. All confidence scores are clipped to $[\varepsilon, 1-\varepsilon]$ with $\varepsilon = 10^{-8}$ prior to computation of metrics to avoid numerical overflows. 

Wine Quality Exception: Two rare classes in this dataset (quality 3 and 9) have counts in the held-out validation set too low to satisfy  \texttt{CalibratedClassifierCV}'s internal requirement of at least two samples per class per fold. Classes with too few examples were left uncalibrated. All other 14 datasets underwent regular calibration.

\subsection{Computational Resources}

Most of the experiments described in this work are computationally lightweight and reproducible on a standard desktop or laptop in seconds to a few minutes per dataset. The XGBoost training, the 60-trial Optuna hyperparameter sweep with 5-fold cross-validation complete on a single GPU (NVIDIA GeForce RTX 3050) for the small/medium tabular datasets (Adult, Bank Marketing, Breast Cancer, Credit, Diabetes, Liver, Phishing, NATICUSdroid, Accelerometer, Obesity, MNIST, and Wine) with peak GPU memory below 4 GB.  For the three more demanding datasets (CIFAR-10, Covertype, and the CERN Higgs Boson challenge) the Optuna sweep is appreciably heavier (the feature dimension and/or sample count are larger). These two sweeps were executed on a larger workstation: two servers with NVIDIA Tesla T4 GPU's, on which each experiment run completes in under 24 hours. All scripts use the same code path and configuration regardless of hardware; the only difference is wall-clock time. A single random seed is fixed across every experiment so that results are bit-reproducible on equivalent hardware.

\subsection{Additional tables and figures} \label{app:alt_tables}

\begin{table}[h]
\centering
\footnotesize
\begin{tabular}{lll}
\toprule
\textbf{Distribution} & \textbf{Sampling Procedure} & \textbf{Description} \\
\midrule
Uniform & $c \sim \mathcal{U}(0, 1)$ & Scores spread uniformly across $[0,1]$ \\
Skew High & $c \sim \text{Beta}(3.0,\ 0.5)$ & Scores concentrated near 1 \\
Skew Low & $c \sim \text{Beta}(0.5,\ 3.0)$ & Scores concentrated near 0 \\
Bimodal & $c \sim \tfrac{1}{2}\text{Beta}(0.5,\ 3.0) + \tfrac{1}{2}\text{Beta}(3.0,\ 0.5)$ & Scores concentrated near both extremes \\
Tight Hi & $c \sim \mathcal{U}(0.8,\ 1.0)$ & Scores confined to a narrow high range \\
Tight Lo & $c \sim \mathcal{U}(0.0,\ 0.2)$ & Scores confined to a narrow low range \\
Normal & $c \sim \mathcal{N}_{[0,1)}(0.7,\ 0.1^2)$ & Most scores in $(0.4,1)$, support $[0,1)$ \\
Log-Uniform Low & $c = \exp(u),\ u \sim \mathcal{U}(\log 10^{-4},\ \log(1 - 10^{-6}))$ & Density skewed toward 0 \\
Log-Uniform High & $c = 1 - \exp(u),\ u \sim \mathcal{U}(\log 10^{-6},\ \log 0.9)$ & Density skewed toward 1 \\
Bell & $c \sim \text{Beta}(5.0,\ 5.0)$ & Symmetrically clustered around $0.5$ \\
\midrule
\textbf{Mode} & $p_{\text{true}}(c)$ & \textbf{Description} \\
\midrule
Random 0.5        & $0.5$                        & Fully uninformative \\
Perfect           & $c$                          & Identity; perfectly calibrated \\
Underconf & $0.2 + 0.8c$            & Linear compression toward center \\
Underconf  & $\sqrt{c}$               & Concave push toward 1 \\
Random over $c$  & $u \sim \mathcal{U}(c,\ 1)$ & Random draw above confidence score \\
Overconf  & $1-\sqrt{(1-c)}$ & Convex pull toward 0 \\
Overconf *0.5 & $0.5c$               & Hard scaling toward 0 \\
Random under $c$  & $u \sim \mathcal{U}(0,\ c)$ & Random draw below confidence score \\
\bottomrule
\end{tabular}
\caption{Synthetic confidence score distributions and calibration modes defining the map from confidence score $c$ to true label probability $p_{\text{true}}(c)$. }
\label{tab:distributions}
\end{table}

\begin{table}[ht]
\centering
\footnotesize
\begin{tabular}{llrrrrrrrr}
\toprule
\textbf{Distribution} & $\overline{\textbf{Acc}}$ & $\overline{\textbf{cwA}}$ & $\overline{\textbf{gain}}$ & $\overline{\textbf{CSR}}$ & $\overline{\boldsymbol{\sigma}_{\mathrm{CSR}}}$ & $>1\boldsymbol{\sigma}_{\mathrm{CSR}}$ &  $>3\boldsymbol{\sigma}_{\mathrm{CSR}}$ & $\overline{\textbf{P}_{\textbf{risk}}}$ \\
\midrule
\multicolumn{9}{c}{\textit{N = 100  (avg over 100 reps)}} \\
\midrule
Uniform  & 0.4987 & 0.6626 & 32.87\% & 1.0251 & 0.3108 & 13.00\% & 3.00\% & 29.84\% \\
Skew High  & 0.8662 & 0.8956 & 21.99\% & 0.9115 & 11.6795 & 2.00\% & 0.00\% & 16.16\% \\
Skew Low  & 0.1456 & 0.3361 & 22.43\% & 0.9985 & 0.0500 & 15.00\% & 0.00\% & 32.53\% \\
Bimodal  & 0.4970 & 0.8084 & 62.04\% & 1.2019 & 7.8461 & 6.00\% & 2.00\% & 17.90\% \\
Tight Hi  & 0.9039 & 0.9075 & 3.91\% & 0.9638 & 0.6288 & 10.00\% & 1.00\% & 26.46\% \\
Tight Lo  & 0.0991 & 0.1324 & 3.75\% & 1.0004 & 0.0339 & 19.00\% & 0.00\% & 37.30\% \\
Normal  & 0.6947 & 0.7087 & 4.61\% & 1.0182 & 0.1920 & 17.00\% & 1.00\% & 35.81\% \\
Log-Uniform Low  & 0.1103 & 0.4940 & 43.39\% & 0.9918 & 0.0783 & 7.00\% & 0.00\% & 28.53\% \\
Log-Uniform High  & 0.9358 & 0.9620 & 41.75\% & 0.4842 & 27.1426 & 0.00\% & 0.00\% & 3.69\% \\
Bell  & 0.4950 & 0.5407 & 9.12\% & 1.0124 & 0.1120 & 21.00\% & 0.00\% & 37.83\% \\
\midrule
\multicolumn{9}{c}{\textit{N = 10,000  (avg over 100 reps)}} \\
\midrule
Uniform  & 0.5006 & 0.6671 & 33.33\% & 0.9993 & 0.0375 & 9.00\% & 1.00\% & 32.75\% \\
Skew High  & 0.8571 & 0.8889 & 22.25\% & 1.0297 & 16.1439 & 1.00\% & 1.00\% & 17.72\% \\
Skew Low  & 0.1423 & 0.3329 & 22.22\% & 1.0006 & 0.0050 & 18.00\% & 0.00\% & 42.14\% \\
Bimodal  & 0.5002 & 0.8096 & 61.91\% & 1.0072 & 3.6306 & 1.00\% & 0.00\% & 16.72\% \\
Tight Hi  & 0.8997 & 0.9034 & 3.69\% & 1.0056 & 0.0790 & 12.00\% & 1.00\% & 33.00\% \\
Tight Lo  & 0.1002 & 0.1332 & 3.67\% & 0.9999 & 0.0034 & 14.00\% & 0.00\% & 37.56\% \\
Normal  & 0.6986 & 0.7128 & 4.72\% & 1.0026 & 0.0190 & 21.00\% & 2.00\% & 40.26\% \\
Log-Uniform Low  & 0.1087 & 0.4989 & 43.78\% & 1.0012 & 0.0106 & 14.00\% & 1.00\% & 37.99\% \\
Log-Uniform High  & 0.9342 & 0.9611 & 40.89\% & 0.9608 & 2.7029 & 2.00\% & 1.00\% & 10.22\% \\
Bell  & 0.5003 & 0.5457 & 9.08\% & 0.9990 & 0.0112 & 14.00\% & 0.00\% & 31.20\% \\
\midrule
\multicolumn{9}{c}{\textit{N = 100,000  (avg over 100 reps)}} \\
\midrule
Uniform  & 0.5002 & 0.6668 & 33.33\% & 1.0002 & 0.0120 & 6.00\% & 1.00\% & 24.70\% \\
Skew High  & 0.8571 & 0.8888 & 22.19\% & 1.0279 & 10.1119 & 1.00\% & 0.00\% & 16.79\% \\
Skew Low  & 0.1428 & 0.3330 & 22.19\% & 1.0001 & 0.0016 & 16.00\% & 0.00\% & 39.93\% \\
Bimodal  & 0.5000 & 0.8094 & 61.87\% & 0.9920 & 5.7997 & 0.00\% & 0.00\% & 15.79\% \\
Tight Hi  & 0.8999 & 0.9036 & 3.70\% & 1.0035 & 0.0266 & 11.00\% & 1.00\% & 40.24\% \\
Tight Lo  & 0.1001 & 0.1335 & 3.71\% & 0.9999 & 0.0011 & 10.00\% & 0.00\% & 33.32\% \\
Normal  & 0.6997 & 0.7139 & 4.70\% & 0.9991 & 0.0058 & 5.00\% & 0.00\% & 33.31\% \\
Log-Uniform Low  & 0.1085 & 0.4998 & 43.90\% & 0.9999 & 0.0037 & 7.00\% & 0.00\% & 30.83\% \\
Log-Uniform High  & 0.9344 & 0.9613 & 41.09\% & 0.9409 & 0.8537 & 6.00\% & 1.00\% & 11.39\% \\
Bell  & 0.5000 & 0.5454 & 9.09\% & 1.0001 & 0.0035 & 15.00\% & 0.00\% & 36.37\% \\
\bottomrule
\end{tabular}
\caption{Results on calibrated synthetic datasets, varying dataset size ($N$), averaged over 100 repetitions. $>1\boldsymbol{\sigma}_{\mathrm{CSR}}$ and $>3\boldsymbol{\sigma}_{\mathrm{CSR}}$ give the percentage of repetitions in which $\mathrm{CSR}$ exceeds 1 by more than 1 (risk of 84.13\%) and more than 3 (risk of 99.87\% ) standard deviations respectively.}
\label{tab:report_perfect_synthetic}
\end{table}

\begin{table}[ht]
\centering
\small
\label{tab:report_2}
\begin{tabular}{lrrrrrr}
\toprule
\textbf{Dataset} & $\textbf{Acc}$ & $\textbf{cwA}$ & $\textbf{gain}$ & $\textbf{CSR}$ & $\boldsymbol{\sigma}_{\mathrm{CSR}}$ & $\textbf{P}_{\textbf{risk}}$ \\
\midrule
\multicolumn{7}{c}{\textit{no post-hoc calibration}} \\
\midrule
accelerometer & 0.9755 & 0.9807 & 21.1\% & 3.3572 & 0.8817 & 99.62\% \\
adult\_income & 0.8408 & 0.8698 & 18.2\% & 0.9517 & 0.1337 & 0.00\% \\
bank\_marketing & 0.9052 & 0.9236 & 19.4\% & 0.9169 & 0.4018 & 0.00\% \\
breast\_cancer & 0.9737 & 0.9763 & 9.8\% & 1.1729 & 2.9505 & 52.34\% \\
cifar10 & 0.5858 & 0.6673 & 19.7\% & 1.1880 & 0.0360 & 100.00\% \\
covertype & 0.9587 & 0.9693 & 25.6\% & 0.6249 & 0.5146 & 0.00\% \\
credit & 0.7948 & 0.8126 & 8.7\% & 1.3495 & 0.0419 & 100.00\% \\
diabetes & 0.6818 & 0.7119 & 9.5\% & 1.2439 & 0.1903 & 90.00\% \\
higgs\_boson & 0.8352 & 0.8606 & 15.4\% & 1.0708 & 0.0262 & 99.66\% \\
liver & 0.7350 & 0.7600 & 9.4\% & 0.8773 & 0.2034 & 0.00\% \\
mnist & 0.9765 & 0.9835 & 29.9\% & 0.3475 & 0.6893 & 0.00\% \\
naticusdroid & 0.9724 & 0.9768 & 15.8\% & 0.8651 & 0.7855 & 0.00\% \\
obesity & 0.9811 & 0.9861 & 26.6\% & 0.2114 & 0.6376 & 0.00\% \\
phishing & 0.9729 & 0.9792 & 23.5\% & 1.5214 & 10.2345 & 52.03\% \\
wine & 0.6385 & 0.6778 & 10.9\% & 1.1482 & 0.0534 & 99.72\% \\
\midrule
\multicolumn{7}{c}{\textit{isotonic}} \\
\midrule
accelerometer & 0.9755 & 0.9854 & 40.6\% & 31256.2721 & 74.4730 & 100.00\% \\
adult\_income & 0.8408 & 0.8928 & 32.6\% & 33201.5178 & 37.6852 & 100.00\% \\
bank\_marketing & 0.9052 & 0.9373 & 33.8\% & 0.6415 & 66.7860 & 0.00\% \\
breast\_cancer & 0.9737 & 0.9732 & -1.9\% & 2631578.9341 & 924.1801 & 100.00\% \\
cifar10 & 0.5858 & 0.6724 & 20.9\% & 1.0801 & 3.7418 & 50.85\% \\
covertype & 0.9587 & 0.9684 & 23.7\% & 2582.7239 & 10.3478 & 100.00\% \\
credit & 0.7948 & 0.8283 & 16.3\% & 1.0567 & 1.6670 & 51.36\% \\
diabetes & 0.6818 & 0.7388 & 17.9\% & 649352.0315 & 267.7342 & 100.00\% \\
higgs\_boson & 0.8352 & 0.8660 & 18.7\% & 2001.0498 & 6.2706 & 100.00\% \\
liver & 0.7350 & 0.7938 & 22.2\% & 854704.2294 & 452.2652 & 100.00\% \\
mnist & 0.9765 & 0.9836 & 30.3\% & 1.0226 & 71.1763 & 50.01\% \\
naticusdroid & 0.9724 & 0.9771 & 17.1\% & 34089.5707 & 92.1192 & 100.00\% \\
obesity & 0.9811 & 0.9858 & 24.7\% & 472813.2757 & 444.1677 & 100.00\% \\
phishing & 0.9729 & 0.9805 & 28.3\% & 180913.8123 & 190.6580 & 100.00\% \\
wine & 0.6385 & 0.6778 & 10.9\% & 1.1482 & 0.0534 & 99.72\% \\
\midrule
\multicolumn{7}{c}{\textit{platt}} \\
\midrule
accelerometer & 0.9755 & 0.9842 & 35.5\% & 0.7763 & 0.1364 & 0.00\% \\
adult\_income & 0.8408 & 0.8830 & 26.5\% & 0.9246 & 0.0410 & 0.00\% \\
bank\_marketing & 0.9052 & 0.9338 & 30.1\% & 0.6893 & 0.0700 & 0.00\% \\
breast\_cancer & 0.9737 & 0.9747 & 3.8\% & 2.4935 & 1.1177 & 90.93\% \\
cifar10 & 0.5858 & 0.6525 & 16.1\% & 0.9184 & 0.0136 & 0.00\% \\
covertype & 0.9587 & 0.9660 & 17.8\% & 0.6255 & 0.0229 & 0.00\% \\
credit & 0.7948 & 0.8260 & 15.2\% & 1.0025 & 0.0325 & 53.11\% \\
diabetes & 0.6818 & 0.7303 & 15.2\% & 1.1998 & 0.1878 & 85.63\% \\
higgs\_boson & 0.8352 & 0.8599 & 15.0\% & 0.9525 & 0.0150 & 0.00\% \\
liver & 0.7350 & 0.7802 & 17.0\% & 0.7508 & 0.1606 & 0.00\% \\
mnist & 0.9765 & 0.9792 & 11.7\% & 0.5692 & 0.0742 & 0.00\% \\
naticusdroid & 0.9724 & 0.9748 & 8.7\% & 0.6511 & 0.1033 & 0.00\% \\
obesity & 0.9811 & 0.9839 & 14.8\% & 0.1845 & 0.2238 & 0.00\% \\
phishing & 0.9729 & 0.9775 & 17.2\% & 0.6198 & 0.1745 & 0.00\% \\
wine & 0.6385 & 0.6778 & 10.9\% & 1.1482 & 0.0534 & 99.72\% \\
\bottomrule
\end{tabular}
\caption{Results on real datasets without post-hoc calibration and with isotonic and Platt scaling post-hoc calibration on a separated held-out set. Many datasets exhibit risky confidence profiles, particularly after isotonic calibration.}
\label{tab:report_real}
\end{table}

\begin{table}[ht]
\centering
\scriptsize
\label{tab:report_4}
\begin{tabular}{lrrrrrrrrrr}
\toprule
dataset & ECE$_{15}$ & Brier & ECD  & ->test & TCE$_{0.05}$  & T-CAL$^{1000}_{0.05}$  & ->reject & MCalDist & CWSA$_{0.5}$ & CWSA$^{+}_{0.5}$ \\
\midrule
\multicolumn{11}{c}{\textit{no post-hoc calibration}} \\
\midrule
accelerometer & 0.0089 & 0.0175 & 0.0251 & 1 & 46.7261 & 0.0001 & 1 & 0.0091 & 0.9417 & 0.9553 \\
adult\_income & 0.0303 & 0.1066 & 0.0238 & 1 & 47.6627 & 0.0006 & 1 & 0.0311 & 0.5972 & 0.6632 \\
bank\_marketing & 0.0289 & 0.0668 & 0.0553 & 1 & 35.3727 & 0.0008 & 1 & 0.0302 & 0.7724 & 0.8203 \\
breast\_cancer & 0.0254 & 0.0267 & 0.0161 & 0 & 0.0000 & 0.0039 & 0 & 0.0120 & 0.8974 & 0.9171 \\
cifar10 & 0.0388 & 0.1925 & 0.0220 & 1 & 60.2700 & 0.0015 & 1 & 0.0391 & 0.3264 & 0.4205 \\
covertype & 0.0223 & 0.0322 & -0.0468 & -1 & 83.8825 & 0.0005 & 1 & 0.0226 & 0.8416 & 0.8580 \\
credit & 0.0318 & 0.1528 & 0.0425 & 1 & 45.9500 & 0.0005 & 1 & 0.0318 & 0.4138 & 0.5095 \\
diabetes & 0.0747 & 0.1952 & 0.0559 & 0 & 4.5455 & 0.0048 & 0 & 0.0717 & 0.2685 & 0.3800 \\
higgs\_boson & 0.0112 & 0.1157 & 0.0160 & 1 & 44.7580 & 0.0001 & 1 & 0.0119 & 0.5498 & 0.6210 \\
liver & 0.1072 & 0.1810 & -0.0063 & 0 & 0.0000 & 0.0112 & 0 & 0.0542 & 0.2751 & 0.3540 \\
mnist & 0.0025 & 0.0159 & -0.0008 & 0 & 7.4600 & 0.0003 & 0 & 0.0040 & 0.9435 & 0.9531 \\
naticusdroid & 0.0062 & 0.0232 & 0.0032 & 0 & 6.4258 & 0.0010 & 1 & 0.0045 & 0.9085 & 0.9260 \\
obesity & 0.0304 & 0.0207 & -0.0498 & -1 & 4.4917 & 0.0019 & 0 & 0.0304 & 0.8961 & 0.9037 \\
phishing & 0.0058 & 0.0202 & 0.0093 & 0 & 8.7291 & 0.0019 & 1 & 0.0072 & 0.9210 & 0.9344 \\
wine & 0.0323 & 0.2051 & 0.0264 & 1 & 16.8462 & 0.0087 & 1 & 0.0411 & 0.2364 & 0.3386 \\
\midrule
\multicolumn{11}{c}{\textit{isotonic}} \\
\midrule
accelerometer & 0.0041 & 0.0136 & 0.0125 & 1 & 32.7864 & 0.0001 & 0 & 0.0055 & 0.9628 & 0.9718 \\
adult\_income & 0.0119 & 0.0916 & 0.0176 & 1 & 50.7105 & 0.0003 & 1 & 0.0120 & 0.7211 & 0.7679 \\
bank\_marketing & 0.0073 & 0.0571 & 0.0039 & 0 & 16.2418 & 0.0001 & 0 & 0.0077 & 0.8188 & 0.8455 \\
breast\_cancer & 0.0452 & 0.0426 & 0.6596 & 1 & 35.9649 & 0.0081 & 1 & 0.0438 & 0.9415 & 0.9683 \\
cifar10 & 0.0142 & 0.1909 & 0.0064 & 0 & 13.9500 & 0.0002 & 0 & 0.0173 & 0.3194 & 0.4002 \\
covertype & 0.0022 & 0.0302 & -0.0019 & -1 & 43.0738 & 0.0000 & 1 & 0.0041 & 0.8801 & 0.8997 \\
credit & 0.0155 & 0.1384 & 0.0063 & 0 & 18.4000 & 0.0005 & 1 & 0.0213 & 0.4952 & 0.5843 \\
diabetes & 0.0778 & 0.1840 & 0.1987 & 1 & 25.3247 & 0.0021 & 0 & 0.0826 & 0.4328 & 0.5326 \\
higgs\_boson & 0.0041 & 0.1117 & 0.0041 & 1 & 32.7320 & 0.0001 & 0 & 0.0069 & 0.5888 & 0.6555 \\
liver & 0.1019 & 0.1716 & 0.1792 & 1 & 7.6923 & 0.0055 & 0 & 0.0854 & 0.4215 & 0.4913 \\
mnist & 0.0037 & 0.0158 & 0.0052 & 1 & 9.8900 & 0.0004 & 0 & 0.0036 & 0.9483 & 0.9580 \\
naticusdroid & 0.0058 & 0.0233 & 0.0091 & 1 & 16.9763 & 0.0006 & 1 & 0.0065 & 0.9135 & 0.9305 \\
obesity & 0.0282 & 0.0225 & 0.0711 & 1 & 29.3144 & 0.0024 & 1 & 0.0269 & 0.9183 & 0.9268 \\
phishing & 0.0108 & 0.0198 & 0.0304 & 1 & 33.7404 & 0.0009 & 0 & 0.0103 & 0.9312 & 0.9432 \\
wine & 0.0323 & 0.2051 & 0.0264 & 1 & 16.8462 & 0.0087 & 1 & 0.0411 & 0.2364 & 0.3386 \\
\midrule
\multicolumn{11}{c}{\textit{platt}} \\
\midrule
accelerometer & 0.0038 & 0.0141 & -0.0007 & 0 & 61.3846 & 0.0006 & 1 & 0.0124 & 0.9535 & 0.9642 \\
adult\_income & 0.0194 & 0.0947 & 0.0065 & 0 & 70.5445 & 0.0002 & 1 & 0.0309 & 0.6792 & 0.7377 \\
bank\_marketing & 0.0173 & 0.0589 & 0.0003 & 0 & 84.2437 & 0.0007 & 1 & 0.0337 & 0.8108 & 0.8474 \\
breast\_cancer & 0.0215 & 0.0278 & 0.0281 & 0 & 0.0000 & 0.0023 & 0 & 0.0137 & 0.8957 & 0.9186 \\
cifar10 & 0.0647 & 0.1972 & -0.0331 & -1 & 63.4600 & 0.0004 & 1 & 0.0639 & 0.2186 & 0.3014 \\
covertype & 0.0093 & 0.0313 & -0.0044 & -1 & 99.7048 & 0.0000 & 1 & 0.0203 & 0.8816 & 0.9065 \\
credit & 0.0153 & 0.1382 & -0.0002 & 0 & 14.4167 & 0.0007 & 1 & 0.0205 & 0.4867 & 0.5772 \\
diabetes & 0.0679 & 0.1794 & 0.0300 & 0 & 0.0000 & 0.0031 & 0 & 0.0634 & 0.3490 & 0.4494 \\
higgs\_boson & 0.0266 & 0.1139 & 0.0034 & 0 & 77.7020 & 0.0002 & 1 & 0.0294 & 0.5738 & 0.6525 \\
liver & 0.1111 & 0.1735 & -0.0394 & 0 & 3.4188 & 0.0153 & 0 & 0.0988 & 0.3606 & 0.4387 \\
mnist & 0.0095 & 0.0192 & -0.0148 & -1 & 94.8300 & 0.0006 & 1 & 0.0280 & 0.9220 & 0.9393 \\
naticusdroid & 0.0090 & 0.0243 & -0.0035 & 0 & 80.1261 & 0.0006 & 1 & 0.0205 & 0.9076 & 0.9292 \\
obesity & 0.0422 & 0.0178 & -0.1080 & -1 & 22.2222 & 0.0013 & 1 & 0.0392 & 0.8650 & 0.8765 \\
phishing & 0.0080 & 0.0206 & -0.0123 & 0 & 46.7662 & 0.0017 & 1 & 0.0158 & 0.9163 & 0.9330 \\
wine & 0.0323 & 0.2051 & 0.0264 & 1 & 16.8462 & 0.0087 & 1 & 0.0411 & 0.2364 & 0.3386 \\
\bottomrule
\end{tabular}
\caption{Various other common metrics including the calibration test for ECD (with 0 for calibrated, -1 for underconfident and 1 for overconfident, using significance level of 0.05 and 5000 bootstrap simulations) and the adaptive T-CAL$^{1000}_{0.05}$ calibration reject test (1 for reject, 0 for accept, with 0.05 significance level and 1000 Monte Carlo simulations per bin level) for the same datasets without post-hoc calibration and with isotonic and Platt scaling post-hoc calibration.}
\label{tab:alt_indicators}
\end{table}

\begin{table}[ht]
\centering
\tiny
\begin{tabular}{llrrrrrrrr}
\toprule
\textbf{Distribution} & \textbf{Calibration} & $\overline{\textbf{Acc}}$ & $\overline{\textbf{cwA}}$ & $\overline{\textbf{gain}}$ & $\overline{\textbf{CSR}}$ & $\overline{\boldsymbol{\sigma}_{\mathrm{CSR}}}$ & $>1\boldsymbol{\sigma}_{\mathrm{CSR}}$ &  $>3\boldsymbol{\sigma}_{\mathrm{CSR}}$ & $\overline{\textbf{P}_{\textbf{risk}}}$ \\
\midrule
\multicolumn{10}{c}{\textit{random}} \\
\midrule
Uniform & Random 0.5 & 0.5007 & 0.5001 & -0.13\% & 7.2495 & 0.1045 & 100.00\% & 100.00\% & 100.00\% \\
Skew High & Random 0.5 & 0.5032 & 0.5027 & -0.08\% & 138863.1123 & 88.6478 & 100.00\% & 100.00\% & 100.00\% \\
Skew Low & Random 0.5 & 0.5027 & 0.5014 & -0.25\% & 0.6216 & 0.0157 & 0.00\% & 0.00\% & 0.00\% \\
Bimodal & Random 0.5 & 0.5000 & 0.4997 & -0.06\% & 296252.2417 & 5.6867 & 100.00\% & 100.00\% & 100.00\% \\
Tight Hi & Random 0.5 & 0.5010 & 0.5009 & -0.00\% & 37.2159 & 0.2336 & 100.00\% & 100.00\% & 100.00\% \\
Tight Lo & Random 0.5 & 0.4987 & 0.4991 & 0.07\% & 0.5591 & 0.0107 & 0.00\% & 0.00\% & 0.00\% \\
Normal & Random 0.5 & 0.4982 & 0.4985 & 0.05\% & 2.0802 & 0.0560 & 100.00\% & 100.00\% & 100.00\% \\
Log-Uniform Low & Random 0.5 & 0.4997 & 0.4987 & -0.21\% & 0.9792 & 0.0300 & 22.00\% & 19.00\% & 25.58\% \\
Log-Uniform High & Random 0.5 & 0.5029 & 0.5028 & -0.01\% & 36163.8773 & 8.5918 & 100.00\% & 100.00\% & 100.00\% \\
Bell & Random 0.5 & 0.4978 & 0.4973 & -0.10\% & 1.1266 & 0.0352 & 99.00\% & 63.00\% & 99.18\% \\
\midrule
\multicolumn{10}{c}{\textit{perfect}} \\
\midrule
Uniform & Perfect & 0.4977 & 0.6638 & 33.08\% & 1.0062 & 0.1045 & 6.00\% & 1.00\% & 32.52\% \\
Skew High & Perfect & 0.8571 & 0.8886 & 22.04\% & 1.0249 & 88.6478 & 1.00\% & 1.00\% & 13.23\% \\
Skew Low & Perfect & 0.1405 & 0.3274 & 21.76\% & 1.0032 & 0.0157 & 24.00\% & 0.00\% & 41.65\% \\
Bimodal & Perfect & 0.5000 & 0.8094 & 61.90\% & 0.9768 & 5.6867 & 0.00\% & 0.00\% & 16.19\% \\
Tight Hi & Perfect & 0.9006 & 0.9043 & 3.78\% & 0.9719 & 0.2336 & 9.00\% & 0.00\% & 24.79\% \\
Tight Lo & Perfect & 0.0999 & 0.1336 & 3.75\% & 0.9998 & 0.0107 & 16.00\% & 1.00\% & 37.82\% \\
Normal & Perfect & 0.6999 & 0.7139 & 4.67\% & 0.9994 & 0.0560 & 17.00\% & 1.00\% & 35.29\% \\
Log-Uniform Low & Perfect & 0.1068 & 0.4944 & 43.42\% & 0.9990 & 0.0300 & 6.00\% & 1.00\% & 33.33\% \\
Log-Uniform High & Perfect & 0.9333 & 0.9608 & 41.21\% & 0.7147 & 8.5918 & 1.00\% & 0.00\% & 9.32\% \\
Bell & Perfect & 0.4987 & 0.5443 & 9.11\% & 0.9997 & 0.0352 & 17.00\% & 0.00\% & 39.16\% \\
\midrule
\multicolumn{10}{c}{\textit{underconfident}} \\
\midrule
Uniform & Underconf $0.2+0.8c$ & 0.6019 & 0.7353 & 33.54\% & 0.7913 & 0.1045 & 0.00\% & 0.00\% & 0.59\% \\
Skew High & Underconf $0.2+0.8c$ & 0.8850 & 0.9102 & 21.91\% & 0.7617 & 88.6478 & 0.00\% & 0.00\% & 5.71\% \\
Skew Low & Underconf $0.2+0.8c$ & 0.3122 & 0.4623 & 21.85\% & 0.8029 & 0.0157 & 0.00\% & 0.00\% & 0.00\% \\
Bimodal & Underconf $0.2+0.8c$ & 0.5994 & 0.8480 & 62.10\% & 0.7849 & 5.6867 & 0.00\% & 0.00\% & 3.39\% \\
Tight Hi & Underconf $0.2+0.8c$ & 0.9199 & 0.9229 & 3.73\% & 0.7929 & 0.2336 & 3.00\% & 0.00\% & 6.06\% \\
Tight Lo & Underconf $0.2+0.8c$ & 0.2824 & 0.3085 & 3.63\% & 0.7971 & 0.0107 & 0.00\% & 0.00\% & 0.00\% \\
Normal & Underconf $0.2+0.8c$ & 0.7580 & 0.7695 & 4.76\% & 0.8012 & 0.0560 & 0.00\% & 0.00\% & 0.00\% \\
Log-Uniform Low & Underconf $0.2+0.8c$ & 0.2824 & 0.5981 & 44.04\% & 0.8033 & 0.0300 & 0.00\% & 0.00\% & 0.00\% \\
Log-Uniform High & Underconf $0.2+0.8c$ & 0.9480 & 0.9694 & 41.21\% & 0.6383 & 8.5918 & 1.00\% & 0.00\% & 4.85\% \\
Bell & Underconf $0.2+0.8c$ & 0.5982 & 0.6350 & 9.15\% & 0.8016 & 0.0352 & 0.00\% & 0.00\% & 0.00\% \\
Uniform & Underconf $\sqrt{c}$ & 0.6670 & 0.8006 & 40.14\% & 0.6127 & 0.1045 & 0.00\% & 0.00\% & 0.77\% \\
Skew High & Underconf $\sqrt{c}$ & 0.9202 & 0.9396 & 24.30\% & 0.5230 & 88.6478 & 1.00\% & 1.00\% & 2.03\% \\
Skew Low & Underconf $\sqrt{c}$ & 0.3137 & 0.5482 & 34.20\% & 0.7790 & 0.0157 & 0.00\% & 0.00\% & 0.00\% \\
Bimodal & Underconf $\sqrt{c}$ & 0.6169 & 0.8834 & 69.59\% & 0.6306 & 5.6867 & 0.00\% & 0.00\% & 0.00\% \\
Tight Hi & Underconf $\sqrt{c}$ & 0.9487 & 0.9506 & 3.66\% & 0.4973 & 0.2336 & 0.00\% & 0.00\% & 0.65\% \\
Tight Lo & Underconf $\sqrt{c}$ & 0.2970 & 0.3562 & 8.44\% & 0.7769 & 0.0107 & 0.00\% & 0.00\% & 0.00\% \\
Normal & Underconf $\sqrt{c}$ & 0.8350 & 0.8433 & 5.01\% & 0.5439 & 0.0560 & 0.00\% & 0.00\% & 0.00\% \\
Log-Uniform Low & Underconf $\sqrt{c}$ & 0.2135 & 0.6645 & 57.37\% & 0.8515 & 0.0300 & 0.00\% & 0.00\% & 0.00\% \\
Log-Uniform High & Underconf $\sqrt{c}$ & 0.9615 & 0.9787 & 44.70\% & 0.3393 & 8.5918 & 0.00\% & 0.00\% & 1.91\% \\
Bell & Underconf $\sqrt{c}$ & 0.6949 & 0.7270 & 10.55\% & 0.6002 & 0.0352 & 0.00\% & 0.00\% & 0.00\% \\
Uniform & Random over c & 0.7503 & 0.8331 & 33.18\% & 0.5022 & 0.1045 & 0.00\% & 0.00\% & 0.00\% \\
Skew High & Random over c & 0.9278 & 0.9438 & 22.13\% & 0.4553 & 88.6478 & 0.00\% & 0.00\% & 1.01\% \\
Skew Low & Random over c & 0.5727 & 0.6693 & 22.63\% & 0.4975 & 0.0157 & 0.00\% & 0.00\% & 0.00\% \\
Bimodal & Random over c & 0.7491 & 0.9049 & 62.11\% & 0.4865 & 5.6867 & 0.00\% & 0.00\% & 0.64\% \\
Tight Hi & Random over c & 0.9493 & 0.9512 & 3.74\% & 0.5199 & 0.2336 & 1.00\% & 1.00\% & 1.55\% \\
Tight Lo & Random over c & 0.5496 & 0.5656 & 3.57\% & 0.5004 & 0.0107 & 0.00\% & 0.00\% & 0.00\% \\
Normal & Random over c & 0.8488 & 0.8559 & 4.69\% & 0.5042 & 0.0560 & 0.00\% & 0.00\% & 0.00\% \\
Log-Uniform Low & Random over c & 0.5523 & 0.7480 & 43.72\% & 0.5036 & 0.0300 & 0.00\% & 0.00\% & 0.00\% \\
Log-Uniform High & Random over c & 0.9665 & 0.9802 & 41.10\% & 0.5388 & 8.5918 & 1.00\% & 0.00\% & 2.80\% \\
Bell & Random over c & 0.7486 & 0.7719 & 9.29\% & 0.5004 & 0.0352 & 0.00\% & 0.00\% & 0.00\% \\
\bottomrule
\end{tabular}
\caption{Results for all distributions and calibration modes, averaged over 100 repetitions, N = 1000. Part A. }
\label{tab:report_all_indicators_A}
\end{table}

\begin{table}[ht]
\centering
\tiny
\begin{tabular}{llrrrrrrrr}
\toprule
\textbf{Distribution} & \textbf{Calibration} & $\overline{\textbf{Acc}}$ & $\overline{\textbf{cwA}}$ & $\overline{\textbf{gain}}$ & $\overline{\textbf{CSR}}$ & $\overline{\boldsymbol{\sigma}_{\mathrm{CSR}}}$ & $>1\boldsymbol{\sigma}_{\mathrm{CSR}}$ &  $>3\boldsymbol{\sigma}_{\mathrm{CSR}}$ & $\overline{\textbf{P}_{\textbf{risk}}}$ \\
\midrule
\multicolumn{10}{c}{\textit{overconfident}} \\
\midrule
Uniform & Overconf $1 - \sqrt{1 - c}$ & 0.3345 & 0.4677 & 20.03\% & 1.9643 & 0.1045 & 100.00\% & 96.00\% & 99.94\% \\
Skew High & Overconf $1 - \sqrt{1 - c}$ & 0.6860 & 0.7249 & 12.39\% & 10.4461 & 88.6478 & 74.00\% & 26.00\% & 89.92\% \\
Skew Low & Overconf $1 - \sqrt{1 - c}$ & 0.0804 & 0.1966 & 12.65\% & 1.1012 & 0.0157 & 100.00\% & 100.00\% & 100.00\% \\
Bimodal & Overconf $1 - \sqrt{1 - c}$ & 0.3849 & 0.6530 & 43.62\% & 5.7695 & 5.6867 & 68.00\% & 30.00\% & 87.42\% \\
Tight Hi & Overconf $1 - \sqrt{1 - c}$ & 0.7014 & 0.7080 & 2.24\% & 4.4671 & 0.2336 & 100.00\% & 100.00\% & 100.00\% \\
Tight Lo & Overconf $1 - \sqrt{1 - c}$ & 0.0517 & 0.0699 & 1.92\% & 1.0554 & 0.0107 & 100.00\% & 99.00\% & 100.00\% \\
Normal & Overconf $1 - \sqrt{1 - c}$ & 0.4622 & 0.4760 & 2.57\% & 1.9201 & 0.0560 & 100.00\% & 100.00\% & 100.00\% \\
Log-Uniform Low & Overconf $1 - \sqrt{1 - c}$ & 0.0655 & 0.3269 & 27.99\% & 1.1650 & 0.0300 & 99.00\% & 85.00\% & 99.49\% \\
Log-Uniform High & Overconf $1 - \sqrt{1 - c}$ & 0.8614 & 0.8964 & 25.28\% & 134.9216 & 8.5918 & 96.00\% & 79.00\% & 98.05\% \\
Bell & Overconf $1 - \sqrt{1 - c}$ & 0.2993 & 0.3322 & 4.70\% & 1.4764 & 0.0352 & 100.00\% & 100.00\% & 100.00\% \\
Uniform & Overconf $0.5c$ & 0.2510 & 0.3344 & 11.15\% & 8.8308 & 0.1045 & 100.00\% & 100.00\% & 100.00\% \\
Skew High & Overconf $0.5c$ & 0.4277 & 0.4436 & 2.78\% & 57168.2115 & 88.6478 & 100.00\% & 100.00\% & 100.00\% \\
Skew Low & Overconf $0.5c$ & 0.0713 & 0.1647 & 10.06\% & 1.1243 & 0.0157 & 100.00\% & 100.00\% & 100.00\% \\
Bimodal & Overconf $0.5c$ & 0.2511 & 0.4061 & 20.71\% & 65437.4015 & 5.6867 & 100.00\% & 100.00\% & 100.00\% \\
Tight Hi & Overconf $0.5c$ & 0.4516 & 0.4534 & 0.33\% & 28.5956 & 0.2336 & 100.00\% & 100.00\% & 100.00\% \\
Tight Lo & Overconf $0.5c$ & 0.0500 & 0.0665 & 1.74\% & 1.0576 & 0.0107 & 100.00\% & 100.00\% & 100.00\% \\
Normal & Overconf $0.5c$ & 0.3524 & 0.3594 & 1.08\% & 2.5909 & 0.0560 & 100.00\% & 100.00\% & 100.00\% \\
Log-Uniform Low & Overconf $0.5c$ & 0.0545 & 0.2534 & 21.05\% & 1.5712 & 0.0300 & 100.00\% & 99.00\% & 100.00\% \\
Log-Uniform High & Overconf $0.5c$ & 0.4683 & 0.4823 & 2.63\% & 37298.3427 & 8.5918 & 100.00\% & 100.00\% & 100.00\% \\
Bell & Overconf $0.5c$ & 0.2478 & 0.2700 & 2.96\% & 1.6257 & 0.0352 & 100.00\% & 100.00\% & 100.00\% \\
Uniform & Random under c & 0.2487 & 0.3321 & 11.10\% & 6.1840 & 0.1045 & 100.00\% & 100.00\% & 100.00\% \\
Skew High & Random under c & 0.4293 & 0.4450 & 2.75\% & 116891.8558 & 88.6478 & 99.00\% & 99.00\% & 99.53\% \\
Skew Low & Random under c & 0.0711 & 0.1671 & 10.35\% & 1.1225 & 0.0157 & 100.00\% & 100.00\% & 100.00\% \\
Bimodal & Random under c & 0.2505 & 0.4061 & 20.78\% & 23900.3248 & 5.6867 & 100.00\% & 98.00\% & 99.84\% \\
Tight Hi & Random under c & 0.4502 & 0.4521 & 0.34\% & 29.2531 & 0.2336 & 100.00\% & 100.00\% & 100.00\% \\
Tight Lo & Random under c & 0.0508 & 0.0687 & 1.89\% & 1.0565 & 0.0107 & 100.00\% & 100.00\% & 100.00\% \\
Normal & Random under c & 0.3500 & 0.3567 & 1.04\% & 2.5731 & 0.0560 & 100.00\% & 100.00\% & 100.00\% \\
Log-Uniform Low & Random under c & 0.0550 & 0.2537 & 21.04\% & 1.6221 & 0.0300 & 100.00\% & 100.00\% & 100.00\% \\
Log-Uniform High & Random under c & 0.4664 & 0.4798 & 2.52\% & 36470.1166 & 8.5918 & 100.00\% & 100.00\% & 100.00\% \\
Bell & Random under c & 0.2495 & 0.2718 & 2.97\% & 1.6220 & 0.0352 & 100.00\% & 100.00\% & 100.00\% \\
\bottomrule
\end{tabular}
\small
\caption{Results for all distributions and calibration modes, averaged over 100 repetitions, N = 1000. Part B. }
\label{tab:report_all_indicators_B}
\end{table}

\begin{table}[ht]
\centering
\footnotesize
\begin{tabular}{lrr}
\toprule
\textbf{Calibration} & \textbf{cwAUC $>$ AUC} & \textbf{cwAUC $<$ AUC} \\
\midrule
Random 0.5 & 0 & 10 \\
Perfect & 9 & 1 \\
Underconf $0.2+0.8c$ & 9 & 1 \\
Underconf $c^{0.5}$ & 9 & 1 \\
Random over c & 10 & 0 \\
Overconf $1- \sqrt{1-c}$ & 8 & 2 \\
Overconf $\sqrt{c}$ & 2 & 8 \\
Random under c & 5 & 5 \\
\bottomrule
\end{tabular}
\caption{Count of distributions where cwAUC (macro) exceeds or falls below AUC (macro), across all calibration modes. }
\label{tab:cwauc_summary}
\end{table}

\begin{table}[ht]
\centering
\footnotesize
\begin{tabular}{lrrrrrrrr}
\toprule
\textbf{Distribution} & \textbf{Acc} & \textbf{cwA} & $\boldsymbol{\Delta}$\textbf{cwA} & \textbf{gain} & \textbf{AUC} & \textbf{cwAUC} & $\boldsymbol{\Delta}$\textbf{AUC } & \textbf{AUC gain} \\
\midrule
Uniform & 0.5130 & 0.6949 & +0.1819 & 37.34\% & 0.6804 & 0.8142 & +0.1338 & 41.85\% \\
Skew High & 0.8510 & 0.8799 & +0.0289 & 19.43\% & 0.9324 & 0.9468 & +0.0143 & 21.18\% \\
Skew Low & 0.1410 & 0.3484 & +0.2074 & 24.15\% & 0.5025 & 0.5311 & +0.0286 & 5.75\% \\
Bimodal & 0.5010 & 0.8237 & +0.3227 & 64.67\% & 0.7201 & 0.9274 & +0.2073 & 74.06\% \\
Tight Hi & 0.8990 & 0.9023 & +0.0033 & 3.29\% & 0.9297 & 0.9326 & +0.0030 & 4.24\% \\
Tight Lo & 0.1000 & 0.1324 & +0.0324 & 3.60\% & 0.5000 & 0.4987 & -0.0013 & -0.27\% \\
Normal & 0.7240 & 0.7372 & +0.0132 & 4.78\% & 0.7776 & 0.7904 & +0.0128 & 5.75\% \\
Log-Uniform Low & 0.1090 & 0.5007 & +0.3917 & 43.96\% & 0.5081 & 0.6775 & +0.1693 & 34.43\% \\
Log-Uniform High & 0.9470 & 0.9692 & +0.0222 & 41.87\% & 0.9917 & 0.9940 & +0.0023 & 28.19\% \\
Bell & 0.4900 & 0.5371 & +0.0471 & 9.24\% & 0.5704 & 0.5996 & +0.0292 & 6.79\% \\
\bottomrule
\end{tabular}
\caption{Confidence-weighted accuracy and macro-averaged AUC for synthetic distributions under perfect calibration. }
\label{tab:diffs_synthetic}
\end{table}

\begin{table}[ht]
\centering
\footnotesize
\begin{tabular}{lrrrrrrrr}
\toprule
\textbf{Dataset} & \textbf{Acc} & \textbf{cwA} & $\boldsymbol{\Delta}$\textbf{cwA} & \textbf{gain} & \textbf{AUC} & \textbf{cwAUC} & $\boldsymbol{\Delta}$\textbf{AUC } & \textbf{AUC gain} \\
\midrule
\multicolumn{9}{c}{\textit{no post-hoc calibration}} \\
\midrule
accelerometer & 0.9755 & 0.9807 & +0.0052 & 21.13\% & 0.9364 & 0.9312 & -0.0052 & -7.57\% \\
adult\_income & 0.8408 & 0.8698 & +0.0290 & 18.21\% & 0.9251 & 0.9433 & +0.0183 & 24.37\% \\
bank\_marketing & 0.9052 & 0.9236 & +0.0184 & 19.43\% & 0.9474 & 0.9576 & +0.0102 & 19.40\% \\
breast\_cancer & 0.9737 & 0.9763 & +0.0026 & 9.79\% & 0.9897 & 0.9823 & -0.0074 & -41.92\% \\
cifar10 & 0.5858 & 0.6673 & +0.0815 & 19.69\% & 0.9195 & 0.9406 & +0.0211 & 26.17\% \\
covertype & 0.9587 & 0.9693 & +0.0106 & 25.61\% & 0.9983 & 0.9988 & +0.0005 & 29.34\% \\
credit & 0.7948 & 0.8126 & +0.0178 & 8.65\% & 0.7625 & 0.7729 & +0.0103 & 4.36\% \\
diabetes & 0.6818 & 0.7119 & +0.0301 & 9.46\% & 0.7804 & 0.7973 & +0.0169 & 7.72\% \\
higgs\_boson & 0.8352 & 0.8606 & +0.0253 & 15.38\% & 0.9101 & 0.9267 & +0.0166 & 18.49\% \\
liver & 0.7350 & 0.7600 & +0.0250 & 9.44\% & 0.7597 & 0.7787 & +0.0189 & 7.87\% \\
mnist & 0.9765 & 0.9835 & +0.0070 & 29.86\% & 0.9996 & 0.9997 & +0.0001 & 18.84\% \\
naticusdroid & 0.9724 & 0.9768 & +0.0044 & 15.84\% & 0.9940 & 0.9947 & +0.0007 & 12.16\% \\
obesity & 0.9811 & 0.9861 & +0.0050 & 26.57\% & 0.9988 & 0.9981 & -0.0007 & -36.05\% \\
phishing & 0.9729 & 0.9792 & +0.0064 & 23.46\% & 0.9960 & 0.9962 & +0.0002 & 3.93\% \\
wine & 0.6385 & 0.6778 & +0.0393 & 10.88\% & 0.8174 & 0.8342 & +0.0168 & 9.20\% \\
\midrule
\multicolumn{9}{c}{\textit{isotonic}} \\
\midrule
accelerometer & 0.9755 & 0.9854 & +0.0100 & 40.57\% & 0.9366 & 0.9220 & -0.0146 & -18.69\% \\
adult\_income & 0.8408 & 0.8928 & +0.0519 & 32.64\% & 0.9244 & 0.9465 & +0.0222 & 29.34\% \\
bank\_marketing & 0.9052 & 0.9373 & +0.0321 & 33.81\% & 0.9468 & 0.9594 & +0.0126 & 23.73\% \\
breast\_cancer & 0.9737 & 0.9732 & -0.0005 & -1.93\% & 0.9570 & 0.9626 & +0.0056 & 12.97\% \\
cifar10 & 0.5858 & 0.6724 & +0.0866 & 20.91\% & 0.9190 & 0.9417 & +0.0227 & 28.05\% \\
covertype & 0.9587 & 0.9684 & +0.0098 & 23.65\% & 0.9982 & 0.9986 & +0.0004 & 22.90\% \\
credit & 0.7948 & 0.8283 & +0.0335 & 16.32\% & 0.7604 & 0.7494 & -0.0110 & -4.40\% \\
diabetes & 0.6818 & 0.7388 & +0.0570 & 17.91\% & 0.7675 & 0.7935 & +0.0260 & 11.18\% \\
higgs\_boson & 0.8352 & 0.8660 & +0.0308 & 18.71\% & 0.9099 & 0.9263 & +0.0164 & 18.20\% \\
liver & 0.7350 & 0.7938 & +0.0588 & 22.19\% & 0.7270 & 0.7547 & +0.0278 & 10.16\% \\
mnist & 0.9765 & 0.9836 & +0.0071 & 30.33\% & 0.9994 & 0.9995 & +0.0001 & 14.71\% \\
naticusdroid & 0.9724 & 0.9771 & +0.0047 & 17.12\% & 0.9937 & 0.9946 & +0.0010 & 15.09\% \\
obesity & 0.9811 & 0.9858 & +0.0047 & 24.68\% & 0.9966 & 0.8455 & -0.1511 & -97.78\% \\
phishing & 0.9729 & 0.9805 & +0.0077 & 28.26\% & 0.9952 & 0.9960 & +0.0008 & 17.19\% \\
wine & 0.6385 & 0.6778 & +0.0393 & 10.88\% & 0.8174 & 0.8342 & +0.0168 & 9.20\% \\
\midrule
\multicolumn{9}{c}{\textit{platt}} \\
\midrule
accelerometer & 0.9755 & 0.9842 & +0.0087 & 35.51\% & 0.9364 & 0.9256 & -0.0107 & -14.45\% \\
adult\_income & 0.8408 & 0.8830 & +0.0421 & 26.48\% & 0.9251 & 0.9403 & +0.0152 & 20.27\% \\
bank\_marketing & 0.9052 & 0.9338 & +0.0286 & 30.15\% & 0.9474 & 0.9565 & +0.0091 & 17.23\% \\
breast\_cancer & 0.9737 & 0.9747 & +0.0010 & 3.76\% & 0.9897 & 0.9808 & -0.0089 & -46.57\% \\
cifar10 & 0.5858 & 0.6525 & +0.0667 & 16.10\% & 0.9106 & 0.9290 & +0.0184 & 20.61\% \\
covertype & 0.9587 & 0.9660 & +0.0073 & 17.77\% & 0.9978 & 0.9981 & +0.0003 & 13.33\% \\
credit & 0.7948 & 0.8260 & +0.0311 & 15.18\% & 0.7625 & 0.7481 & -0.0145 & -5.74\% \\
diabetes & 0.6818 & 0.7303 & +0.0485 & 15.23\% & 0.7804 & 0.7943 & +0.0140 & 6.36\% \\
higgs\_boson & 0.8352 & 0.8599 & +0.0247 & 14.98\% & 0.9101 & 0.9217 & +0.0116 & 12.92\% \\
liver & 0.7350 & 0.7802 & +0.0452 & 17.05\% & 0.7597 & 0.7475 & -0.0123 & -4.86\% \\
mnist & 0.9765 & 0.9792 & +0.0027 & 11.68\% & 0.9996 & 0.9996 & +0.0000 & 1.73\% \\
naticusdroid & 0.9724 & 0.9748 & +0.0024 & 8.68\% & 0.9940 & 0.9943 & +0.0003 & 4.98\% \\
obesity & 0.9811 & 0.9839 & +0.0028 & 14.79\% & 0.9987 & 0.9977 & -0.0010 & -43.25\% \\
phishing & 0.9729 & 0.9775 & +0.0047 & 17.16\% & 0.9960 & 0.9959 & -0.0002 & -4.15\% \\
wine & 0.6385 & 0.6778 & +0.0393 & 10.88\% & 0.8174 & 0.8342 & +0.0168 & 9.20\% \\
\bottomrule
\end{tabular}
\caption{Results for raw XGBoost scores and after Platt scaling post-hoc calibration across all datasets. AUC and cwAUC scores are macro-averaged. }
\label{tab:full_raw_isotonic_and_platt}
\end{table}

\begin{figure}
  \centering
  \includegraphics[width=0.8\linewidth]{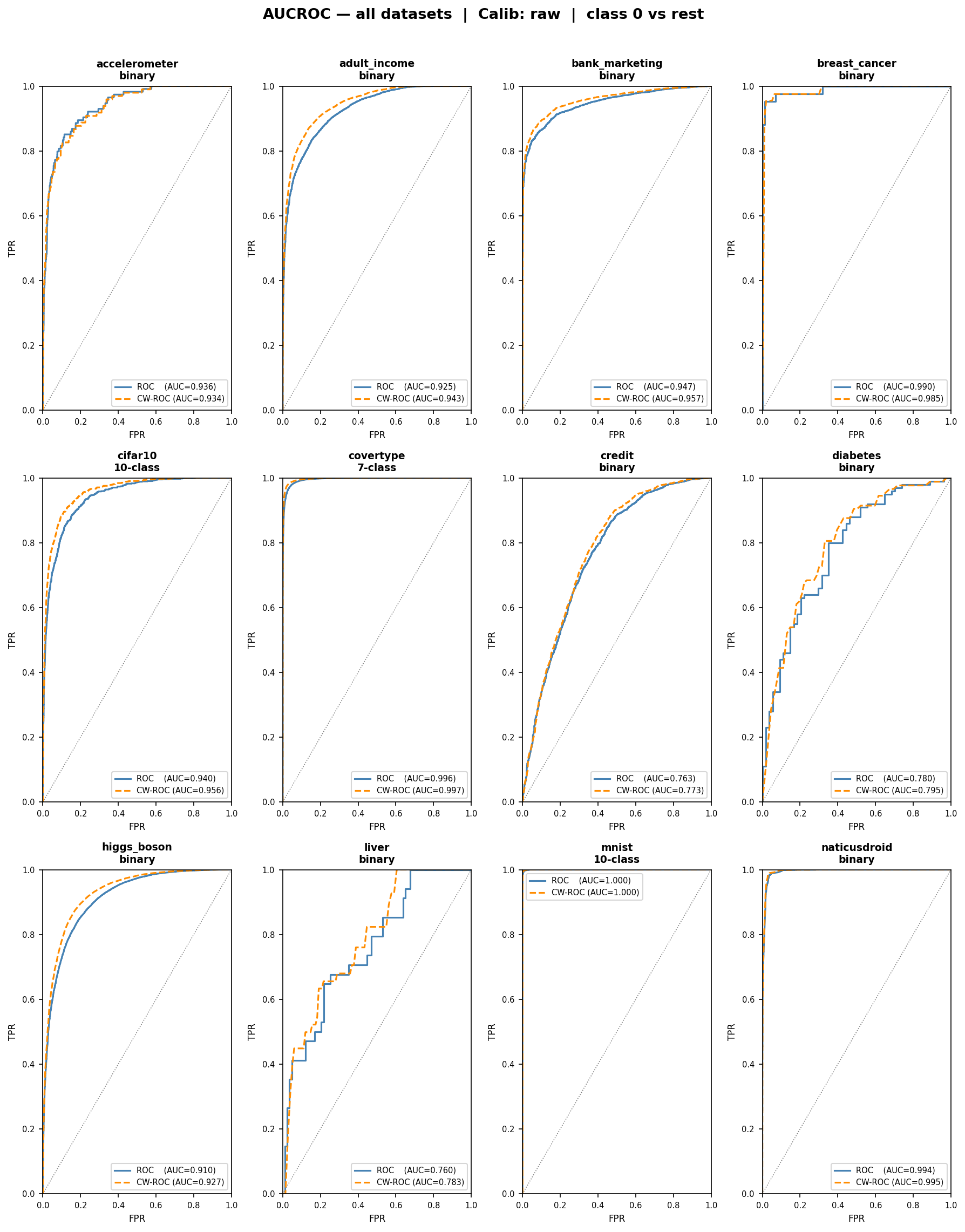}%
\caption{ROC and confidence-weighted ROC curves for class 0 of various datasets under Platt post-hoc calibration. }
\end{figure}


\end{document}